\newcommand{\aln}[1]{\begin{align}#1\end{align}}
\newcommand{\alns}[1]{\begin{align*}#1\end{align*}}
\newcommand{\matb}{\left( \begin{matrix*}[r] }
\newcommand{\mate}{\end{matrix*}\right)}
\newcommand{\kp}{^{k+1}}
\newcommand{\opt}{^\star}
\newcommand{\loss}{\mathcal{L}}
\newtheorem{theorem}{Theorem}
\newtheorem{lemma}{Lemma}
\theoremstyle{definition}
\def\bbE{{\mathbb E}}
\newcommand{\itk}{^{k}}
\newcommand{\inprod}[1]{\langle #1 \rangle}
\title{Stabilizing Adversarial Nets With Prediction Methods}
\newcommand*\samethanks[1][\value{footnote}]{\footnotemark[#1]}
\author{ Abhay Yadav\thanks{Equal contribution},$\,\,\,$Sohil Shah\samethanks,$\,\,\,$Zheng Xu,$\,\,$David Jacobs,$\,$ \& $\,$Tom Goldstein \\
University of Maryland, College Park \\
College Park, MD 20740, USA \\
\texttt{\{jaiabhay, xuzh, tomg\}@cs.umd.edu, sohilas@umd.edu,} \\ 
\texttt{djacobs@umiacs.umd.edu}
}
\begin{document}

\maketitle

\begin{abstract}
Adversarial neural networks solve many important problems in data science, but are notoriously difficult to train.  These difficulties come from the fact that optimal weights for adversarial nets correspond to saddle points, and not minimizers, of the loss function.  The alternating stochastic gradient methods typically used for such problems do not reliably converge to saddle points, and when convergence does happen it is often highly sensitive to learning rates.  We propose a simple modification of stochastic gradient descent that stabilizes adversarial networks.  We show, both in theory and practice, that the proposed method reliably converges to saddle points, and is stable with a wider range of training parameters than a non-prediction method.  This makes adversarial networks less likely to ``collapse,''  and enables faster training with larger learning rates.
\end{abstract}

\section{Introduction}
Adversarial networks play an important role in a variety of applications, including image generation~\citep{zhang2016stackgan,wang2016generative}, style transfer~\citep{brock2016neural,taigman2016unsupervised,wang2016generative,isola2016image}, domain adaptation~\citep{taigman2016unsupervised,tzeng2017adversarial,ganin2015unsupervised}, imitation learning~\citep{ho2016model}, privacy~\citep{edwards2015censoring,abadi2016learning}, fair representation~\citep{mathieu2016disentangling,edwards2015censoring}, etc.  One particularly motivating application of adversarial nets is their ability to form generative models, as opposed to the classical discriminative models~\citep{goodfellow2014generative,radford2015unsupervised,denton2015deep,mirza2014conditional}.


While adversarial networks have the power to attack a wide range of previously unsolved problems, they suffer from a major flaw:  they are difficult to train. This is because adversarial nets try to accomplish two objectives simultaneously;  weights are adjusted to maximize performance on one task while minimizing performance on another.  Mathematically, this corresponds to finding a {\em saddle point} of a loss function - a point that is minimal with respect to one set of weights, and maximal with respect to another.  

  Conventional neural networks are trained by marching down a loss function until a minimizer is reached (Figure \ref{cupFig}).  In contrast, adversarial training methods search for saddle points rather than a minimizer, which introduces the possibility that the training path ``slides off'' the objective functions and the loss goes to $-\infty$ (Figure \ref{saddleFig}), resulting in ``collapse'' of the adversarial network.   As a result, many authors suggest using early stopping, gradients/weight clipping~\citep{arjovsky2017wasserstein}, or specialized objective functions~\citep{goodfellow2014generative,zhao2016energy,arjovsky2017wasserstein} to maintain stability.

In this paper, we present a simple ``prediction'' step that is easily added to many training algorithms for adversarial nets. 
We present theoretical analysis showing that the proposed prediction method is asymptotically stable for a class of saddle point problems.  Finally, we use a wide range of experiments to show that prediction enables faster training of adversarial networks using large learning rates without the instability problems that plague conventional training schemes.
 
 \begin{figure}
 \vspace{-4mm} 
\centering
 \subfloat[\label{cupFig}]{%
       \includegraphics[width=0.4\linewidth]{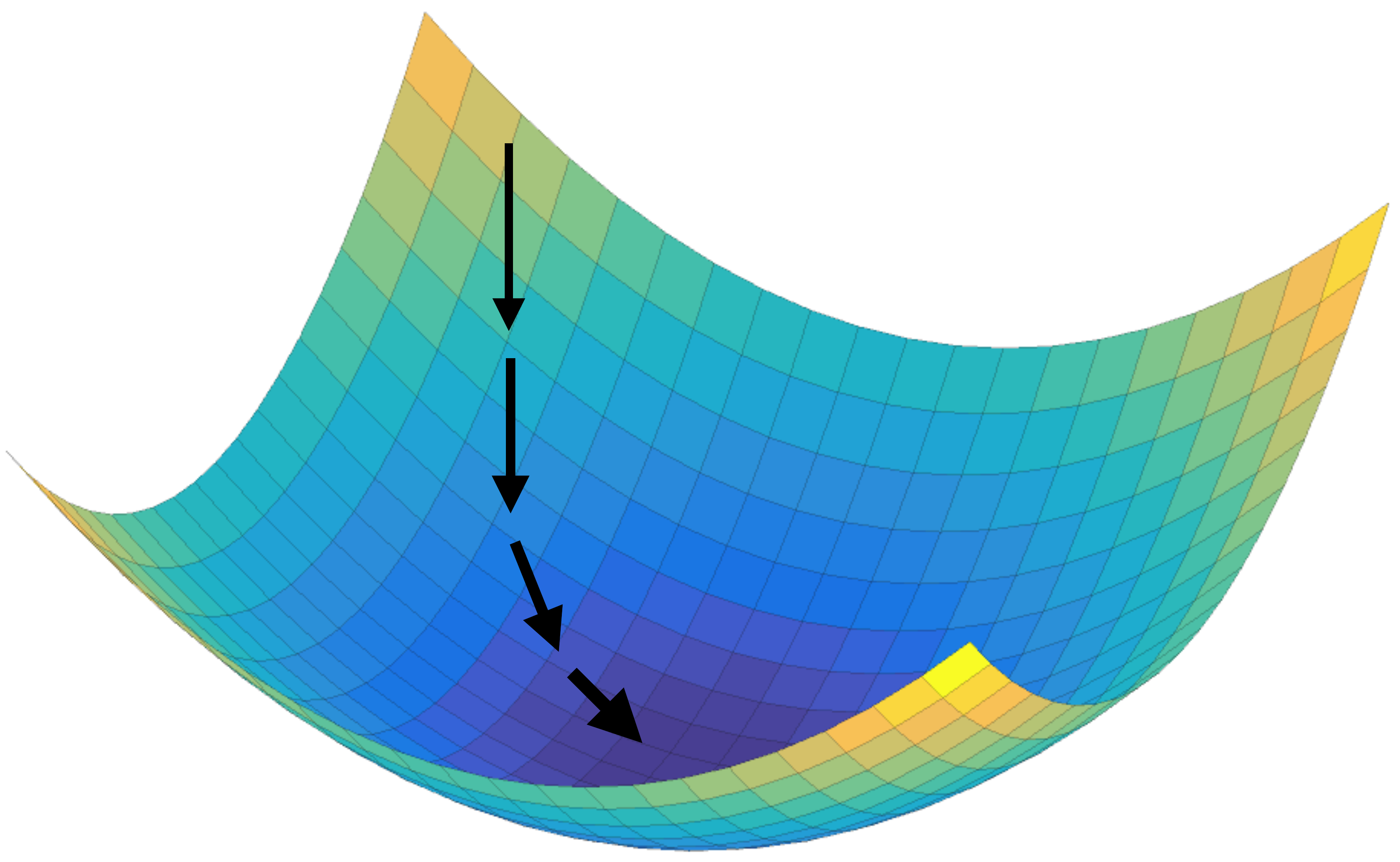}
     }
  \subfloat[\label{saddleFig}]{%
       \includegraphics[width=0.4\linewidth]{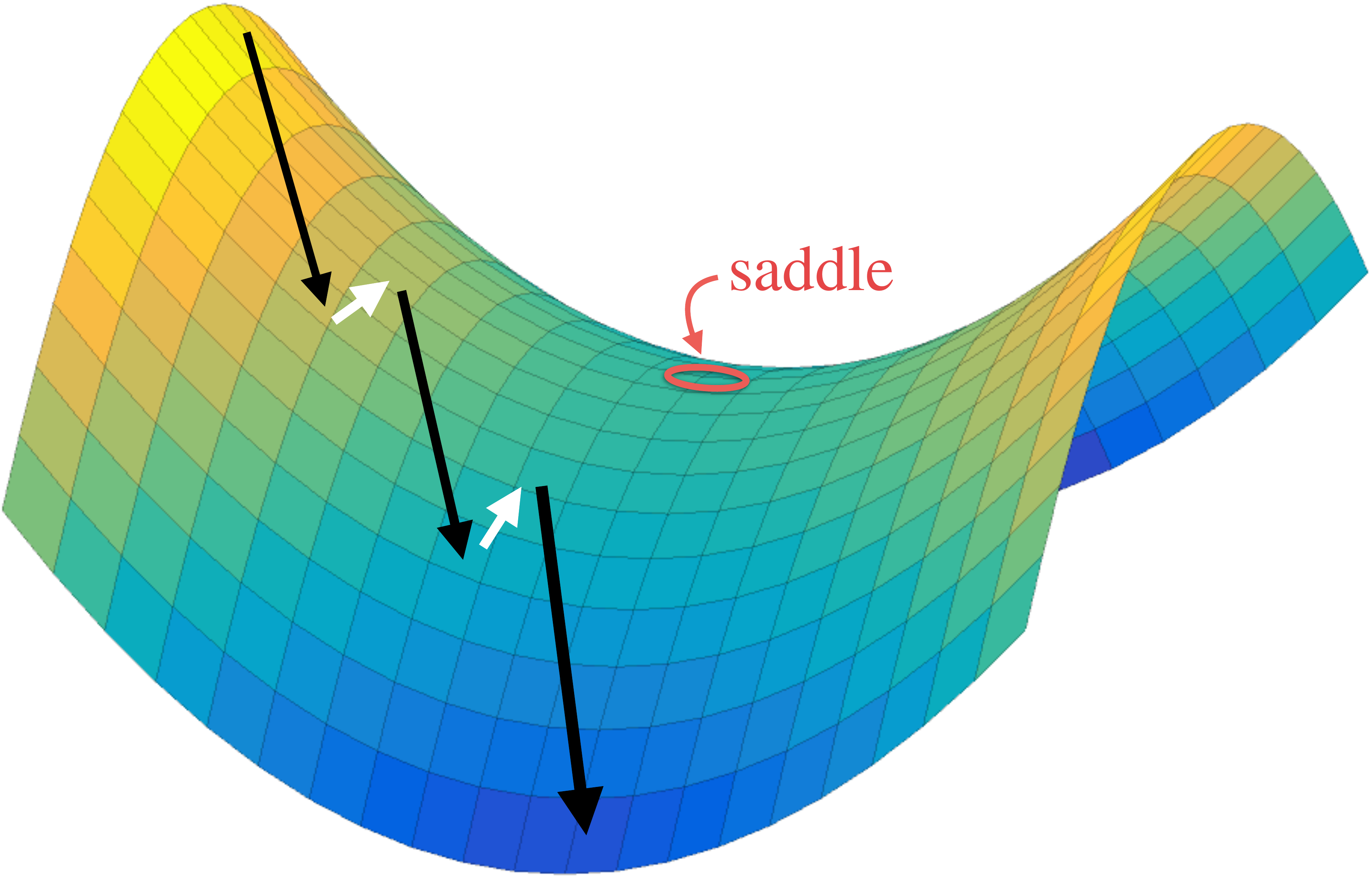}
     }    
      \vspace{-2mm}  
\caption{\small  A schematic depiction of gradient methods. (a) Classical networks are trained by marching down the loss function until a minimizer is reached.  Because classical loss functions are bounded from below, the solution path gets stopped when a minimizer is reached, and the gradient method remains stable. (b)  Adversarial net loss functions may be unbounded from below, and training alternates between minimization and maximization steps.  If minimization (or, conversely, maximization) is more powerful, the solution path ``slides off'' the loss surface and the algorithm becomes unstable, resulting in a sudden ``collapse'' of the network.}
\label{cupVsSaddleFig}
 \vspace{-3mm}  
 \end{figure}

\section{Proposed Method}
Saddle-point optimization problems have the general form
\aln{ \label{saddle}
 \min_{u} \max_v \loss(u,v)
 }
for some loss function $\loss$ and variables $u$ and $v.$  Most authors use the alternating stochastic gradient method to solve saddle-point problems involving neural networks.  This method alternates between updating $u$ with a stochastic gradient {\em descent} step, and then updating $v$ with a stochastic gradient {\em ascent} step.  
When simple/classical SGD updates are used, the steps of this method can be written
\aln{ \begin{split} \label{plain}
u\kp = u^k - \alpha_k  \loss'_u(u^k,v^k) &\quad | \quad \text{gradient descent in $u,$ starting at $(u^k,v^k)$ }\\
v\kp = v^k + \beta_k  \loss'_v(u\kp,v^k)& \quad | \quad\text{gradient ascent in $v,$ starting at $(u\kp,v^k)$ }.
\end{split}} 
Here, $\{\alpha_k\}$ and $\{\beta_k\}$ are learning rate schedules for the minimization and maximization steps, respectively.  The vectors $\loss'_u(u,v)$ and $\loss'_v(u,v)$ denote (possibly stochastic) gradients of $\loss$ with respect to $u$ and $v$.  In practice, the gradient updates are often performed by an automated solver, such as the Adam optimizer~\citep{kingma2014adam}, and include momentum updates. 
 
 We propose to stabilize the training of adversarial networks by adding a {\em prediction} step.  Rather than calculating $v\kp$ using $u\kp,$ we first make a prediction, $\bar u\kp,$ about where the $u$ iterates will be in the future,  and use this predicted value to obtain $v\kp.$  
 %
\begin{framed}
\begin{center}
\vspace{-4mm}\textbf{Prediction Method} 
\end{center}
\vspace{-3mm}\aln{ \begin{split} \label{alg}
u\kp = u^k - \alpha_k  \loss'_u(u^k,v^k) &\quad | \quad \text{gradient descent in $u,$ starting at $(u^k,v^k)$ }\\
\bar u\kp = u\kp+(u\kp-u^k) &\quad | \quad \text{{\em predict} future value of $u$ }\\
v\kp = v^k + \beta_k \loss'_v(\bar u\kp,v^k)& \quad | \quad\text{gradient ascent in $v,$ starting at $(\bar u\kp,v^k)$ }.
\end{split}}
\end{framed}
The Prediction step \eqref{alg} tries to estimate where $u$ is going to be in the future by assuming its trajectory remains the same as in the current iteration.   

\section{Background}
\subsection{Adversarial Networks as a Saddle-Point Problem}

We now discuss a few common adversarial network problems and their saddle-point formulations.   
{\em Generative Adversarial Networks} (GANs) fit a generative model to a dataset using a game in which a generative model competes against a discriminator \citep{goodfellow2014generative}.  The generator, $\mathbf{G}(\mathbf{z};\mathbf{\theta}_g),$  takes random noise vectors $\mathbf{z}$ as inputs, and maps them onto points in the target data distribution.  The discriminator, $\mathbf{D}(\mathbf{x};\mathbf{\theta}_d),$ accepts a candidate point $\mathbf{x}$ and tries to determine whether it is really drawn from the empirical distribution (in which case it outputs 1), or fabricated by the generator (output 0).  During a training iteration, noise vectors from a Gaussian distribution $\mathcal{G}$ are pushed through the generator network $\mathbf{G}$ to form a batch of generated data samples denoted by $\mathcal{D}_{fake}.$ A batch of empirical samples, $\mathcal{D}_{real},$ is also prepared.   One then tries to adjust the weights of each network to solve a saddle point problem, which is popularly formulated as,
\begin{align}
\min_{\mathbf{\theta}_g}\max_{\mathbf{\theta}_d} \quad \mathbb{E}_{x \sim \mathcal{D}_{real}} \, f(\mathbf{D}(\mathbf{x};\mathbf{\theta}_d)) + \mathbb{E}_{z \sim \mathcal{G}} \, f(1 - \mathbf{D}(\mathbf{G}(\mathbf{z};\mathbf{\theta}_g);\mathbf{\theta}_d)). \label{eq:GAN}
\end{align}
Here $f(.)$ is any monotonically increasing function. Initially, \citep{goodfellow2014generative} proposed using $f(x) = \log (x)$.


{\em Domain Adversarial Networks} (DANs)~\citep{makhzani2015adversarial,ganin2015unsupervised,edwards2015censoring} take data collected from a ``source'' domain, and extract a feature representation that can be used to train models that generalize to another ``target'' domain. For example, in the domain adversarial neural network (DANN~\citep{ganin2015unsupervised}), a set of feature layers maps data points into an embedded feature space, and a classifier is trained on these embedded features. Meanwhile, the adversarial discriminator tries to determine, using only the embedded features, whether the data points belong to the source or target domain.  A good embedding yields a better task-specific objective on the target domain while fooling the discriminator, and is found by solving %
\begin{align}
\min_{ \mathbf{\theta}_f,\mathbf{\theta}_{y^k}} \max_{\theta_d} 
\quad  \sum_k \alpha_k \mathcal{L}_{y^k}\left(\mathbf{x}_s;\mathbf{\theta}_f, \mathbf{\theta}_{y^k}\right) - \lambda  \mathcal{L}_d\left(\mathbf{x}_s,\mathbf{x}_t;\mathbf{\theta}_f, \mathbf{\theta}_d\right). \label{DA}
\end{align}
Here $\mathcal{L}_d$ is any adversarial discriminator loss function and $\mathcal{L}_{y^k}$ denotes the task specific loss. $\theta_f,$ $\theta_d,$ and  $\theta_{y^k}$ are network parameter of feature mapping, discriminator, and classification layers. 

\subsection{Stabilizing saddle point solvers}

It is well known that alternating stochastic gradient methods are unstable when using simple logarithmic losses. This led researchers to explore multiple directions for stabilizing GANs; either by adding regularization terms~\citep{arjovsky2017wasserstein,li2015generative,che2016mode,zhao2016energy}, a myriad of training ``hacks''~\citep{salimans2016improved,gulrajani2017improved}, re-engineering network architectures~\citep{zhao2016energy}, and designing different solvers~\citep{metz2016unrolled}. Specifically, the Wasserstein GAN (WGAN)~\citep{arjovsky2017wasserstein} approach modifies the original objective by replacing $f(x) = \log(x)$ with $f(x)=x.$ This led to a training scheme in which the discriminator weights are ``clipped.'' However, as discussed in~\citet{arjovsky2017wasserstein}, the WGAN training is unstable at high learning rates, or when used with popular momentum based solvers such as Adam. Currently, it is known to work well only with RMSProp \citep{arjovsky2017wasserstein}. 

The unrolled GAN~\citep{metz2016unrolled} is a new solver that can stabilize training at the cost of more expensive gradient computations. Each generator update requires the computation of multiple extra discriminator updates, which are then discarded when the generator update is complete. While avoiding GAN collapse, this method requires increased computation and memory.  

In the convex optimization literature, saddle point problems are more well studied.
%
 One popular solver is the primal-dual hybrid gradient (PDHG) method \citep{zhu2008efficient,esser2009general}, which has been popularized by Chambolle and Pock \citep{chambolle2011first}, and has been successfully applied to a range of machine learning and statistical estimation problems \citep{goldstein2015adaptive}. PDHG relates closely to the method proposed here - it achieves stability using the same prediction step, although it uses a different type of gradient update and is only applicable to bi-linear problems. 

Stochastic methods for convex saddle-point problems can be roughly divided into two categories:  stochastic coordinate descent \citep{dang2014randomized,lan2015optimal,zhang2015stochastic,zhu2015adaptive,zhu2016stochastic, wang2017exploiting,shibagaki2017stochastic} 
and stochastic gradient descent \citep{chen2014optimal,qiao2016stochastic}.
 Similar optimization algorithms have been studied for reinforcement learning \citep{wang2016online,du2017stochastic}.
Recently, a ``doubly'' stochastic method that randomizes both primal and dual updates was proposed for strongly convex bilinear saddle point problems \citep{yu2015doubly}.
For general saddle point problems, ``doubly'' stochastic gradient descent methods are discussed in \citet{nemirovski2009robust},\citet{palaniappan2016stochastic}, in which primal and dual variables are updated simultaneously based on the previous iterates and the current gradients.

\section{Interpretations of the prediction step}
We present three ways to explain the effect of prediction:  an intuitive, non-mathematical perspective,  a more analytical viewpoint involving dynamical systems, and finally a rigorous proof-based approach. 

\subsection{An intuitive viewpoint}
The standard alternating SGD switches between minimization and maximization steps.  In this algorithm, there is a risk that the minimization step can overpower the maximization step, in which case the iterates will ``slide off'' the edge of saddle, leading to instability (Figure \ref{saddleFig}).  Conversely, an overpowering maximization step will dominate the minimization step, and drive the iterates to extreme values as well.

The effect of prediction is visualized in Figure \ref{intuitiveLookaheadFig}.   Suppose that a maximization step takes place starting at the red dot.  
 Without prediction, the maximization step has no knowledge of the algorithm history, and will be the same regardless of whether the previous minimization update was weak (Figure \ref{short}) or strong (Figure \ref{long}).
  Prediction allows the maximization step to exploit information about the minimization step.  If the previous minimizations step was weak (Figure \ref{short}), the prediction step (dotted black arrow) stays close to the red dot, resulting in a weak predictive maximization step (white arrow).  But if we arrived at the red dot using a strong minimization step (Figure \ref{long}), the prediction moves a long way down the loss surface, resulting in a stronger maximization step (white arrows) to compensate.  

\begin{figure} 
 \vspace{-4mm}
\centering
 \subfloat[\label{short}]{%
       \includegraphics[width=0.4\linewidth]{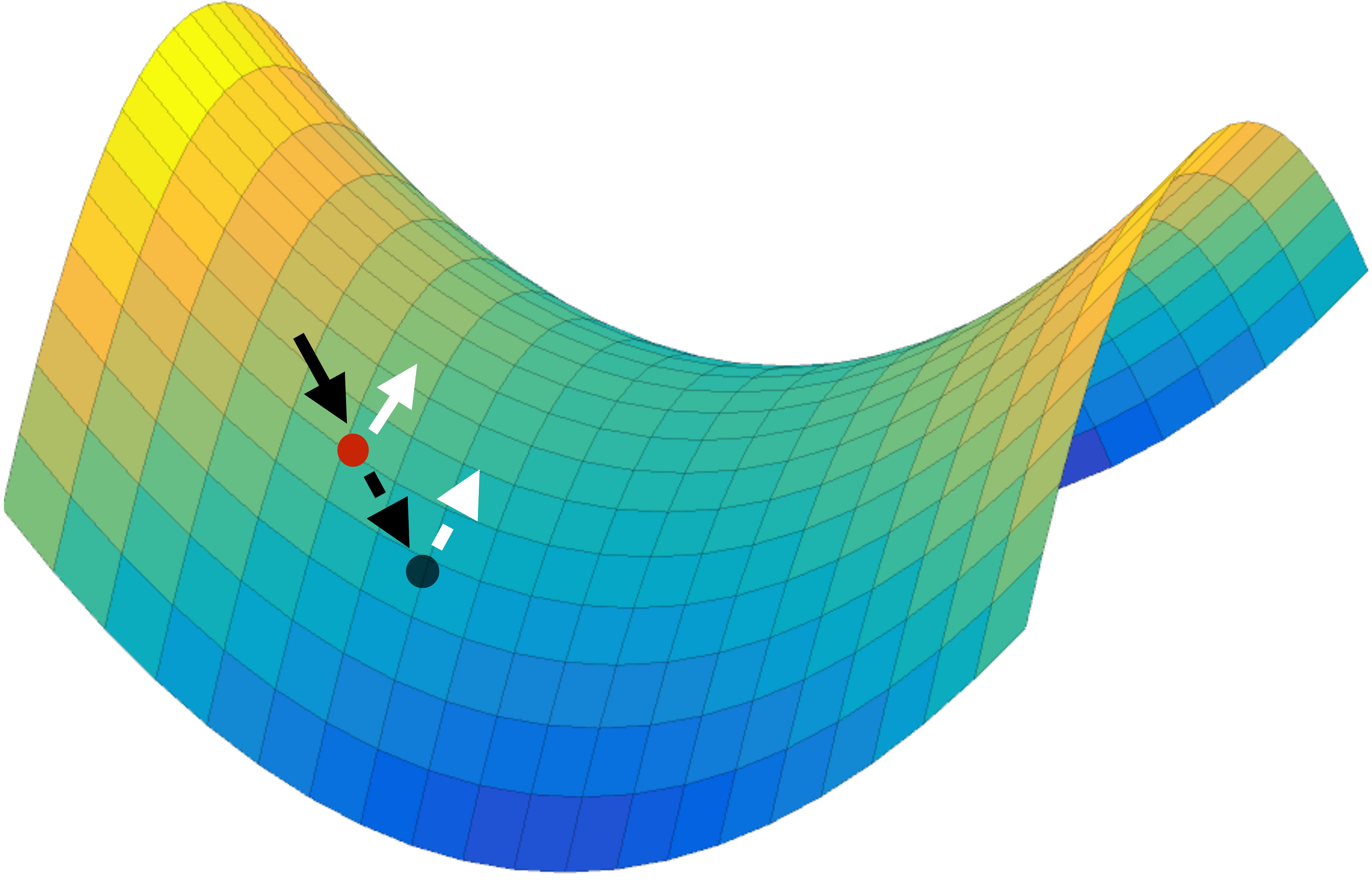}
     }
  \subfloat[\label{long}]{%
       \includegraphics[width=0.4\linewidth]{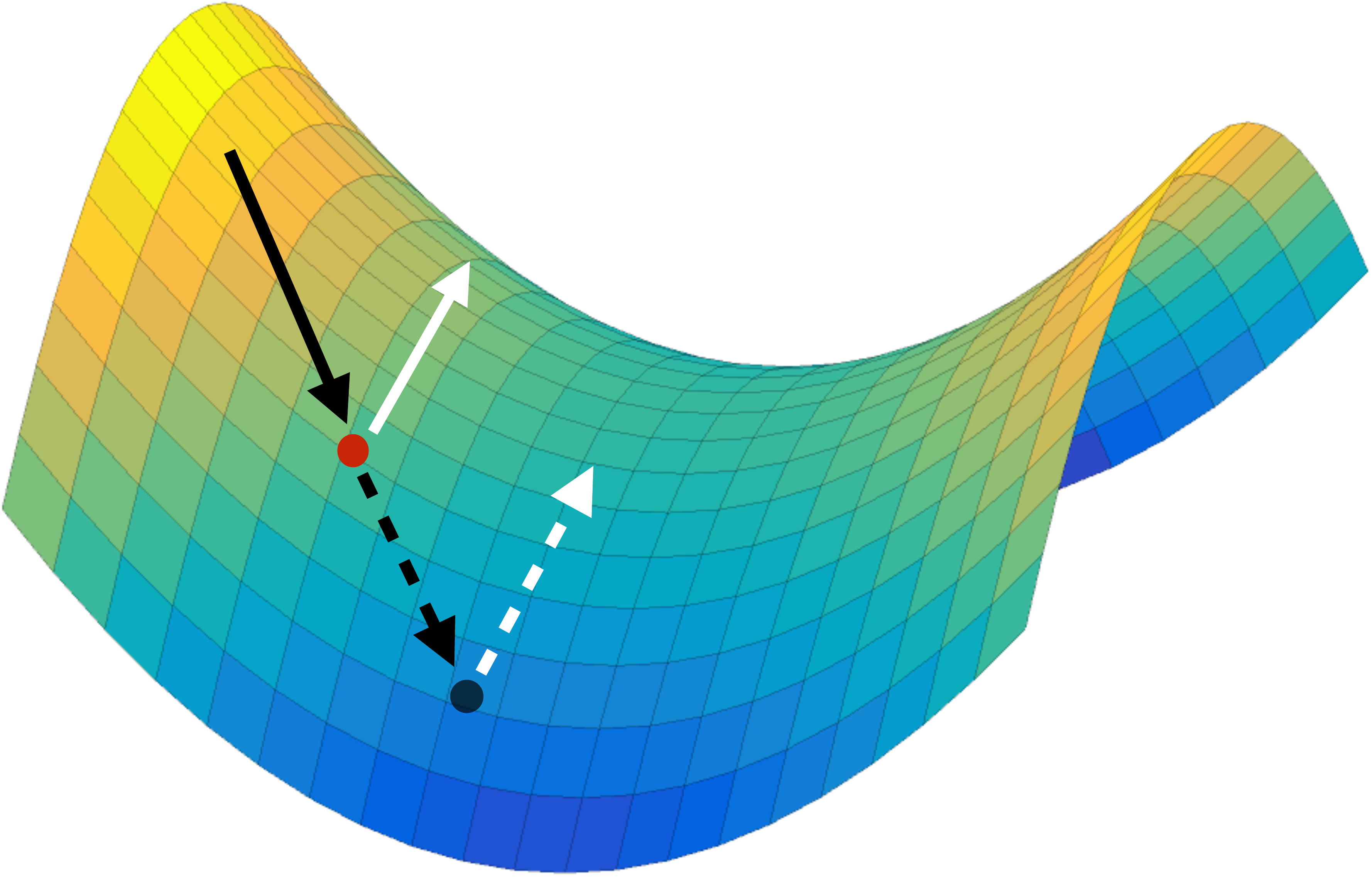}
     }      
      \vspace{-2mm}
\caption{\small A schematic depiction of the prediction method.  When the minimization step is powerful and moves the iterates a long distance, the prediction step (dotted black arrow) causes the maximization update to be calculated further down the loss surface, resulting in a more dramatic maximization update.  In this way, prediction methods prevent the maximization step from getting overpowered by the minimization update. }
\label{intuitiveLookaheadFig}
      \vspace{-2mm}
 \end{figure} 
 
 \subsection{A more mathematical perspective}
To get stronger intuition about prediction methods, let's look at the behavior of Algorithm \eqref{alg} on a simple bi-linear saddle of the form
  \aln{ \label{linear}
 \loss(u,v) =  v^TKu 
  }
where $K$ is a matrix.  When exact (non-stochastic) gradient updates are used, the iterates follow the path of a simple dynamical system with closed-form solutions.  We give here a sketch of this argument: a detailed derivation is provided in the Supplementary Material.


When the (non-predictive) gradient method \eqref{plain} is applied to the linear problem \eqref{linear}, the resulting iterations can be written
\alns{
\frac{u\kp - u^k}{\alpha} &=  -  K^Tv^k, \qquad
\frac{v\kp-v^k}{\alpha} = (\beta/\alpha) Ku\kp .
}
When the stepsize $\alpha$ gets small, this behaves like a discretization of the system of differential equations
\alns{
\dot u &= - K^Tv,    \qquad \dot v = \beta/\alpha Ku 
}
where $\dot u$ and $\dot v$ denote the derivatives of $u$ and $v$ with respect to time.  
These equations describe a simple harmonic oscillator, and the closed form solution for $u$ is 
$$u(t) = C\cos(\Sigma^{1/2}t+\phi)$$
where $\Sigma$ is a diagonal matrix, and the matrix $C$ and vector $\phi$ depend on the initialization.  
We can see that, for small values of $\alpha$ and $\beta,$ the non-predictive algorithm \eqref{plain} approximates an undamped harmonic motion, and the solutions orbit around the saddle without converging.

The prediction step \eqref{alg} improves convergence because it produces {\em damped} harmonic motion that sinks into the saddle point.  When applied to the linearized problem \eqref{linear}, we get the dynamical system 
\aln{
\dot u &= - K^Tv,   \qquad \dot v = \beta/\alpha K(u+\alpha \dot u)
}
which has solution
$$u(t) = UA\exp(- \frac{t\alpha}{2} \sqrt{\Sigma})\sin(t\sqrt{(1-\alpha^2/4)\Sigma} + \phi).$$ 
From this analysis, we see that the damping caused by the prediction step causes the orbits to converge into the saddle point, and the error decays exponentially fast. 

\subsection{A rigorous perspective}
While the arguments above are intuitive, they are also informal and do not address issues like stochastic gradients, non-constant stepsize sequences, and more complex loss functions. We now provide a rigorous convergence analysis that handles these issues.  

We assume that the function $\loss(u,v)$ is convex in $u$ and concave in $v$. We can then measure convergence using the ``primal-dual'' gap, $P(u,v) = \loss(u,v\opt) -\loss(u\opt,v)$ where $(u\opt,v\opt)$ is a saddle.  Note that $P(u,v)>0$ for non-optimal $(u,v),$ and $P(u,v)=0$ if $(u,v)$ is a saddle.  Using these definitions, we formulate the following convergence result.  The proof is in the supplementary material.


\begin{theorem} \label{theorem}
Suppose the function $\loss (u, v)$ is convex in $u,$ concave in $v,$ and that the partial gradient $\loss'_v$ is uniformly Lipschitz smooth in $u$ ($\| \loss'_v (u_1,v)- \loss'_v(u_2, v)\| \le L_v\|u_1-u_2\|$). Suppose further that the stochastic gradient approximations satisfy $\mathbb{E} \|\loss'_u(u, v)\|^2\le G_u^2,$ $\mathbb{E} \| \loss'_v(u, v)\|^2\le G_v^2$ for scalars $G_u$ and $G_v,$ and that $\mathbb{E} \|u^k-u\opt\|^2\le D_u^2,$ and $\mathbb{E} \|v^k-v\opt\|^2\le D_v^2$   for scalars $D_u$ and $D_v.$

 If we choose decreasing learning rate parameters of the form $\alpha_k=\frac{C_\alpha}{\sqrt{k}}$ and  $\beta_k = \frac{C_\beta}{\sqrt{k}},$ then the SGD method with prediction converges in expectation, and we have the error bound
\alns{
\mathbb{E}[P(\hat u^l, \hat v^l)]  \leq \frac{1}{2\sqrt{l}} \left(\frac{D_u^2}{C_\alpha} + \frac{D_v^2}{C_\beta}\right) + \frac{\sqrt{l+1}}{l} \left(\frac{C_\alpha G_u^2}{2} + C_\alpha L_v G_u^2 + C_\alpha L_v D_v^2 + \frac{C_\beta G_v^2}{2}\right)
}
where $\hat u^l = \frac{1}{l} \sum_{k=1}^l u^k, \, \hat v^l = \frac{1}{l} \sum_{k=1}^l v^k.$
\end{theorem}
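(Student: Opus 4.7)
My plan is to adapt the classical Nemirovski-style analysis of the primal-dual gap for stochastic convex-concave saddle-point problems, modified to account for the prediction step. The argument proceeds in four stages.

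First, by Jensen's inequality applied to $\loss(\cdot, v\opt)$ (convex) and $-\loss(u\opt, \cdot)$ (convex), $\mathbb{E} P(\hat u^l, \hat v^l) \leq \frac{1}{l}\sum_{k=1}^l \mathbb{E}[\loss(u^k, v\opt) - \loss(u\opt, v^k)]$. Adding convexity in $u$ at $v^k$ and concavity in $v$ at $u^k$ gives the bound $\loss(u^k, v\opt) - \loss(u\opt, v^k) \leq \langle \loss'_u(u^k, v^k), u^k - u\opt \rangle + \langle \loss'_v(u^k, v^k), v\opt - v^k \rangle$ (interpreting $\loss'_u, \loss'_v$ as the true gradients; unbiasedness of the stochastic estimates makes this identity usable after taking $\mathbb{E}$). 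This reduces the problem to bounding these two summed inner products.

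Second, and this is where the prediction step enters, I split $\loss'_v(u^k, v^k) = \loss'_v(\bar u\kp, v^k) + [\loss'_v(u^k, v^k) - \loss'_v(\bar u\kp, v^k)]$ so that the first piece matches the direction actually used in the $v$-update. Because $\bar u\kp - u^k = 2(u\kp - u^k) = -2\alpha_k\, \loss'_u(u^k, v^k)$, the Lipschitz assumption on $\loss'_v$ in $u$ combined with Cauchy--Schwarz and AM--GM bounds the correction by $\alpha_k L_v (\|\loss'_u(u^k,v^k)\|^2 + \|v\opt - v^k\|^2)$; in expectation this is at most $\alpha_k L_v(G_u^2 + D_v^2)$. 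Summed over $k$ and divided by $l$, this produces exactly the $C_\alpha L_v G_u^2$ and $C_\alpha L_v D_v^2$ terms in the stated bound.

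Third, I apply the standard three-point identity to each update. Expanding $\|u\kp - u\opt\|^2$ from $u\kp = u^k - \alpha_k \loss'_u(u^k, v^k)$ yields $\langle \loss'_u(u^k,v^k), u^k - u\opt \rangle = \frac{1}{2\alpha_k}(\|u^k - u\opt\|^2 - \|u\kp - u\opt\|^2) + \frac{\alpha_k}{2}\|\loss'_u(u^k,v^k)\|^2$, and analogously for the $v$-step with $\loss'_v(\bar u\kp, v^k)$ and rate $\beta_k$. This converts the gradient inner products from Stage 1 into telescopable quantities plus stochastic gradient-norm-squared terms controlled by $G_u^2, G_v^2$.

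Finally, I sum over $k = 1, \ldots, l$ and exploit the monotonicity $\alpha_k \downarrow$ with the uniform bound $\mathbb{E}\|u^k - u\opt\|^2 \leq D_u^2$. Abel summation on $\sum_k \frac{1}{\alpha_k}(\mathbb{E}\|u^k - u\opt\|^2 - \mathbb{E}\|u\kp - u\opt\|^2)$ bounds it by $D_u^2/\alpha_l = D_u^2 \sqrt{l}/C_\alpha$, and similarly for $v$ with $D_v^2\sqrt{l}/C_\beta$; the residual gradient-norm sums $\sum_k \alpha_k G_u^2$ are handled via $\sum_{k=1}^l 1/\sqrt{k} \leq 2\sqrt{l+1}$, giving the $\sqrt{l+1}/l$ coefficient after averaging. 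Collecting terms yields the stated $O(1/\sqrt{l})$ bound. The main technical obstacle I anticipate is Stage 2: the AM--GM split must allocate the Lipschitz correction precisely between a $\|\loss'_u\|^2$ piece (controlled by $G_u^2$) and a $\|v\opt - v^k\|^2$ piece (controlled by $D_v^2$), and one must confirm that the factor of $2$ in $\bar u\kp - u^k = 2(u\kp - u^k)$ does not disrupt the summability; the other stages are essentially bookkeeping once the prediction step has been absorbed.
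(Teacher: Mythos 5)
Your proposal is correct and follows essentially the same route as the paper's proof: the paper's Lemmas 1 and 2 are exactly your three-point identities for the $u$- and $v$-updates combined with convexity/concavity, and its handling of the prediction step is your Stage 2 split of $\loss'_v(\bar u\kp, v^k)$ against $\loss'_v(u^k,v^k)$ via Lipschitzness, Cauchy--Schwarz, and AM--GM (including the factor of $2$ from $\bar u\kp - u^k = 2(u\kp-u^k)$), followed by the same telescoping and Jensen steps. The only discrepancy is cosmetic: your (correct) bound $\sum_{k=1}^l k^{-1/2}\le 2\sqrt{l+1}$ yields a coefficient $2\sqrt{l+1}/l$ on the gradient-variance terms, whereas the paper asserts $\sum_{k=1}^l\alpha_k\le C_\alpha\sqrt{l+1}$ without the factor of $2$, so your constants are off from the stated bound only because the paper's own are.
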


%
%

\section{Experiments}
We present a wide range of experiments to demonstrate the benefits of the proposed prediction step for adversarial nets. We consider a saddle point problem on a toy dataset constructed using MNIST images, and then move on to consider state-of-the-art models for three tasks: GANs, domain adaptation, and learning of fair classifiers.  Additional results, and additional experiments involving mixtures of Gaussians, are presented in the Appendix.

\subsection{MNIST Toy problem}
We consider the task of classifying MNIST digits as being even or odd. To make the problem interesting, we corrupt 70\% of odd digits with salt-and-pepper noise, while we corrupt only 30\% of even digits.  When we train a LeNet network~\citep{lecun1998gradient} on this problem, we find that the network encodes and uses information about the noise; when a noise vs no-noise classifier is trained on the deep features generated by LeNet, it gets 100\% accuracy.   
The goal of this task is to force LeNet to ignore the noise when making decisions.  We create an adversarial model of the form \eqref{DA} in which
 $\mathcal{L}_y$ is a softmax loss for the even vs odd classifier.
 We make $\mathcal{L}_d$ a softmax loss for the task of discriminating whether the input sample is noisy or not. The classifier and discriminator were both pre-trained using the default LeNet implementation in Caffe \citep{jia2014caffe}. Then the combined adversarial net was jointly trained both with and without prediction.  For implementation details, see the Supplementary Material.

Figure \ref{fig:toy} summarizes our findings. In this experiment, we considered applying prediction to both the classifier and discriminator. We note that our task is to retain good classification accuracy while preventing the discriminator from doing better than the trivial strategy of classifying odd digits as noisy and even digits as non-noisy.  This means that the discriminator accuracy should ideally be $\sim 0.7$. 
As shown in Figure \ref{fig:toy_a}, the prediction step hardly makes any difference when evaluated at the small learning rate of $10^{-4}$. However, when evaluated at higher rates, Figures \ref{fig:toy_b} and \ref{fig:toy_c} show that the prediction solvers are very stable while one without prediction collapses (blue solid line is flat) very early.   Figure \ref{fig:toy_c} shows that the default learning rate ($10^{-3}$) of the Adam solver is unstable unless prediction is used. 

\begin{figure*}[!htbp]
\vspace{-5mm}
\centering
\subfloat[]{
\includegraphics[width=.32\linewidth]{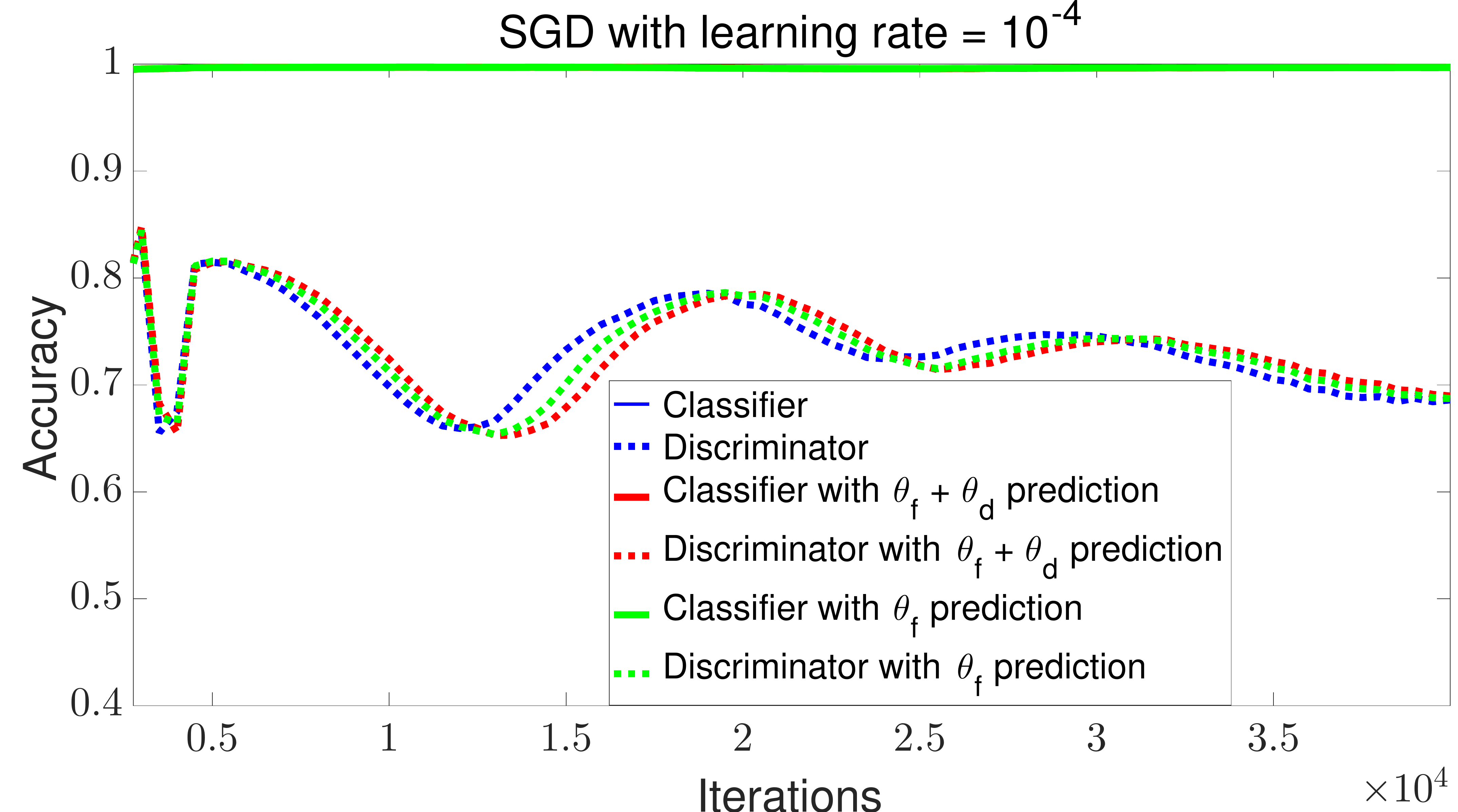}
\label{fig:toy_a}
}
\subfloat[]{
\includegraphics[width=.32\linewidth]{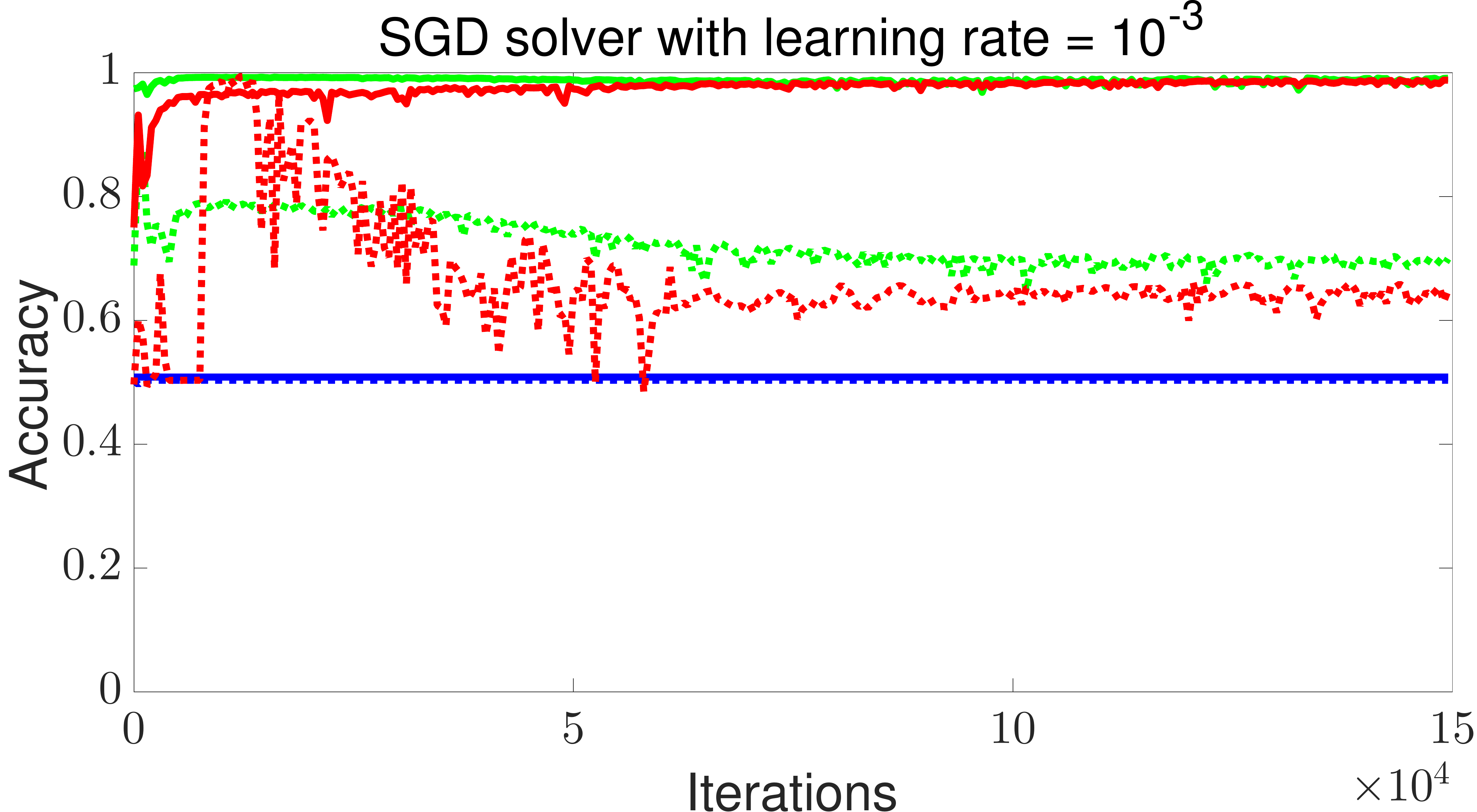}
\label{fig:toy_b}
}
\subfloat[]{
\includegraphics[width=.32\linewidth]{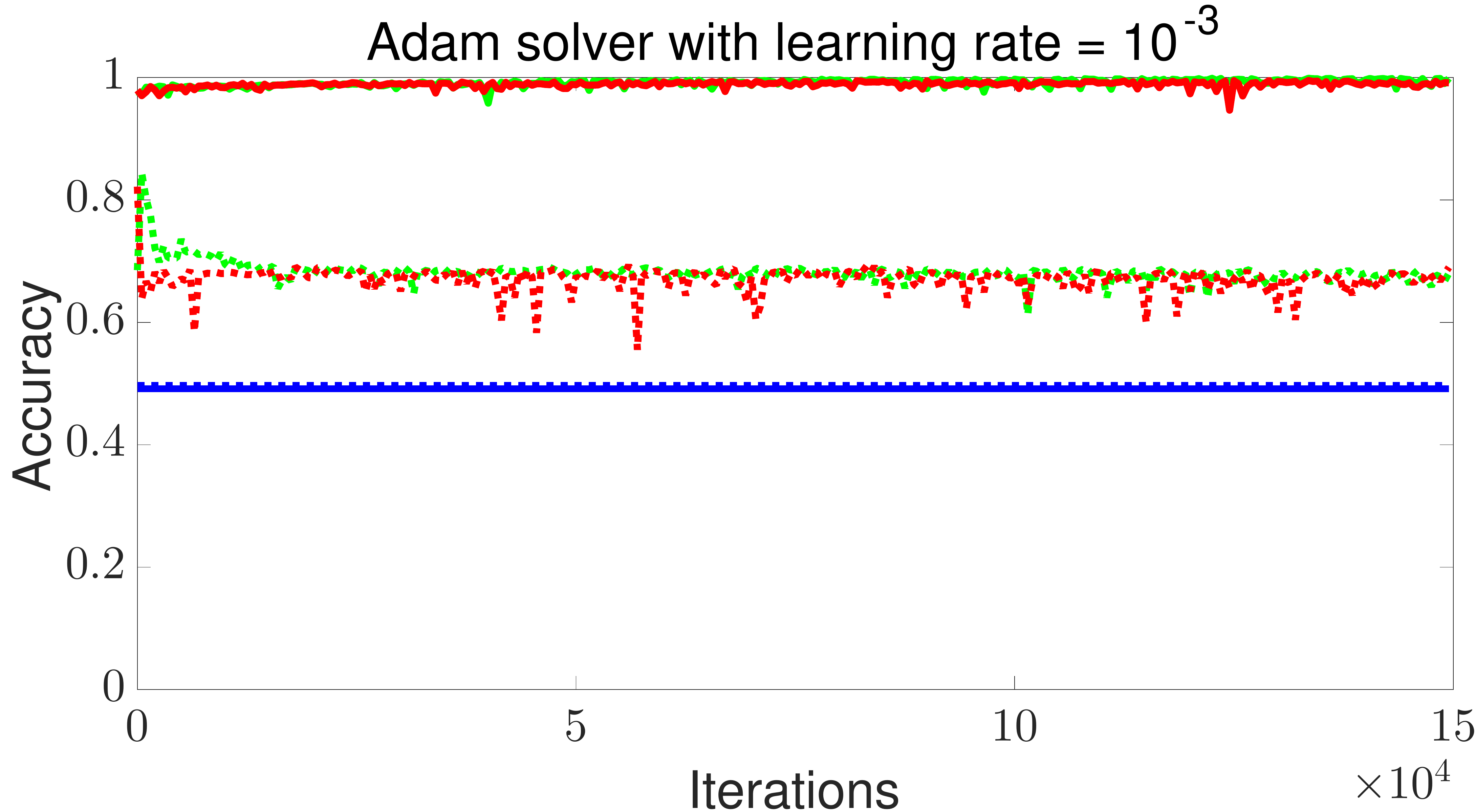}
\label{fig:toy_c}
}
\vspace{-3mm}
\caption{\small Comparison of the classification accuracy (digit parity) and discriminator (noisy vs. no-noise) accuracy using SGD and Adam solver with and without prediction steps. $\theta_f$ and $\theta_d$ refers to variables in eq. \eqref{DA}. (a) Using SGD with learning rate $lr=10^{-4}$. Note that the solid lines of red, blue and green are overlapped. (b) SGD solver with higher learning rate of $lr=10^{-3},$ and (c) using Adam solver with its default parameter.}
\label{fig:toy}
\vspace{-3mm}
\end{figure*}

\subsection{Generative Adversarial Networks}

Next, we test the efficacy and stability of our proposed predictive step on generative adversarial networks (GAN), which are formulated as saddle point problems \eqref{eq:GAN} and are popularly solved using a heuristic approach~\citep{goodfellow2014generative}.
We consider an image modeling task using CIFAR-10~\citep{Krizhevsky09learningmultiple} on the recently popular convolutional GAN architecture, DCGAN~\citep{radford2015unsupervised}. We compare our predictive method with that of DCGAN and the unrolled GAN~\citep{metz2016unrolled} using the training protocol described in \citet{radford2015unsupervised}. Note that we compared against the unrolled GAN with stop gradient switch\footnote{We found the unrolled GAN without stop gradient switch as well as for smaller values of $K$ collapsed when used on the DCGAN architecture. }
and $K=5$ unrolling steps.  All the approaches were trained for five random seeds and 100 epochs each. 


We start with comparing all three methods using the default solver for DCGAN (the Adam optimizer) with learning rate=$0.0002$ and $\beta_{1}$=$0.5$. Figure~\ref{fig:dcgan} compares the generated sample images (at the $100^{th}$ epoch) and the training loss curve for all approaches. The discriminator and generator loss curves in Figure \ref{fig:dcgan_loss_otb} show that without prediction, the DCGAN collapses at the $45^{th}$ and $57^{th}$ epochs. Similarly, Figure~\ref{fig:dcgan_loss_unroll} shows that the training for unrolled GAN collapses in at least three instances. The training procedure using predictive steps never collapsed during any epochs. Qualitatively, the images generated using prediction are more diverse than the DCGAN and unrolled GAN images.


Figure~\ref{fig:dcgan1} compares all approaches when trained with $5\times$ higher learning rate ($0.001$) (the default for the Adam solver). As observed in \citet{radford2015unsupervised}, the standard and unrolled solvers are very unstable and collapse at this higher rate.  However, as shown in Figure~\ref{fig:dcgan_loss_pdhg1}, \& \ref{fig:dcgan_pdhg1}, training remains stable when a predictive step is used, and generates images of reasonable quality. The training procedure for both DCGAN and unrolled GAN collapsed on all five random seeds.   The results on various additional intermediate learning rates are in the Supplementary Material. 

In the Supplementary Material, we present one additional comparison showing results on a higher momentum of $\beta_{1}$=$0.9$ (learning rate=$0.0002$).  We observe that all the training approaches are stable. However, the quality of images generated using DCGAN is inferior to that of the predictive and unrolled methods.

Overall, of the $25$ training settings we ran on (each of five learning rates for five random seeds), the DCGAN training procedure collapsed in $20$ such instances while unrolled GAN collapsed in $14$ experiments (not counting the multiple collapse in each training setting). On the contrary, we find that our simple predictive step method collapsed only once. 

Note that prediction adds trivial cost to the training algorithm. Using a single TitanX Pascal, a training epoch of DCGAN takes 35 secs. With prediction, an epoch takes 38 secs.  The unrolled GAN method, which requires extra gradient steps, takes 139 secs/epoch.

Finally, we draw quantitative comparisons based on the inception score~\citep{salimans2016improved}, which is a widely used metric for visual quality of the generated images. For this purpose, we consider the current state-of-the-art Stacked GAN~\citep{huang2017sgan} architecture. Table~\ref{tab:SGAN} lists the inception scores computed on the generated samples from Stacked GAN trained ($200$ epochs) with and without prediction at different learning rates. The joint training of Stacked GAN collapses when trained at the default learning rate of adam solver (i.e., $0.001$). However, reasonably good samples are generated if the same is trained with prediction on both the generator networks. The right end of Table~\ref{tab:SGAN} also list the inception score measured at fewer number of epochs at higher learning rates. It suggest that the model trained with prediction methods are not only stable but also allows faster convergence using higher learning rates. For reference the inception score on real images of CIFAR-10 dataset is $11.51\pm0.17$.

\begin{table*}[!htbp]
\vspace{-4mm}
\small
\centering
\caption{\small Comparison of Inception Score on Stacked GAN network with and w/o $\mathbf{G}$ prediction.}
\label{tab:SGAN}
\begin{tabular}{@{}l@{\hspace{3mm}}|c@{\hspace{3mm}}c@{\hspace{3mm}}c@{\hspace{3mm}}|c@{\hspace{3mm}}c@{\hspace{3mm}}c@{{}}}
\toprule
Learning rate & $0.0001$ & $0.0005$ & $0.001$ & $0.0005$ (40) & $0.001$ (20)\\
\midrule
Stacked GAN (joint) & $8.44 \pm 0.11$ & $7.90 \pm 0.08$ & $1.52 \pm 0.01$ & $5.80 \pm 0.15$ & $1.42 \pm 0.01$ \\
Stacked GAN (joint) + prediction & $\mathbf{8.55} \pm 0.12$ & $\mathbf{8.13} \pm 0.09$ & $\mathbf{7.96} \pm 0.11$ & $\mathbf{8.10} \pm 0.10$ & $\mathbf{7.79} \pm 0.07$ \\
\bottomrule
\end{tabular}
\vspace{-2mm}
\end{table*}

\begin{figure*}[!h]
\vspace{-5mm}
\centering
\subfloat[With $\mathbf{G}$  prediction]{
\includegraphics[width=.30\linewidth]{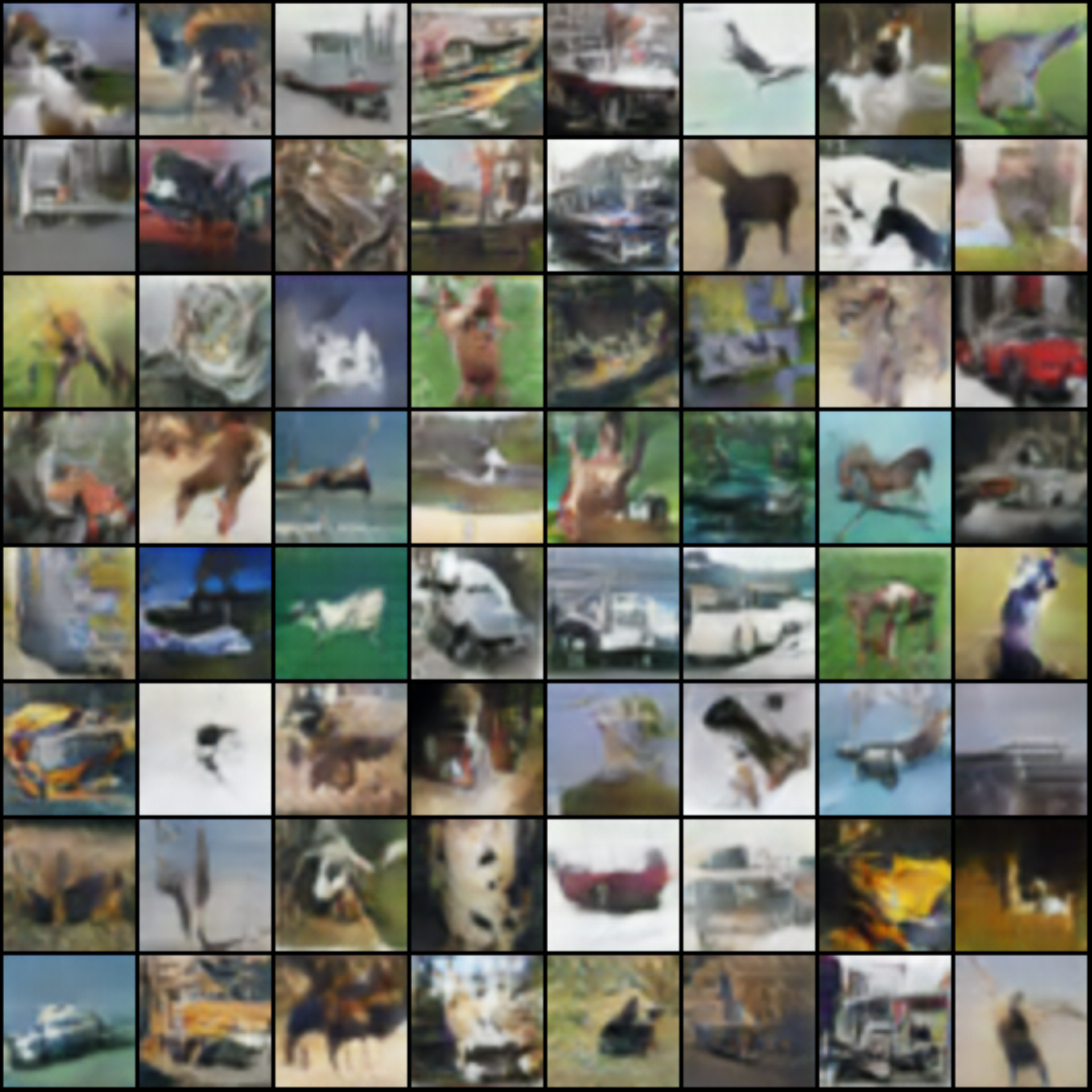}
\label{fig:dcgan_pdhg}
}
\subfloat[DCGAN]{
\includegraphics[width=.30\linewidth]{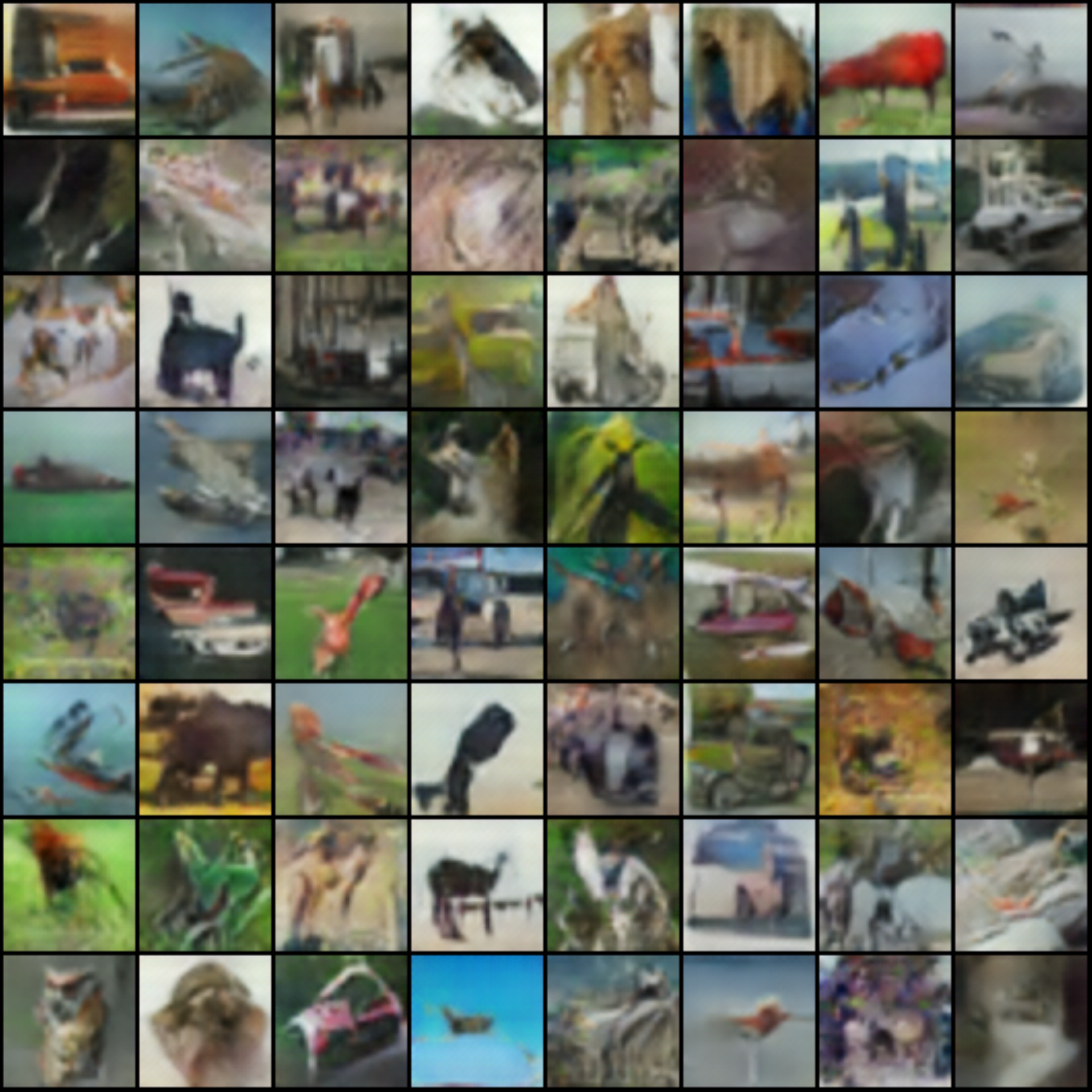}
\label{fig:dcgan_otb}
}
\subfloat[Unrolled GAN]{
\includegraphics[width=.30\linewidth]{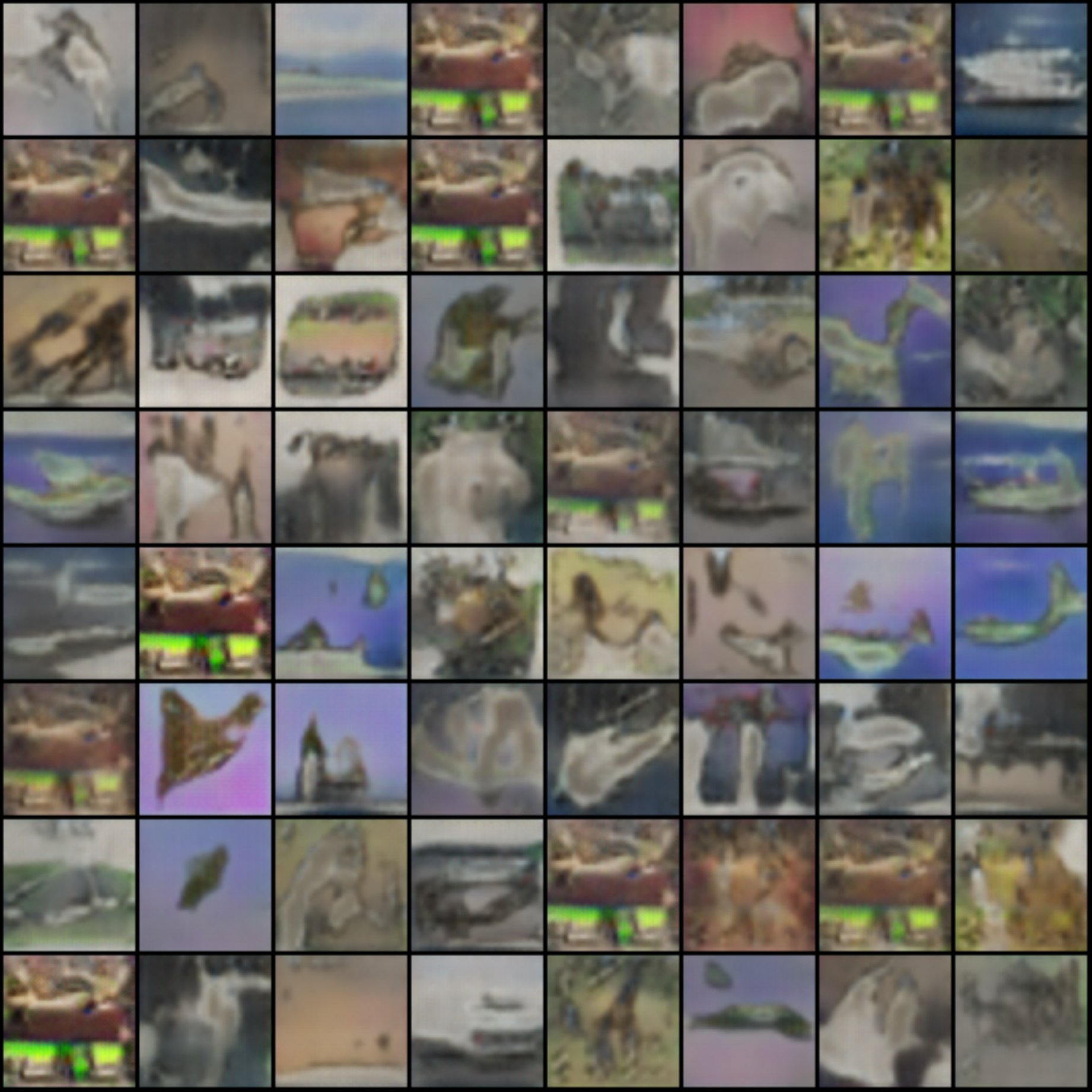}
\label{fig:dcgan_unroll}
}
\\
\vspace{-4mm}
\subfloat[With $\mathbf{G}$  prediction]{
\includegraphics[width=.30\linewidth]{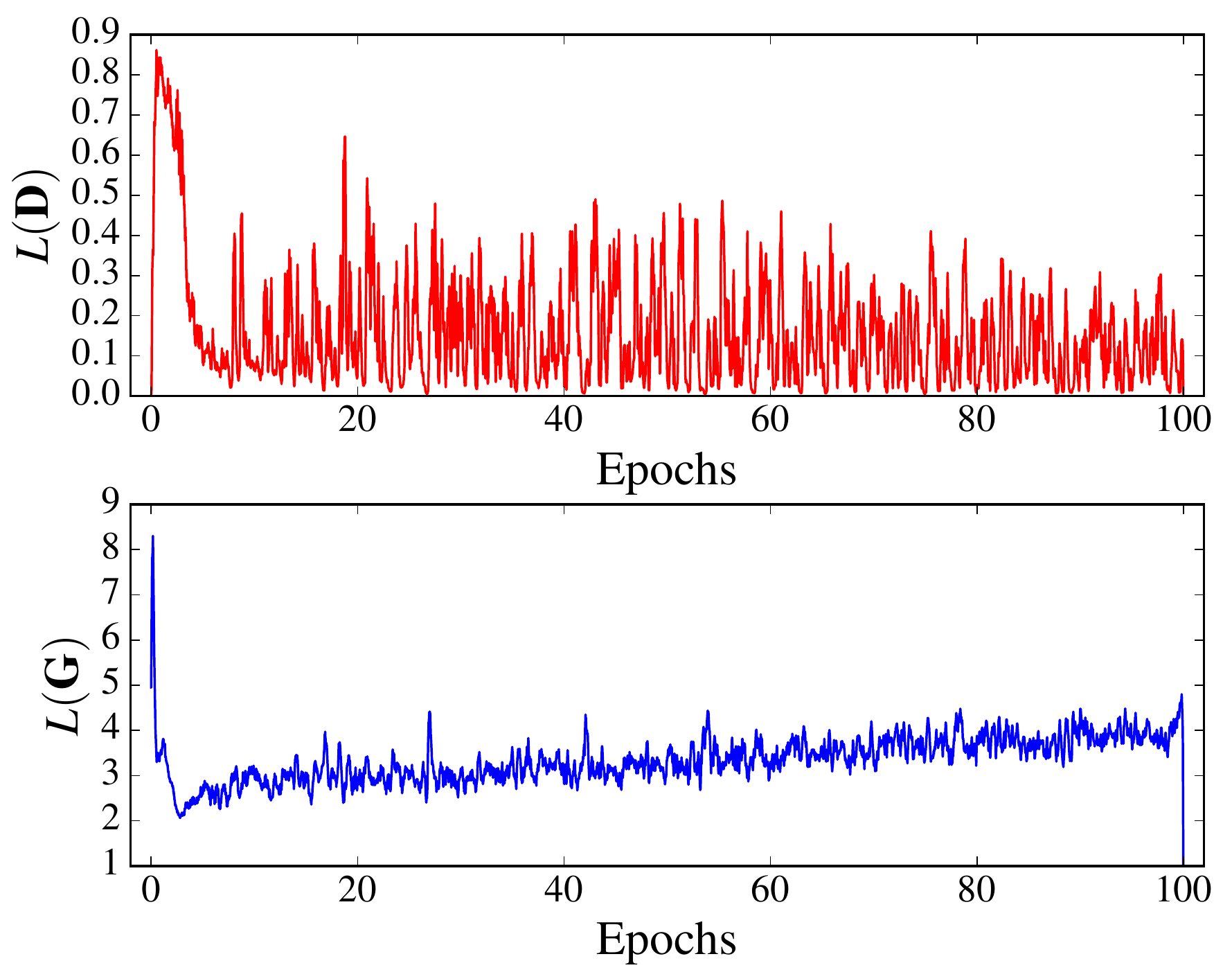}
\label{fig:dcgan_loss_pdhg}
}
\subfloat[DCGAN]{
\includegraphics[width=.30\linewidth]{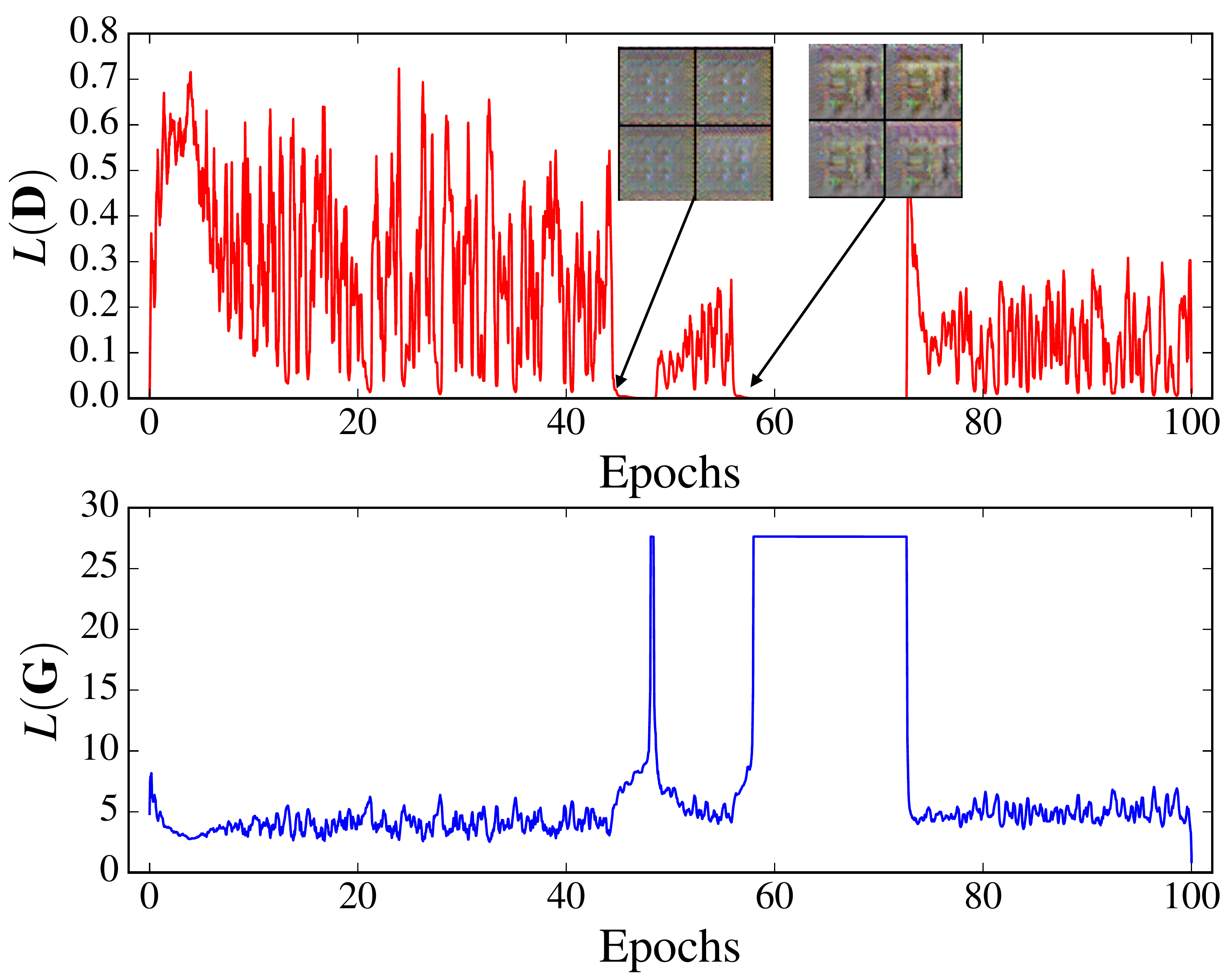}
\label{fig:dcgan_loss_otb}
}
\subfloat[Unrolled GAN]{
\includegraphics[width=.30\linewidth]{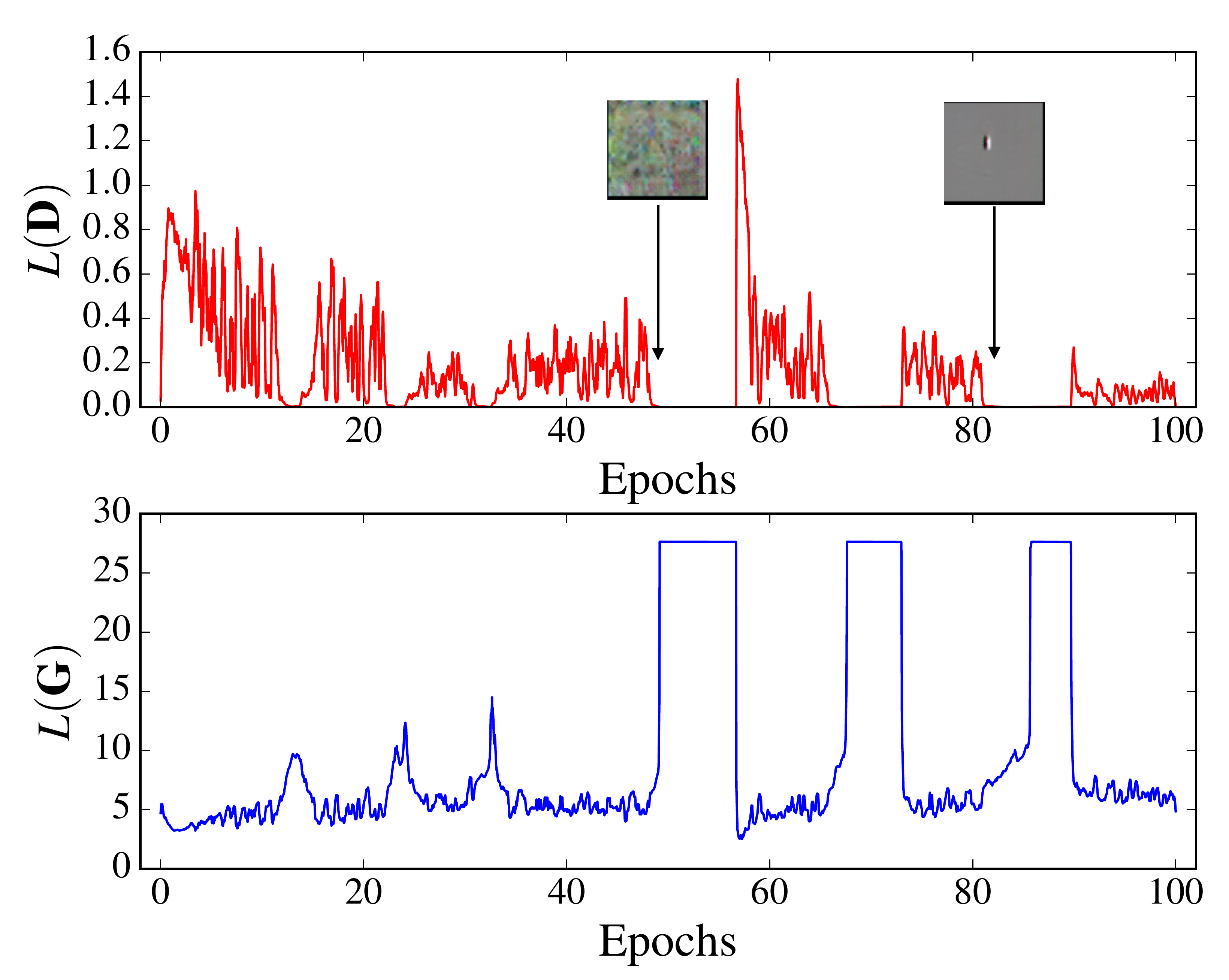}
\label{fig:dcgan_loss_unroll}
}
\vspace{-1mm}
\caption{\small Comparison of GAN training algorithms for DCGAN architecture on Cifar-10 image datasets. Using default parameters of DCGAN; $lr = 0.0002, \beta_{1}=0.5$. }
\label{fig:dcgan}
\end{figure*}

\begin{figure*}[!h]
\vspace{-5mm}
\centering
\subfloat[With $\mathbf{G}$  prediction]{
\includegraphics[width=.30\linewidth]{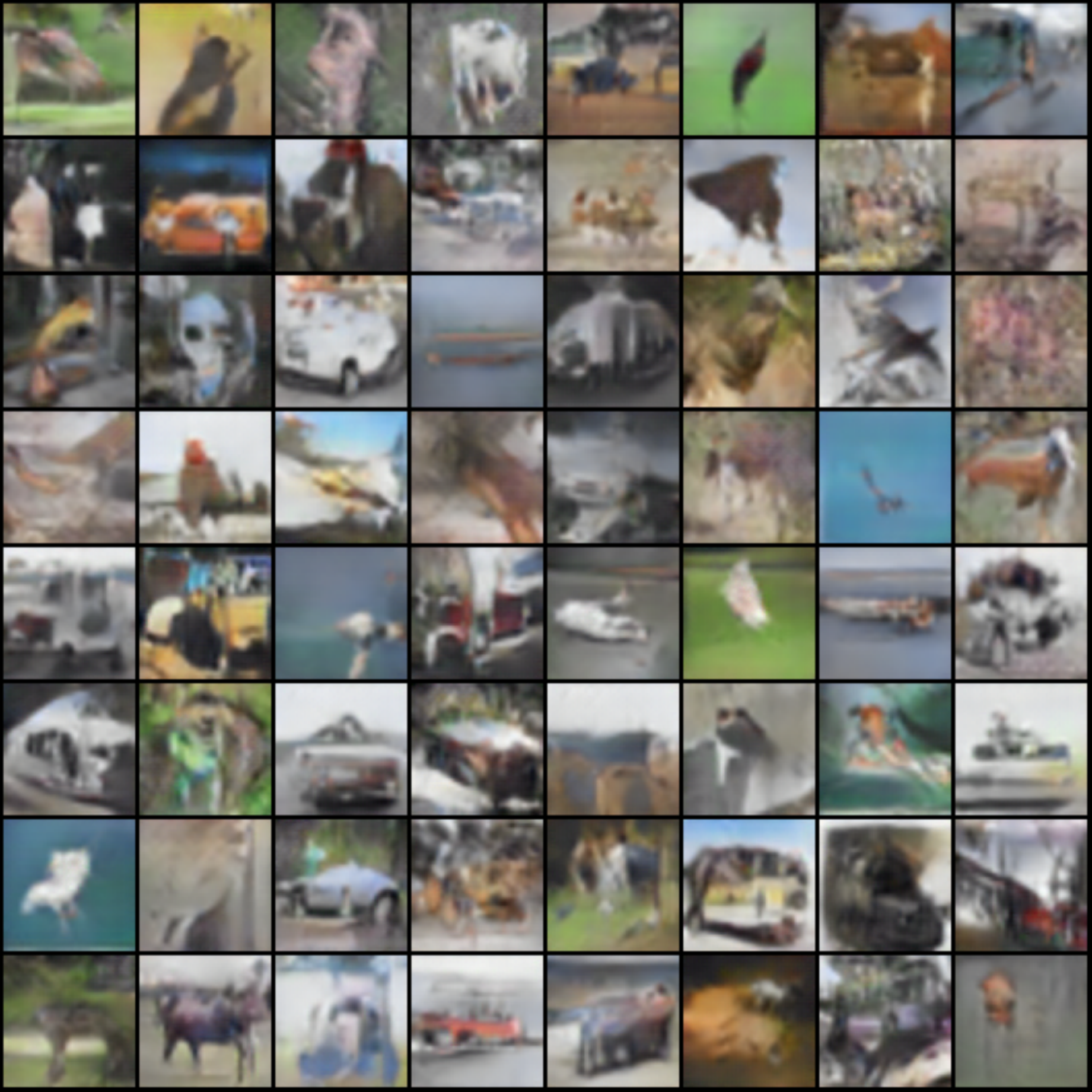}
\label{fig:dcgan_pdhg1}
}
\subfloat[DCGAN]{
\includegraphics[width=.30\linewidth]{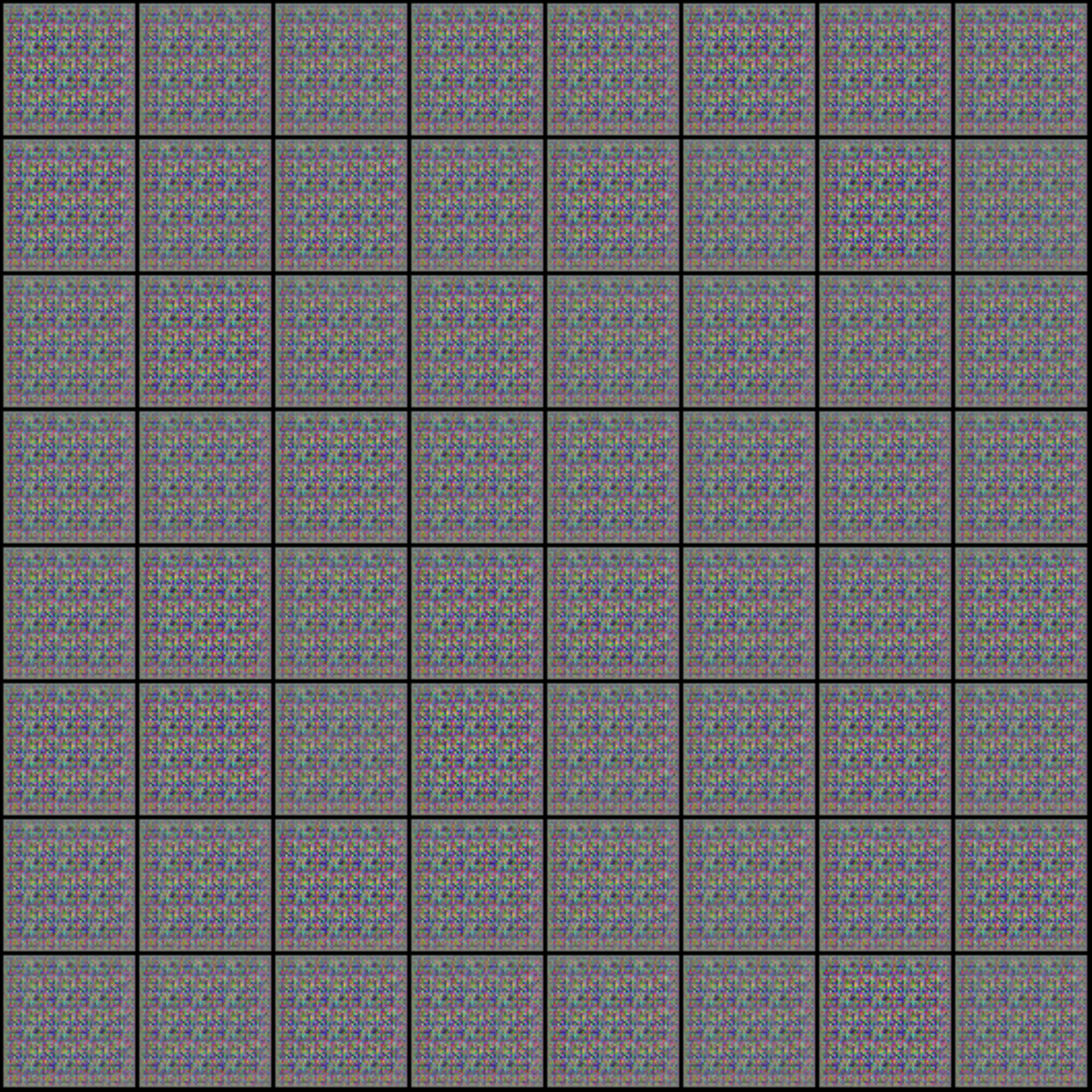}
\label{fig:dcgan_otb1}
}
\subfloat[Unrolled GAN]{
\includegraphics[width=.30\linewidth]{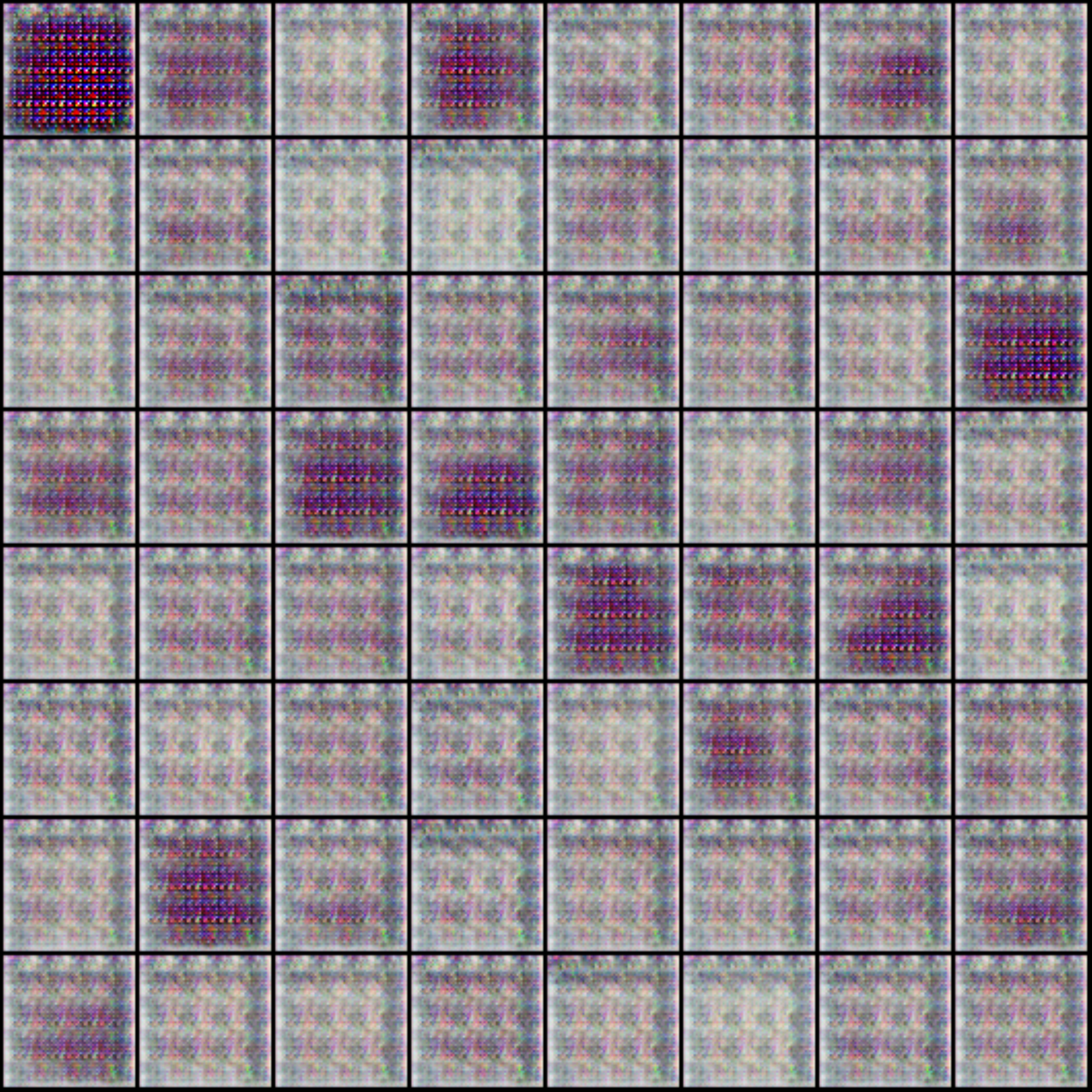}
\label{fig:dcgan_unroll1}
}
\\
\vspace{-4mm}
\subfloat[With $\mathbf{G}$  prediction]{
\includegraphics[width=.30\linewidth]{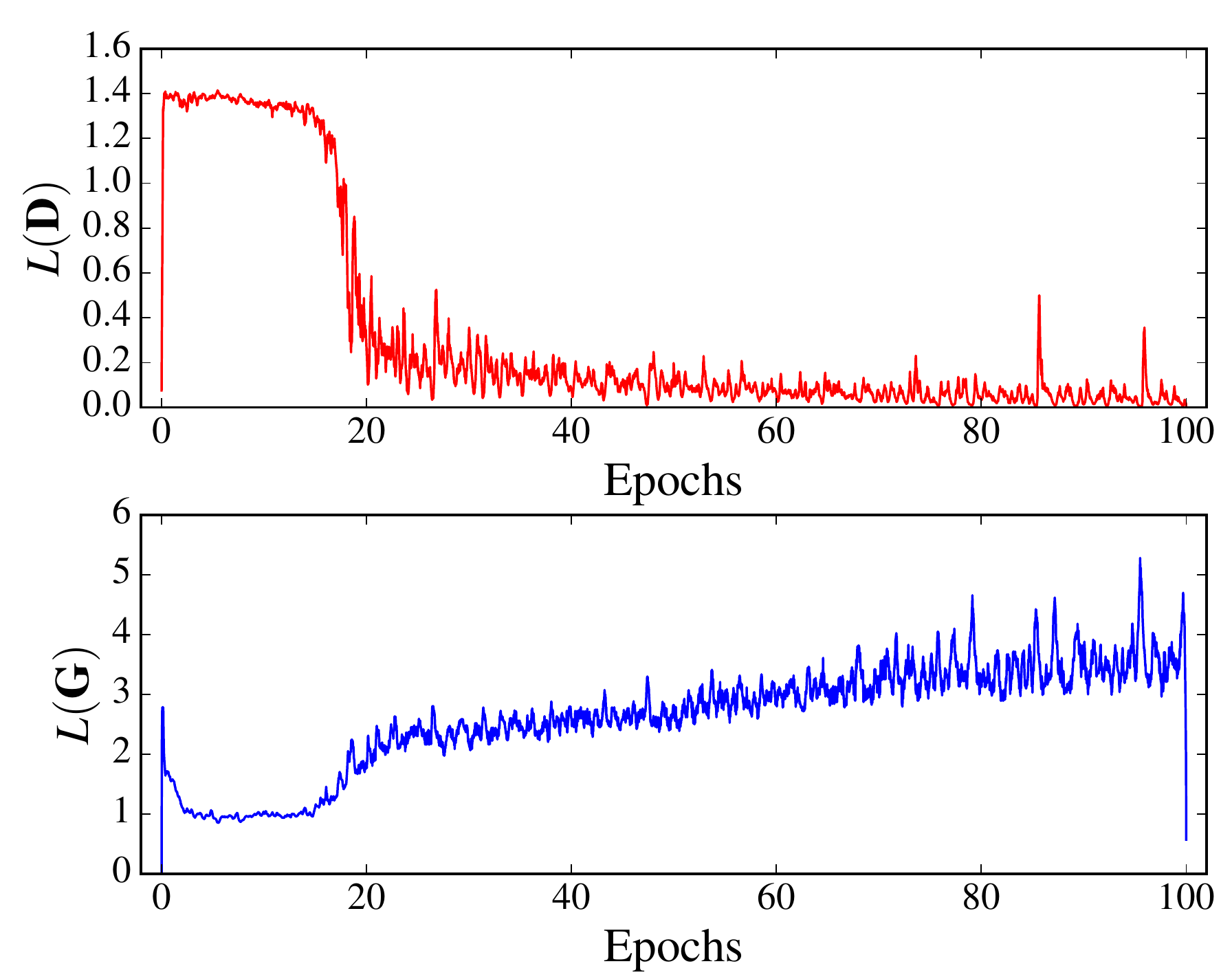}
\label{fig:dcgan_loss_pdhg1}
}
\subfloat[DCGAN]{
\includegraphics[width=.30\linewidth]{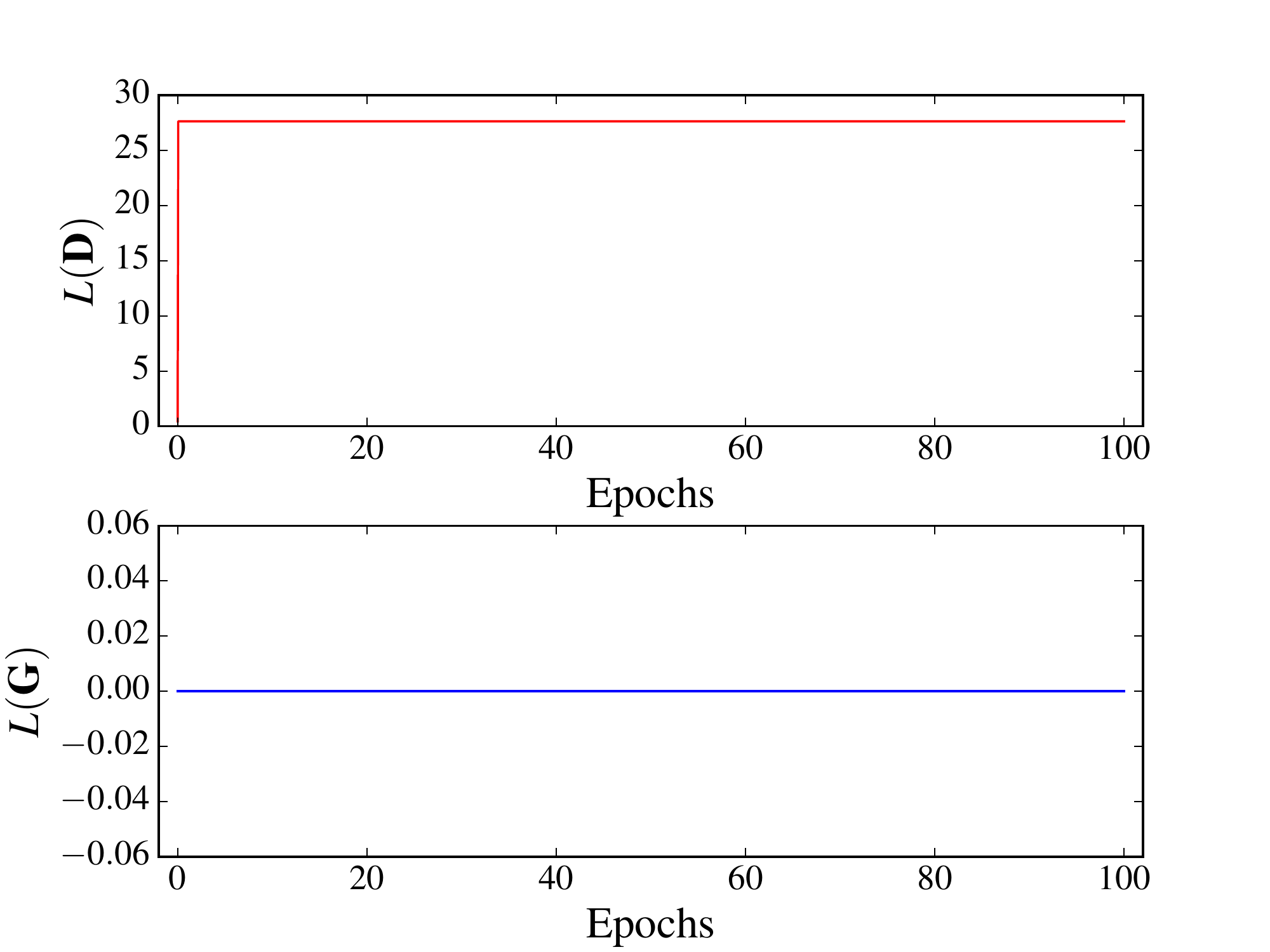}
\label{fig:dcgan_loss_otb1}
}
\subfloat[Unrolled GAN]{
\includegraphics[width=.30\linewidth]{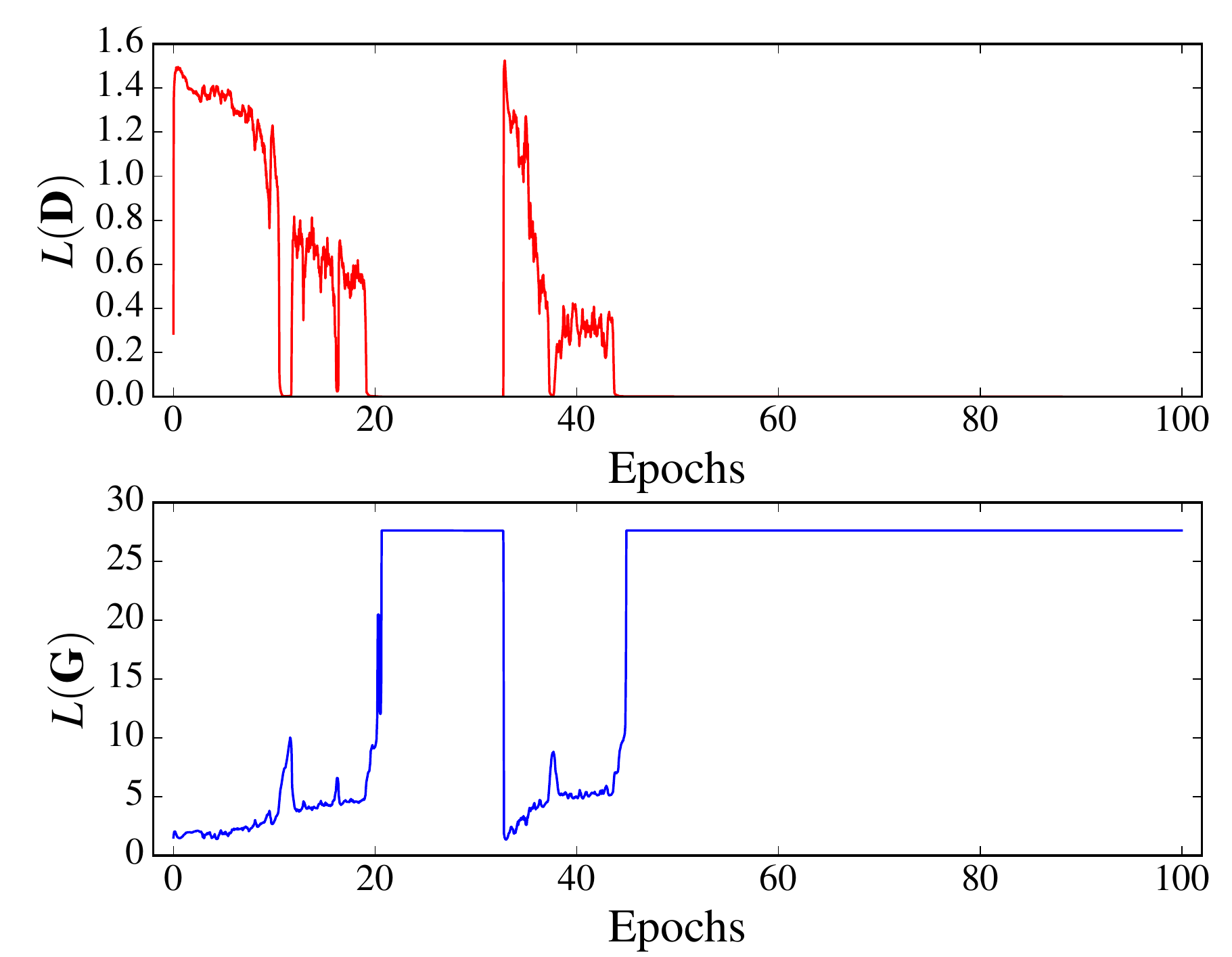}
\label{fig:dcgan_loss_unroll1}
}
\vspace{-1mm}
\caption{\small Comparison of GAN training algorithms for DCGAN architecture on Cifar-10 image datasets with higher learning rate, $lr = 0.001, \beta_{1}=0.5$. }
\label{fig:dcgan1}
\vspace{-4mm}
\end{figure*}

\subsection{Domain Adaptation}
We consider the domain adaptation task \citep{saenko2010adapting,ganin2015unsupervised,tzeng2017adversarial} wherein the representation learned using the source domain samples is altered so that it can also generalize to samples from the target distribution. We use the problem setup and hyper-parameters as described in \citep{ganin2015unsupervised}
%
%
using the \textsc{Office} dataset \citep{saenko2010adapting} (experimental details are shared in the Supplementary Material). In Table \ref{tab:DA}, comparisons are drawn with respect to target domain accuracy on six pairs of source-target domain tasks. We observe that the prediction step has mild benefits on the ``easy'' adaptation tasks with very similar source and target domain samples. However, on the transfer learning tasks of \textsc{Amazon}-to-\textsc{Webcam}, \textsc{Webcam}-to-\textsc{Amazon}, and  \textsc{Dslr}-to-\textsc{Amazon} which has noticeably distinct data samples, an extra prediction step gives an absolute improvement of $1.3 - 6.9\%$ in predicting target domain labels. 
\begin{table*}[!htbp]
\vspace{-1mm}
\small
\centering
\caption{\small Comparison of target domain accuracy on \textsc{Office} dataset.}
\label{tab:DA}
\begin{tabular}{@{}l@{\hspace{8mm}}r@{\hspace{2mm}}c@{\hspace{2mm}}c@{\hspace{2mm}}c@{\hspace{2mm}}c@{\hspace{2mm}}c@{\hspace{2mm}}c@{{}}}
\toprule
\multirow{2}{*}{Method} & Source & \textsc{Amazon} & \textsc{Webcam} & \textsc{Dslr} & \textsc{Webcam} & \textsc{Amazon} & \textsc{Dslr}\\
& Target & \textsc{Webcam} & \textsc{Amazon} & \textsc{Webcam} & \textsc{Dslr} & \textsc{Dslr} & \textsc{Amazon} \\
\midrule
\multicolumn{2}{@{}l@{\hspace{8mm}}}{DANN~\citep{ganin2015unsupervised}} & 73.4 & 51.6 & 95.5 & \textbf{99.4} & \textbf{76.5} & 51.7 \\
\multicolumn{2}{@{}l@{\hspace{8mm}}}{DANN + prediction} & \textbf{74.7} & \textbf{58.5} & \textbf{96.1} & 99.0 & 73.5 & \textbf{57.6}\\
\bottomrule
\end{tabular}
\vspace{-2mm}
\end{table*}

\subsection{Fair Classifier}
Finally, we consider a task of learning fair feature representations \citep{mathieu2016disentangling,edwards2015censoring,louizos2015variational} such that the final learned classifier does not discriminate with respect to a sensitive variable.  
As proposed in \citet{edwards2015censoring} one way to measure fairness is using discrimination,
\begin{align}
y_{disc} = \left\vert \frac{1}{N_0}\sum_{i:s_i=0} \eta(x_i) - \frac{1}{N_1} \sum_{i:s_i=1} \eta(x_i) \right\vert. \label{eq:fair}
\end{align} 
Here $s_i$ is a binary sensitive variable for the $i^{th}$ data sample and $N_k$ denotes the total number of samples belonging to the $k^{th}$ sensitive class. 
Similar to the domain adaptation task, the learning of each classifier can be formulated as a minimax problem in \eqref{DA}  \citep{edwards2015censoring,mathieu2016disentangling}. Unlike the previous example though, this task has a model selection component. From a pool of hundreds of randomly generated adversarial deep nets, for each value of $t$, one selects the model that maximizes the difference 
\begin{align}
y_{t,Delta} = y_{acc} - t* y_{disc}. \label{eq:delta}
\end{align}

The ``Adult'' dataset from the UCI machine learning repository is used. The task ($y_{acc}$) is to classify whether a person earns $\geq \$50k$/year. The person's gender is chosen to be the sensitive variable. Details are in the supplementary.
To demonstrate the advantage of using prediction for model selection, we follow the protocol developed in~\citet{edwards2015censoring}. In this work, the search space is restricted to a class of models that consist of a fully connected autoencoder, one task specific discriminator, and one adversarial discriminator. The encoder output from the autoencoder acts as input to both the discriminators. In our experiment, 100 models are randomly selected. During the training of each adversarial model, $\mathcal{L}_d$ is a cross-entropy loss while $\mathcal{L}_y$ is a linear combination of reconstruction and cross-entropy loss. Once all the models are trained, the best model for each value of $t$ is selected by evaluating \eqref{eq:delta} on the validation set. 

Figure \ref{fig:fair_a} plots the results on the test set for the AFLR approach with and without prediction steps in their default Adam solver. 
For each value of $t$, Figure \ref{fig:fair_b}, \ref{fig:fair_c}  also compares the number of layers in the selected encoder and discriminator networks. 
When using prediction for training, relatively stronger encoder models are produced and selected during validation, and hence the prediction results generalize better on the test set. 

\begin{figure*}[!htbp]
\vspace{-2mm}
\begin{minipage}[c][5.5cm][t]{.69\textwidth}
\centering
\subfloat[]{
\includegraphics[width=.99\linewidth]{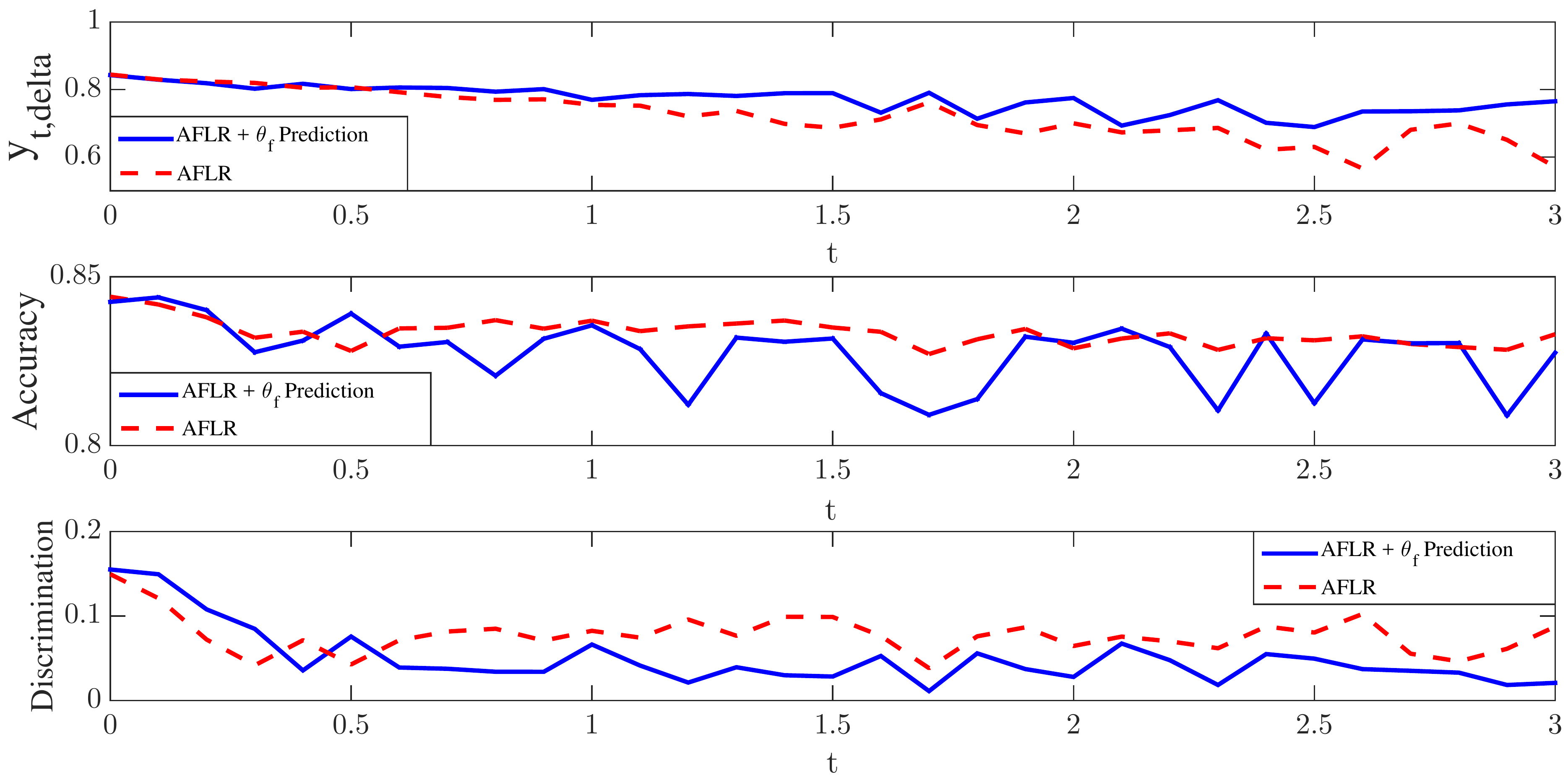}
\label{fig:fair_a}
}
\end{minipage}
\begin{minipage}[c][5.5cm][t]{.29\textwidth}
\centering
\subfloat[]{
\includegraphics[width=.99\linewidth]{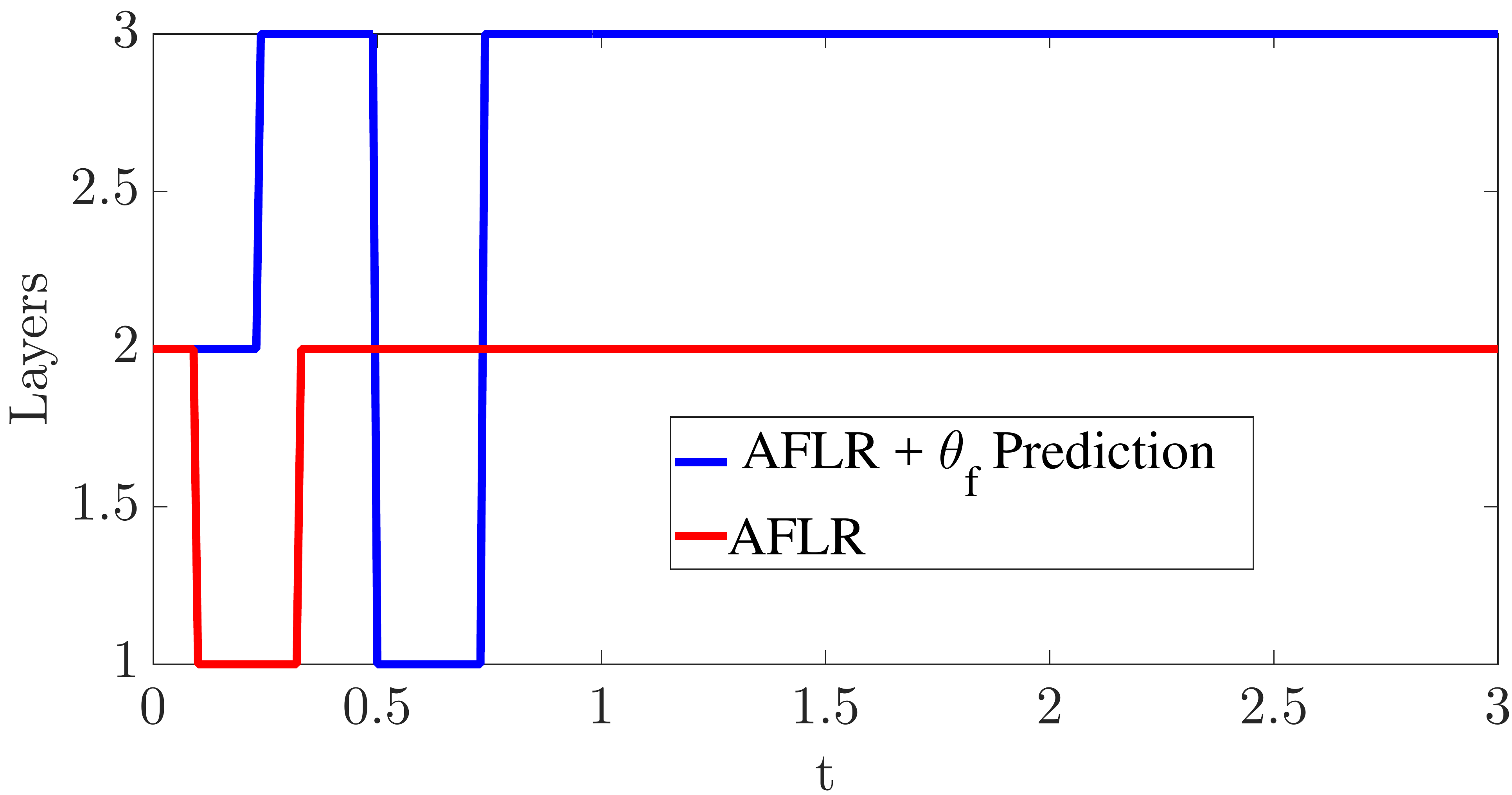}
\label{fig:fair_b}
}\\
\vspace*{-4mm}
\subfloat[]{
\includegraphics[width=.99\linewidth]{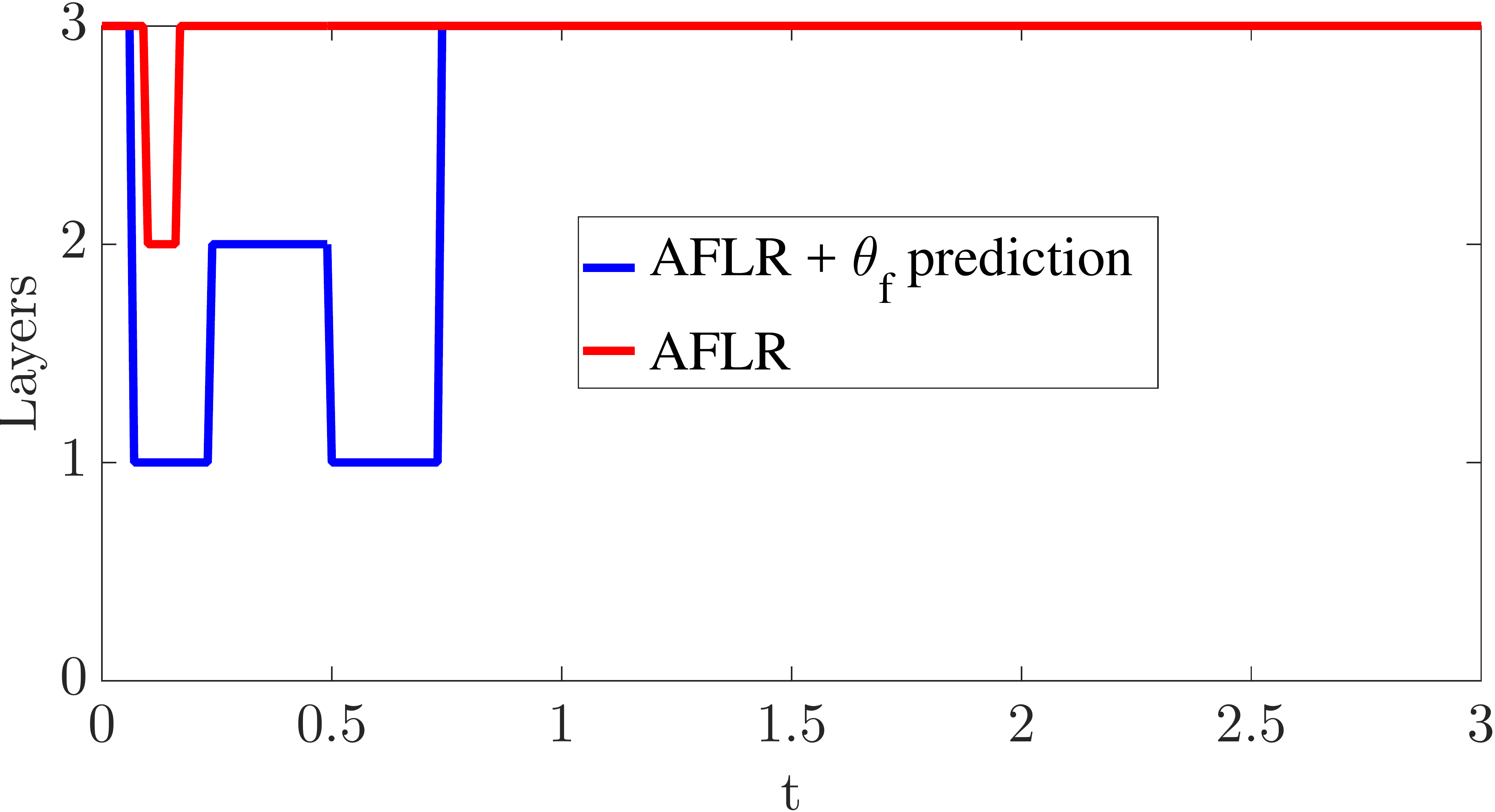}
\label{fig:fair_c}
}
\end{minipage}
\label{fig:fair}
\vspace{-5mm}
\caption{ Model selection for learning a fair classifier. (a) Comparison of $y_{t,delta}$ (higher is better), and also $y_{disc}$ (lower is better) and $y_{acc}$ on the test set using AFLR with and without predictive steps. (b) Number of encoder layers in the selected model. (c) Number of discriminator layers (both adversarial and task-specific) in the selected model. }
\end{figure*}

\section{Conclusion}
We present a simple modification to the alternating SGD method, called a prediction step, that improves the stability of adversarial networks.  We present theoretical results showing that the prediction step is asymptotically stable for solving saddle point problems.  We show, using a variety of test problems, that prediction steps prevent network collapse and enable training with a wider range of learning rates than plain SGD methods. 

\subsubsection*{Acknowledgments}
The work of T. Goldstein was supported by the US Office of Naval Research under grant N00014-17-1-2078, the US National Science Foundation (NSF) under grant CCF-1535902, and by the Sloan Foundation. A. Yadav and D. Jacobs were supported by the National Science Foundation under grant no. IIS-1526234 and by the Office of the Director of National Intelligence (ODNI), Intelligence Advanced Research Projects Activity (IARPA), via IARPA R\&D Contract No. 2014-14071600012. The views and conclusions contained herein are those of the authors and should not be interpreted as necessarily representing the official policies or endorsements, either expressed or implied, of the ODNI, IARPA, or the U.S. Government. The U.S. Government is authorized to reproduce and distribute reprints for Governmental purposes notwithstanding any copyright annotation thereon.

\bibliography{paper,pdhg,dcgan}
\bibliographystyle{iclr2018_conference}

\newpage
\appendix
\section*{Appendix}
\addcontentsline{toc}{section}{Appendices}
\renewcommand{\thesubsection}{\Alph{subsection}}
\appendix
\addcontentsline{toc}{section}{Appendices}

\section{Detailed derivation of the harmonic oscillator equation} \label{harmonic_detail}
Here, we provide a detailed derivation of the harmonic oscillator behavior of Algorithm \eqref{alg} on the simple bi-linear saddle of the form
  \alns{ 
  \loss(x,y) =  y^TKx 
  }
where $K$ is a matrix. Note that, within a small neighborhood of a saddle, all smooth weakly convex objective functions behave like \eqref{linear}.
  To see why, consider a smooth objective function $\loss$ with a saddle point at $x^*=0,$ $y^*=0.$  Within a small neighborhood of the saddle, we can approximate the function $\loss$ to high accuracy using its Taylor approximation
  $$\loss(x,y) \approx \loss(x^*,y^*) + y^T\loss'_{xy}x + O(\|x\|^3+\|y\|^3)$$
where $\loss'_{xy}$ denotes the matrix of mixed-partial derivatives with respect to $x$ and $y$.  Note that the first-order terms have vanished from this Taylor approximation because the gradients are zero at a saddle point.  The $O(\|x\|^2)$ and $O(\|y\|^2)$ terms vanish as well because the problem is assumed to be weakly convex around the saddle.
Up to third-order error (which vanishes quickly near the saddle), this Taylor expansion has the form \eqref{linear}.  
For this reason, stability on saddles of the form \eqref{linear} is a necessary condition for convergence of \eqref{alg}, and the analysis here describes the asymptotic behavior of the prediction method on any smooth problem for which the method converges.

We will show that, as the learning rate gets small, the iterates of the non-prediction method \eqref{plain} rotate in orbits around the saddle without converging.  In contrast, the iterates of the prediction method fall into the saddle and converge. 

When the conventional gradient method \eqref{plain} is applied to the linear problem \eqref{linear}, the resulting iterations can be written
\alns{
\frac{x\kp - x^k}{\alpha} &=  -  K^Ty^k, \qquad
\frac{y\kp-y^k}{\alpha} = (\beta/\alpha) Kx\kp .
}
When the stepsize $\alpha$ gets small, this behaves like a discretization of the differential equation
\aln{
\dot x &= - K^Ty   \label{diffeqx} \\ 
\dot y &= \beta/\alpha Kx  \label{diffeqy}
}
where $\dot x$ and $\dot y$ denote the derivatives of $x$ and $y$ with respect to time.  

The differential equations (\ref{diffeqx},\ref{diffeqy}) describe a harmonic oscillator.
To see why, differentiate \eqref{diffeqx} and plug \eqref{diffeqy} into the result to get a differential equation in $x$ alone
\aln{
\ddot x = - K^T \dot y = -\beta/\alpha K^TKx. \label{plain_before_z}
}
We can decompose this into a system of independent single-variable problems by considering the eigenvalue decomposition $\beta/\alpha K^TK = U\Sigma U^T.$ We now multiply both sides of \eqref{plain_before_z} by $U^T,$ and make the change of variables $z \gets U^Tx$ to get
\alns{
\ddot z = -\Sigma z. 
}
where $\Sigma$ is diagonal.  This is the standard equation for undamped harmonic motion, and its solution is
$z = A\cos(\Sigma^{1/2}t+\phi),$ where $\cos$ acts entry-wise, and the diagonal matrix $A$ and vector $\phi$ are constants that depend only on the initialization.  Changing back into the variable $x$, we get the solution
$$x = UA\cos(\Sigma^{1/2}t+\phi).$$

We can see that, for small values of $\alpha$ and $\beta,$ the non-predictive algorithm \eqref{plain} approximates an undamped harmonic motion, and the solutions orbit around the saddle without converging.

The prediction step \eqref{alg} improves convergence because it produces {\em damped} harmonic motion that sinks into the saddle point.  When applied to the linearized problem \eqref{linear}, the iterates of the predictive method \eqref{alg} satisfy
\alns{
\frac{x\kp-x^k}{\alpha} &=  -  K^Ty^k\\
\frac{y\kp-y^k}{\alpha} &= \beta/\alpha K(x\kp + x\kp-x^k) = \beta/\alpha Kx\kp + \beta K \frac{x\kp-x^k}{\alpha} .
}
For small $\alpha,$ this approximates the dynamical system 
\aln{
\dot x &= - K^Ty  \label{diffeqx2} \\ 
\dot y &= \beta/\alpha K(x+\alpha \dot x).  \label{diffeqy2}
}
Like before, we differentiate \eqref{diffeqx2} and use \eqref{diffeqy2} to obtain 
\aln{
\ddot x = - K^T \dot y = -\beta/\alpha K^T(Kx+\alpha A\dot x )
  =  -\beta/\alpha K^TKx-\beta/ K^TK\dot x. \label{pred_before_z}
}
Finally, multiply both sides by $U^T$ and perform the change of variables $z \gets U^Tx$ to get
\alns{
\ddot z = -\Sigma z - \alpha \Sigma \dot z. 
}
This equation describes a damped harmonic motion.  The solutions have the form
$z(t) = A\exp(- \frac{t\alpha}{2} \sqrt{\Sigma})\sin(t\sqrt{(1-\alpha^2/4)\Sigma} + \phi).$  Changing back to the variable $x,$ we see that the iterates of the original method satisfy 
$$x(t) = UA\exp(- \frac{t\alpha}{2} \sqrt{\Sigma})\sin(t\sqrt{(1-\alpha^2/4)\Sigma} + \phi).$$ 
where $A$ and $\phi$ depend on the initialization.

From this analysis, we see that for small constant $\alpha$ the orbits of the lookahead method converge into the saddle point, and the error decays exponentially fast. 

\subsection{Proof of Theorem \ref{theorem}}

Assume the optimal solution  $(u\opt , v\opt)$ exists, then $\loss'_u(u\opt , v) = \loss'_v(u, v\opt) = 0$. In the following proofs, we use $g_u(u,v), \, g_v(u,v)$ to represent the stochastic approximation of gradients, where $\bbE[g_u(u,v)]=\loss'_u(u,v), \, \bbE[g_v(u,v)]=\loss'_v(u,v)$. We show the convergence of the proposed stochastic primal-dual gradients for the primal-dual gap $P(u\itk, v\itk) = \loss(u\itk, v\opt) - \loss (u\opt, v\itk)$. We prove  the $O(1/\sqrt{k})$ convergence rate in Theorem \ref{theorem} by using  Lemma \ref{lm1} and Lemma \ref{lm2}, which present the contraction of primal and dual updates, respectively. 

\begin{lemma} \label{lm1}
Suppose $\loss(u,v)$ is convex in $u$ and $\bbE [\| g_u (u, v)\|^2] \leq G_u^2$, we have 
\begin{equation} \label{eq:lm1}
\bbE[\loss(u\itk, v\itk)] - \bbE[\loss(u\opt, v\itk)]  \leq \frac{1}{2\alpha\itk}\left(\bbE[\| u\itk - u\opt \|^2] -\bbE[\| u\kp - u\opt \|^2] \right) + \frac{\alpha_k}{2} G_u^2
\end{equation}
\end{lemma}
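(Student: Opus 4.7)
The plan is to follow the standard SGD descent-lemma template, but applied to a saddle-point primal update where optimality is measured against the fixed comparator $u\opt$ in the current dual slice $v^k$. The starting point is simply the update rule $u\kp = u^k - \alpha_k g_u(u^k,v^k)$ for the stochastic primal gradient, where $\bbE[g_u(u^k,v^k)\mid u^k,v^k] = \loss'_u(u^k,v^k)$.

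First I would square the update identity to expand $\|u\kp - u\opt\|^2$, obtaining
\alns{
\|u\kp - u\opt\|^2 = \|u^k - u\opt\|^2 - 2\alpha_k \inprod{g_u(u^k,v^k),\, u^k - u\opt} + \alpha_k^2 \|g_u(u^k,v^k)\|^2.
}
Rearranging puts the cross term on one side as
\alns{
\inprod{g_u(u^k,v^k),\, u^k - u\opt} = \frac{1}{2\alpha_k}\bigl(\|u^k - u\opt\|^2 - \|u\kp - u\opt\|^2\bigr) + \frac{\alpha_k}{2}\|g_u(u^k,v^k)\|^2.
}
Next I take conditional expectation given $(u^k,v^k)$. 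Unbiasedness of the stochastic gradient replaces the left-hand inner product by $\inprod{\loss'_u(u^k,v^k),\, u^k - u\opt}$, and the second-moment bound controls $\bbE[\|g_u(u^k,v^k)\|^2] \le G_u^2$. Taking a further (total) expectation then yields
\alns{
\bbE\inprod{\loss'_u(u^k,v^k),\, u^k - u\opt} \le \frac{1}{2\alpha_k}\bigl(\bbE\|u^k - u\opt\|^2 - \bbE\|u\kp - u\opt\|^2\bigr) + \frac{\alpha_k}{2}G_u^2.
}

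Finally, convexity of $\loss(\cdot,v^k)$ in the primal variable supplies the subgradient inequality $\loss(u\opt,v^k) \ge \loss(u^k,v^k) + \inprod{\loss'_u(u^k,v^k),\, u\opt - u^k}$, equivalently $\loss(u^k,v^k) - \loss(u\opt,v^k) \le \inprod{\loss'_u(u^k,v^k),\, u^k - u\opt}$. Taking expectation and chaining into the previous display gives exactly \eqref{eq:lm1}.

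The main subtlety — not really an obstacle but worth being careful about — is the measurability/conditioning: the update $u\kp$ depends on the stochastic $g_u(u^k,v^k)$, so the unbiasedness step must be done via the tower property conditioned on the sigma-algebra generated by the iterates up to step $k$, rather than naively replacing $g_u$ by $\loss'_u$ inside an expectation that also contains $u\kp$. Once that is handled cleanly, the remaining estimates are routine and the lemma follows.
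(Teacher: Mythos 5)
Your proposal is correct and follows essentially the same route as the paper's proof: expand $\|u\kp - u\opt\|^2$ via the primal update, take expectations using unbiasedness and the second-moment bound $\bbE\|g_u\|^2 \le G_u^2$, and finish with the convexity (first-order) inequality in $u$. The only difference is that you make the tower-property/conditioning step explicit, which the paper's proof glosses over but implicitly relies on.
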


\begin{proof}
Use primal update in \eqref{alg}, we have 
\begin{align}
\| u\kp - u\opt \| ^2 &  =  \| u\itk - \alpha_k \, g_u(u\itk, v\itk) - u\opt\|^2 \\
& = \| u\itk - u\opt\|^2  - 2 \alpha_k \, \inprod{ g_u(u\itk, v\itk), \, u\itk - u\opt} + \alpha_k^2 \, \|  g_u(u\itk, v\itk)\|^2.
\end{align} 
Take expectation on both side of the equation, substitute with $\bbE[g_u (u,v)] = \loss'_u(u,v)$ and apply $\bbE[\| g_u^2 (u,v)\|] \leq G_u^2$ to get
\begin{align}
\bbE[\| u\kp - u\opt \| ^2] \leq \bbE[\| u\itk - u\opt\|^2]  - 2 \alpha_k \, \bbE[\inprod{ \loss'_u(u\itk, v\itk), \, u\itk - u\opt}] + \alpha_k^2 G_u^2 . \label{eq:lm1tmp1}
\end{align} 
Since $\loss(u,v)$ is convex in $u$, we have 
\begin{equation}\label{eq:lm1tmp2}
\inprod{ \loss'_u(u\itk, v\itk), \, u\itk - u\opt} \geq \loss(u\itk, v\itk) - \loss(u\opt, v\itk).
\end{equation}
\eqref{eq:lm1} is proved by combining \eqref{eq:lm1tmp1} and \eqref{eq:lm1tmp2}.
\end{proof}

\begin{lemma} \label{lm2}
Suppose $\loss(u,v)$ is concave in $v$ and has Lipschitz gradients, $\| \loss'_v(u_1, v) - \loss'_v(u_2, v)\| \leq L_v \| u_1 - u_2 \|  $;  and bounded variance, $\bbE[\| g_u (u,v)\|^2] \leq G_u^2$, $\bbE[\| g_v (u,v)\|^2] \leq G_v^2$; and $\bbE[\| v\itk - v\opt \|^2] \leq D_v^2$, we have 
\begin{equation} \label{eq:lm2}
\begin{split}
& \bbE[\loss(u\itk, v\opt)] - \bbE[\loss(u\itk, v\itk)]  \leq  \\
& \qquad \frac{1}{2\beta_k}\left(\bbE[\| v\itk - v\opt \|^2] -\bbE[\| v\kp - v\opt \|^2] \right) + \frac{\beta_k}{2} G_v^2 +   \alpha_k L_v \, (G_u^2 + D_v^2).
\end{split}
\end{equation}
\end{lemma}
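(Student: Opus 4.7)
The plan is to mirror the structure of Lemma 1's proof but to pay careful attention to the fact that, in the prediction method, the dual gradient $g_v$ is evaluated at the predicted point $\bar u\kp = 2u\kp - u^k$ rather than at $u^k$ or $u\kp$. I will start from the dual update $v\kp = v^k + \beta_k g_v(\bar u\kp,v^k)$, expand $\|v\kp - v\opt\|^2$, take expectation, and use $\bbE[\|g_v\|^2]\le G_v^2$ to absorb the squared-norm term as $\beta_k^2 G_v^2$. Using the tower property, the cross term becomes $2\beta_k\,\bbE[\langle \loss'_v(\bar u\kp,v^k),\, v^k-v\opt\rangle]$, since $\bbE[g_v(\bar u\kp,v^k)\mid u^k,v^k]=\bbE[\loss'_v(\bar u\kp,v^k)\mid u^k,v^k]$.

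Next, I would split the cross term by adding and subtracting $\loss'_v(u^k,v^k)$:
\alns{
\langle \loss'_v(\bar u\kp,v^k), v^k-v\opt\rangle
= \langle \loss'_v(u^k,v^k), v^k-v\opt\rangle + \langle \loss'_v(\bar u\kp,v^k)-\loss'_v(u^k,v^k), v^k-v\opt\rangle.
}
Concavity of $\loss$ in $v$ gives $\langle \loss'_v(u^k,v^k), v^k-v\opt\rangle \le \loss(u^k,v^k)-\loss(u^k,v\opt)$, which will eventually supply the primal–dual gap term on the left-hand side of \eqref{eq:lm2}. The second piece is the error introduced by the prediction step and is where the hypotheses $L_v$ and $G_u$ enter.

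To handle that error term, I will use Cauchy–Schwarz and $L_v$-Lipschitzness of $\loss'_v$ in its first argument to bound it by $L_v\,\|\bar u\kp-u^k\|\,\|v^k-v\opt\|$. The key algebraic observation is that $\bar u\kp - u^k = 2(u\kp-u^k) = -2\alpha_k\, g_u(u^k,v^k)$, so $\|\bar u\kp-u^k\| = 2\alpha_k\|g_u(u^k,v^k)\|$. Applying Young's inequality $2ab\le a^2+b^2$ to $\|g_u\|\cdot\|v^k-v\opt\|$ and taking expectations, the error is bounded by $\alpha_k L_v(G_u^2+D_v^2)$ after using $\bbE\|g_u\|^2\le G_u^2$ and $\bbE\|v^k-v\opt\|^2\le D_v^2$. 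Assembling everything and dividing by $2\beta_k$ yields exactly \eqref{eq:lm2}.

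The main obstacle is this error piece: the dual gradient is not evaluated at the natural point for applying concavity, so without the Lipschitz assumption on $\loss'_v$ one cannot close the argument. The subtle part is converting the Lipschitz gap $\|\bar u\kp-u^k\|$ into something controllable — the correct move is to exploit that the prediction step makes this difference exactly twice the primal step, i.e. proportional to $\alpha_k\|g_u\|$, which is precisely what gives the $\alpha_k L_v(G_u^2+D_v^2)$ term with the right scaling in $\alpha_k$. Apart from this, the remaining steps are routine: expectation of a squared-norm expansion, concavity, Cauchy–Schwarz, Young's inequality, and rearrangement.
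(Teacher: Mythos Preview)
Your proposal is correct and follows essentially the same route as the paper: expand $\|v\kp-v\opt\|^2$ from the dual update, replace the stochastic gradient by its mean, split the cross term via $\loss'_v(u^k,v^k)$, apply concavity to one piece and Cauchy--Schwarz plus $L_v$-Lipschitzness to the other, then use $\bar u\kp-u^k=-2\alpha_k g_u(u^k,v^k)$ together with Young's inequality to obtain the $\alpha_k L_v(G_u^2+D_v^2)$ term. The only minor imprecision is in the conditioning for the tower property---you should condition on the filtration that includes the primal update (so that $\bar u\kp$ is measurable), not merely on $(u^k,v^k)$---but the paper is equally informal on this point and the argument goes through unchanged.
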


\begin{proof}
From the dual update in \eqref{alg}, we have 
\begin{align}
\| v\kp - v\opt \| ^2 &  =  \| v\itk + \beta_k \, g_v(\bar u\kp, v\itk) - v\opt\|^2 \\
& = \| v\itk - v\opt\|^2  + 2 \beta_k \, \inprod{ g_v(\bar u\kp, v\itk), \, v\itk - v\opt} + \beta_k^2 \, \|  g_v(\bar u\kp, v\itk)\|^2.
\end{align} 
Take expectation on both sides of the equation, substitute $\bbE[g_v (u,v)] = \loss'_v(u,v),$ and apply $\bbE[\| g_v^2 (u,v)\|] \leq G_v^2$ to get
\begin{align}
\bbE[\| v\kp - v\opt \| ^2] \leq \bbE[\| v\itk - v\opt\|^2] + 2 \beta_k \, \bbE[\inprod{ \loss'_v(\bar u\kp, v\itk), \, v\itk - v\opt}] + \beta_k^2 \, G_v^2 . \label{eq:lm2tmp1}
\end{align} 
Reorganize \eqref{eq:lm2tmp1} to get 
\begin{align}
\bbE[\| v\kp - v\opt\|^2] - \bbE[\| v\itk - v\opt \| ^2] - \beta_k^2 \, G_v^2 \leq 2 \beta_k \, \bbE[\inprod{ \loss'_v(\bar u\kp, v\itk), \, v\itk - v\opt}] . \label{eq:lm2tmp2}
\end{align} 
The right hand side of \eqref{eq:lm2tmp2} can be represented as 
\begin{align}
& \bbE[\inprod{ \loss'_v(\bar u\kp, v\itk), \, u\itk - v\opt}] \\
= & \bbE[\inprod{ \loss'_v(\bar u\kp, v\itk) - \loss'_v(u\itk, v\itk) + \loss'_v(u\itk, v\itk), \, v\itk - v\opt}] \\
=  & \bbE[\inprod{ \loss'_v(\bar u\kp, v\itk) - \loss'_v(u\itk, v\itk) , \, v\itk - v\opt}] + \bbE[\inprod{ \loss'_v(u\itk, v\itk), \, v\itk - v\opt}], \label{eq:lm2tmp8}
\end{align}
where 
\begin{align}
& \bbE[\inprod{ \loss'_v(\bar u\kp, v\itk) - \loss'_v(u\itk, v\itk) , \, v\itk - v\opt}] \\
\leq & \bbE[ \| \loss'_v(\bar u\kp, v\itk) - \loss'_v(u\itk, v\itk) \| \, \| v\itk - v\opt \|] \\
\leq &  \bbE[ L_v \, \| \bar u\kp - u\itk \| \, \| v\itk - v\opt \|]  \label{eq:lm2tmp3} \\
= &  \bbE[ 2 L_y \, \| u\kp - u\itk \| \, \| v\itk - v\opt \|] \label{eq:lm2tmp4}\\
= &  \bbE[ 2 L_y \, \| \alpha_k g_u(u\itk, v\itk) \| \, \| v\itk - v\opt \|] \label{eq:lm2tmp5}\\
\leq &  L_y  \alpha_k \, \bbE[ \, \|  g_u(u\itk, v\itk) \|^2 +  \| v\itk - v\opt \|^2] \\
\leq & L_y \alpha_k \, (G_u^2 + D_v^2)\label{eq:lm2tmp6}. 
\end{align}
Lipschitz smoothness is used for \eqref{eq:lm2tmp3}; the prediction step in \eqref{alg} is used for \eqref{eq:lm2tmp4};  the primal update in \eqref{alg} is used for \eqref{eq:lm2tmp5}; bounded assumptions are used for \eqref{eq:lm2tmp6}.

Since $\loss(u,v)$ is concave in $v$, we have 
\begin{equation}
\inprod{ \loss'_v(u\itk, v\itk), \, v\itk - v\opt} \leq \loss(u\itk, v\itk) - \loss(u\itk, v\opt).\label{eq:lm2tmp7}
\end{equation}

Combine equations (\ref{eq:lm2tmp2}, \ref{eq:lm2tmp8}, \ref{eq:lm2tmp6} to get\ref{eq:lm2tmp7})
\begin{equation}
\begin{split}
\frac{1}{2\beta_k} \, \left( \bbE[\| v\kp - v\opt\|^2] - \bbE[\| v\itk - v\opt \| ^2] \right) - \frac{ \beta_k}{2} G_v^2 \\
 \leq   L_v \alpha_k \, (G_u^2 + D_v^2) + \bbE[\loss(u\itk, v\itk)] - \bbE[\loss(u\itk, v\opt)] .
 \end{split}\label{eq:lm2tmp9}
\end{equation}
Rearrange the order of \eqref{eq:lm2tmp9} to achieve \eqref{eq:lm2}.
\end{proof}

We now present the proof of Theorem \ref{theorem}.

\begin{proof}
Combining \eqref{eq:lm1} and \eqref{eq:lm2} in the Lemmas, the primal-dual gap $P(u\itk, v\itk) = \loss(u\itk, v\opt) - \loss(u\opt, v\itk)$ satisfies,
\begin{equation} \label{eq:thm1p1}
\begin{split}
\bbE[P(u\itk, v\itk)] \leq &  \frac{1}{2\alpha_k}\left(\bbE[\| u\itk - u\opt \|^2] -\bbE[\| u\kp - u\opt \|^2] \right) + \frac{\alpha_k}{2} G_u^2 \\
& + \frac{1}{2\beta_k}\left(\bbE[\| v\itk - v\opt \|^2] -\bbE[\| v\kp - v\opt \|^2] \right) + \frac{\beta_k}{2} G_v^2 +   \alpha_k L_v \, (G_u^2 + D_v^2).
\end{split}
\end{equation}

Accumulate \eqref{eq:thm1p1} from $k=1,\ldots, l$ to obtain
\begin{equation} 
\begin{split}
& \sum_{k=1}^l \bbE[P(u\itk, v\itk)] \leq \\ 
&  \frac{1}{2\alpha_1} \bbE[\| u^1 - u\opt \|^2]  + \sum_{k=2}^{l} (\frac{1}{2\alpha_k} - \frac{1}{2\alpha_{k-1}}) \bbE[\| u\itk - u\opt \|^2] + \sum_{k=1}^{l} \alpha_k( \frac{G_u^2}{2} + L_v G_u^2 + L_v D_v^2) \\
& + \frac{1}{2\beta_1} \bbE[\| v^1 - v\opt \|^2]  + \sum_{k=2}^{l} (\frac{1}{2\beta_k} - \frac{1}{2\beta_{k-1}}) \bbE[\| v\itk - v\opt \|^2] + \sum_{k=1}^{l} \beta_k \frac{G_v^2}{2}.
\end{split}
\end{equation}
Assume $\bbE[|| u\itk - u\opt\| ^2] \leq D_u^2, \, \bbE[|| v\itk - v\opt\| ^2] \leq D_v^2$ are bounded, we have 
\begin{equation} 
\begin{split}
\sum_{k=1}^l \bbE[P(u\itk, v\itk)] \leq &  \frac{1}{2\alpha_1} D_u^2  + \sum_{k=2}^{l} (\frac{1}{2\alpha_k} - \frac{1}{2\alpha_{k-1}}) D_u^2 + \sum_{k=1}^{l} \alpha_k ( \frac{G_u^2}{2} + L_v G_u^2 + L_v D_v^2) \\
& + \frac{1}{2\beta_1} D_v^2  + \sum_{k=2}^{l} (\frac{1}{2\beta_k} - \frac{1}{2\beta_{k-1}}) D_v^2 + \sum_{k=1}^{l} \beta_k \frac{G_v^2}{2}.
\end{split}
\end{equation}

Since $\alpha_k, \beta_k$ are decreasing and $\sum_{k=1}^l \alpha_k \leq C_\alpha \sqrt{l+1}, \, \sum_{k=1}^l \beta_k \leq C_\beta \sqrt{l+1} $, we have 
\begin{equation} 
\sum_{k=1}^l \bbE[P(u\itk, v\itk)] \leq \frac{\sqrt{l}}{2} \left(\frac{D_u^2}{C_\alpha} + \frac{D_v^2}{C_\beta}\right) + \sqrt{l+1} \left(\frac{C_\alpha G_u^2}{2} + C_\beta L_v G_u^2 + C_\alpha L_v D_v^2 + \frac{C_\beta G_v^2}{2}\right) \label{eq:thmp2}
\end{equation}

For $\hat u^l = \frac{1}{l} \sum_{k=1}^l u\itk, \, \hat v^l = \frac{1}{l} \sum_{k=1}^l v\itk $, because $\loss(u, v)$ is convex-concave, we have
\begin{align}
\bbE[P(\hat u^l, \hat v^l)] & = \bbE[\loss (\hat u^l, v\opt) - \loss (u\opt, \hat v^l)] \\
& \leq \bbE[ \frac{1}{l} \sum_{k=1}^{l}( \loss (u\itk, v\opt) - \loss (u\opt, v\itk)) ] \\
& = \frac{1}{l} \sum_{k=1}^{l}\bbE[\loss (u\itk, v\opt) - \loss (u\opt, v\itk)] \\
 & =  \frac{1}{l} \sum_{k=1}^{l} \bbE[P(u\itk, v\itk)]. \label{eq:thmp3}
\end{align}
Combine \eqref{eq:thmp2} and \eqref{eq:thmp3} to prove
\begin{equation}
\bbE[P(\hat x^l, \hat y^l)]  \leq \frac{1}{2\sqrt{l}} \left(\frac{D_u^2}{C_\alpha} + \frac{D_v^2}{C_\beta}\right) + \frac{\sqrt{l+1}}{l} \left(\frac{C_\alpha G_u^2}{2} + C_\alpha L_v G_u^2 + C_\alpha L_v D_v^2 + \frac{C_\beta G_v^2}{2}\right).
\end{equation}
\end{proof}

\section{MNIST Toy example}
{\bf Experimental details}:
We consider a classic MNIST digits dataset~\citep{lecun1998gradient} consisting of 60,000 training images and 10,000 testing images each of size $28\times 28$. For simplicity, let us consider a task (T1) of classifying into odd and even numbered images. Let's say, that $\sim50\%$ of data instances were corrupted using salt and pepper noise of probability 0.2 and this distortion process was biased. Specifically, only $30\%$ of even numbered images were distorted as against the $70\%$ of odd-numbered images. We have observed that any feature representation network $\theta_f$ trained using the binary classification loss function for task T1 has noise bias also encoded within it. This was verified by training an independent noise classifier on the learned features. This lead us to design of simple adversarial network to ``unlearn'' the noise bias from the feature learning pipeline. We formulate this using the minimax objective in \eqref{DA}. 

In our model, $\mathcal{L}_d$ is a softmax loss for the task (T2) of classifying whether the input sample is noisy or not. $\mathcal{L}_y$ is a softmax loss for task T1 and $\lambda=1$. A LeNet network~\citep{lecun1998gradient} is considered for training on task T1 while a two-layer MLP is used for training on task T2. LeNet consist of two convolutional (conv) layers followed by two fully connected (FC) layers at the top. The parameters of conv layers form $\theta_f$ while that of FC and MLP layers forms $\theta_y$ and $\theta_d$ respectively. 
We train the network in three stages.  Following the training on task T1, $\theta_f$ were fixed and MLP is trained independently on task T2.
The default learning schedule of the LeNet implementation in Caffe \citep{jia2014caffe} were followed for both the tasks. The total training iterations on each task were set to $10,000$. After pretraining, the whole network is jointly finetuned using the adversarial approach. \eqref{DA} is alternatively minimized w.r.t. $\mathbf{\theta_f}, \mathbf{\theta_y}$ and maximized w.r.t. $\mathbf{\theta_d}$. The predictive steps were only used during the finetuning phase.

Our finding is summarized in Figure~\ref{fig:toy}. In addition, Figure \ref{figapp:toy} provides head-to-head comparison of two popular solvers Adam and SGD using the predictive step. Not surprisingly, the Adam solver shows relatively better performance and convergence even with an additional predictive step. This also suggests that the default hyper-parameter for the Adam solver can be retained and utilized for training this networks without resorting to any further hyper-parameter tuning (as it is currently in practice).

\begin{figure*}[!h]
\centering
\includegraphics[width=.75\linewidth]{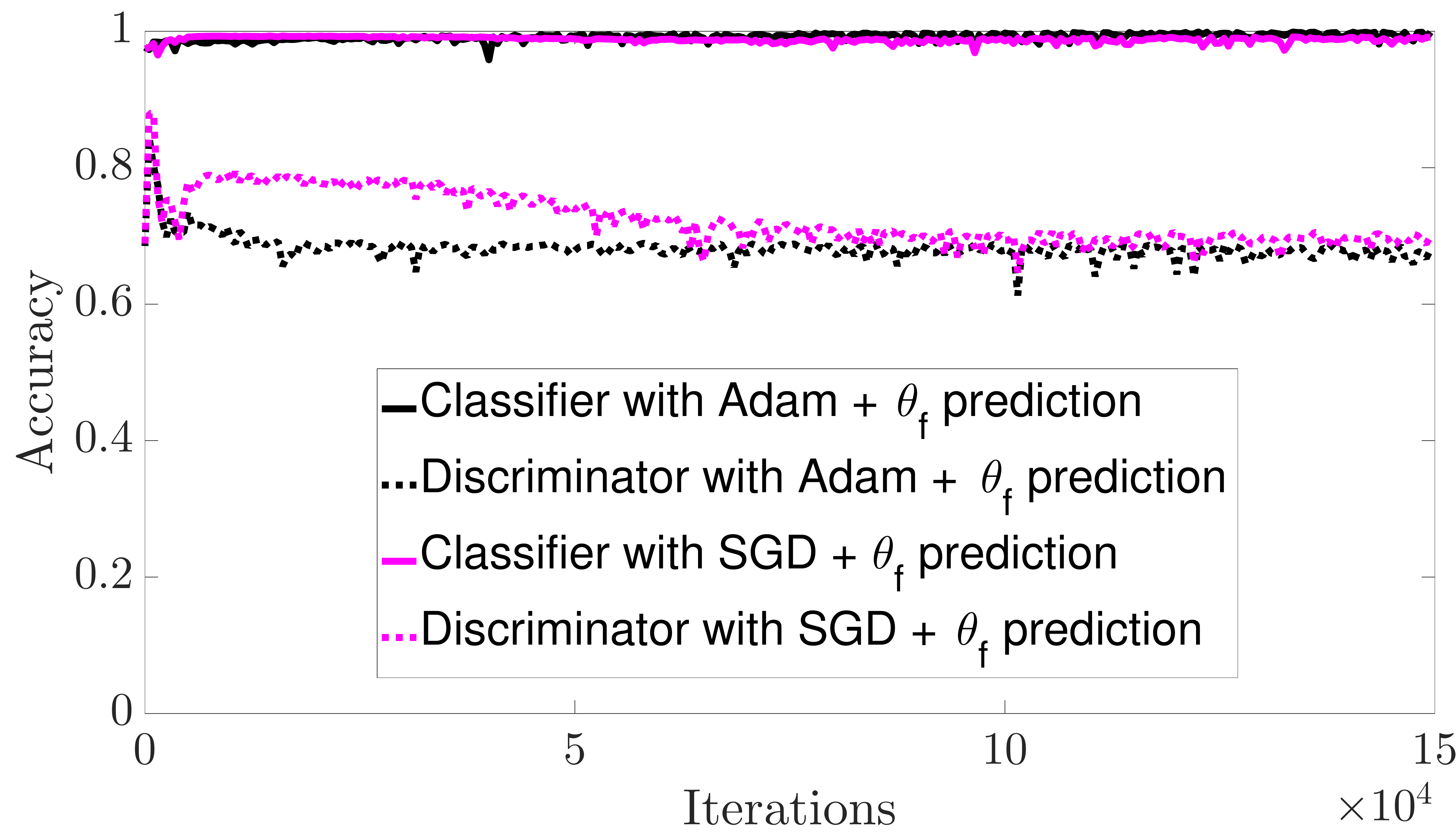}
\caption{Comparison of the classification accuracy of parity classification and noise discrimination using the SGD and Adam solvers with and without prediction step.}
\label{figapp:toy}
\end{figure*}

\section{Domain Adaptation}
{\bf Experimental details}: 
To evaluate a domain adaptation task, we consider the \textsc{Office} dataset \citep{saenko2010adapting}. \textsc{Office} is a small scale dataset consisting of images collected from three distinct domains: \textsc{Amazon}, \textsc{Dslr} and \textsc{Webcam}. 
For such a small scale dataset, it is non-trivial to learn features from images of a single domain. For instance, consider the largest subset \textsc{Amazon}, which contains only 2,817 labeled images spread across 31 different categories. However, one can leverage the power of domain adaptation to improve cross domain accuracy.
We follow the protocol listed in~\citet{ganin2015unsupervised} and the same network architecture is used. Caffe \citep{jia2014caffe} is used for implementation.
The training procedure from~\citet{ganin2015unsupervised} is kept intact except for the additional prediction step. In Table \ref{tab:DA} comparisons are drawn with respect to target domain accuracy on three pairs of source-target domain tasks. The test accuracy is reported at the end of 50,000 training iterations.


\section{Fair Classifier}
{\bf Experimental details}: The ``Adult'' dataset from the UCI machine learning repository is used, which consists of census data from $\sim 45,000$ people. The task is to classify whether a person earns $\geq \$50k$/year. The person's gender is chosen to be the sensitive variable. We binarize all the category attributes, giving us a total of 102 input features per sample. We randomly split data into 35,000 samples for training, 5000 for validation and 5000 for testing. The result reported here is an average over five such random splits. 

\section{Generative Adversarial Networks}
{\bf Toy Dataset}:
To illustrate the advantage of the prediction method, we experiment on a simple GAN architecture with fully connected layers using the toy dataset. The constructed toy example and its architecture is inspired by the one presented in~\citet{metz2016unrolled}. The two dimensional data is sampled from the mixture of eight Gaussians with their means equally spaced around the unit circle centered at $(0,0)$. The standard deviation of each Gaussian is set at $0.01$. The two dimensional latent vector $\mathbf{z}$ is sampled from the multivariate Gaussian distribution. The generator and discriminator networks consist of two fully connected hidden layers, each with $128$ hidden units and tanh activations. The final layer of the generator has linear activation while that of discriminator has sigmoid activation. The solver optimizes both the discriminator and the generator network using the objective in \eqref{eq:GAN}. We use adam solver with its default parameters (i.e., learning rate = $0.001$, $\beta_{1}=0.9$, $\beta_{2}=0.999$) and with input batch size of $512$. The generated two dimensional samples are plotted in the figure \eqref{fig:dcgan_toy}. The straightforward utilization of the adam solver fails to construct all the modes of the underlying dataset while both unrolled GAN and our method are able to produce all the modes.
\begin{figure*}[!h]
\centering
\includegraphics[width=\linewidth]{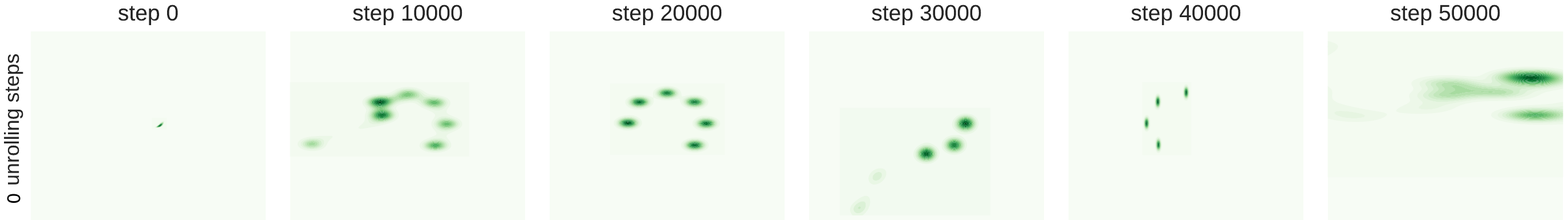} \\
\includegraphics[width=\linewidth]{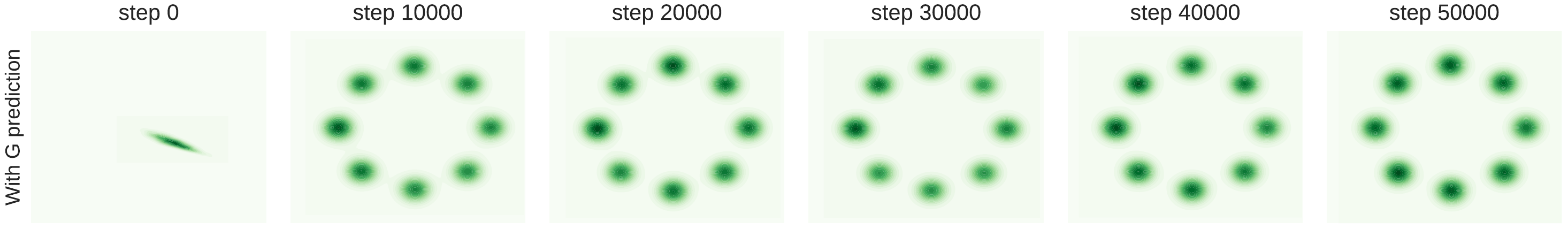} \\
\includegraphics[width=\linewidth]{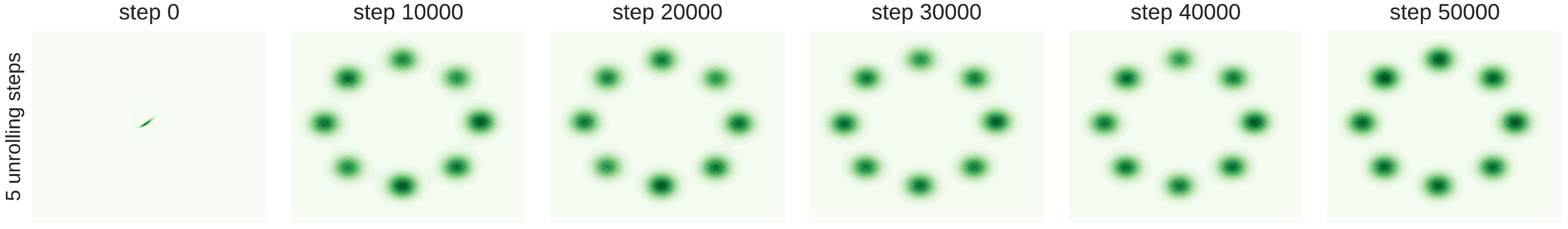}
\caption{Comparison of GAN training algorithms on toy dataset. Results on, from top to bottom, GAN, GAN with $\mathbf{G}$ prediction, and unrolled GAN.}
\label{fig:dcgan_toy}
\end{figure*}

We further investigate the performance of GAN training algorithms on data sampled from a mixture of a large number of Gaussians. We use $100$ Gaussian modes which are equally spaced around a circle of radius $24$ centered at $(0,0)$. We retain the same experimental settings as described above and train GAN with two different input batch sizes, a small $(64)$ and a large batch $(6144)$ setting. The Figure \eqref{fig:dcgan_toy2} plots the generated sample output of GAN trained (for fixed number of epochs) under the above setting using different training algorithms. Note that for small batch size input, the default as well as the unrolled training for GAN fails to construct actual modes of the underlying dataset. We hypothesize that this is perhaps due to the batch size, $64$, being smaller than the number of input modes $(100)$. When trained with small batch the GAN observe samples only from few input modes at every iteration. This causes instability leading to the failure of training algorithms. This scenario is pertinent to real datasets wherein the number of modes are relatively high compared to input batch size.
\begin{figure*}[!h]
\centering
\includegraphics[width=0.49\linewidth]{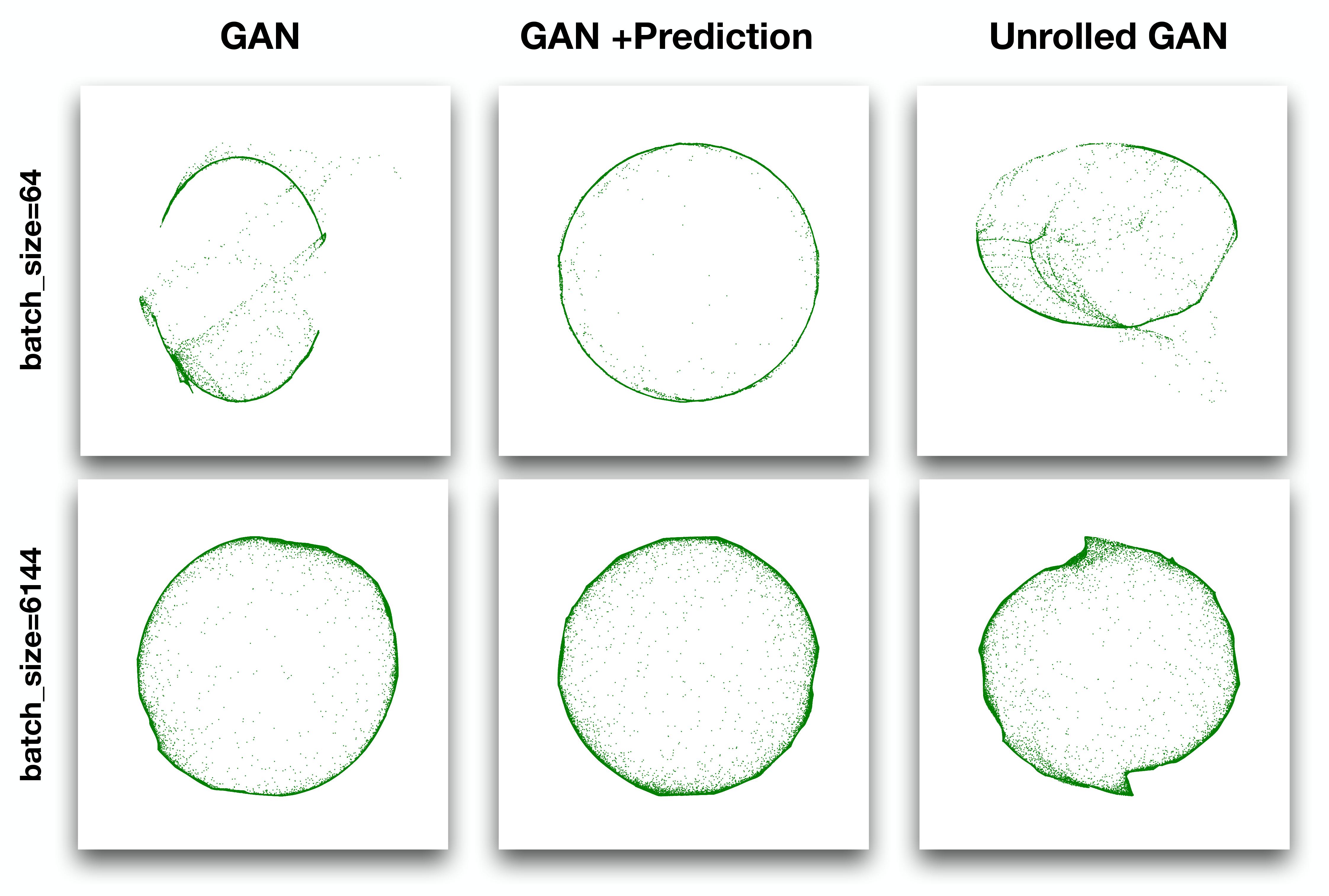}
\caption{Comparison of GAN training algorithms on toy dataset of mixture of $100$ Gaussians. Results on, from top to bottom, batch size of $64$ and $6144$.}
\label{fig:dcgan_toy2}
\end{figure*}

{\bf DCGAN Architecture details}:
For our experiments, we use publicly available code for DCGAN~\citep{radford2015unsupervised} and their implementation for Cifar-10 dataset. The random noise vector is of $100$ dimensional and output of the generator network is a $64$x$64$ image of $3$ channels.

{\bf Additional DCGAN Results}:
\begin{figure*}[!thbp]
\centering
\subfloat[With $\mathbf{G}$  prediction]{
\includegraphics[width=.28\linewidth]{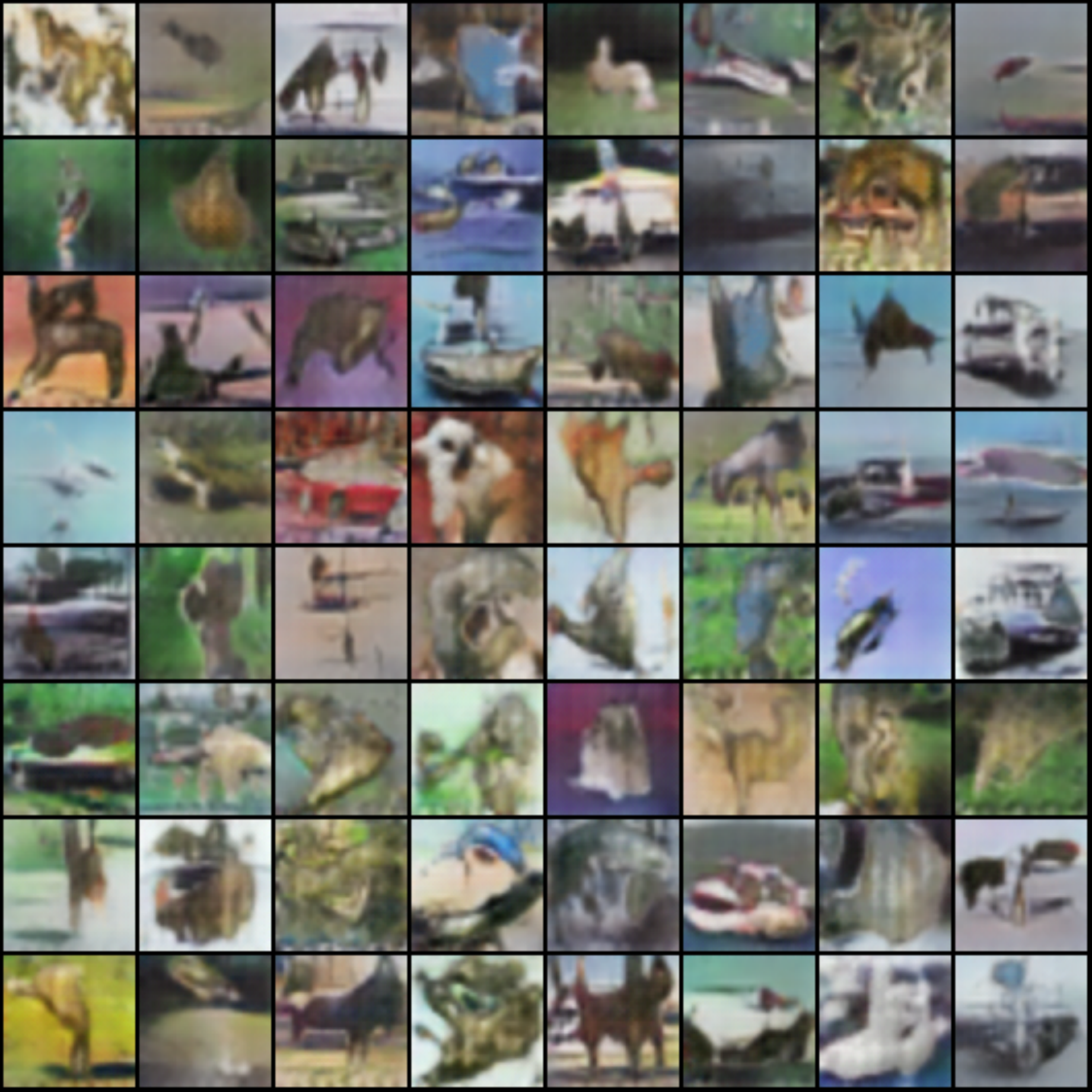}
\label{fig:dcgan_pdhg2}
}
\subfloat[DCGAN]{
\includegraphics[width=.28\linewidth]{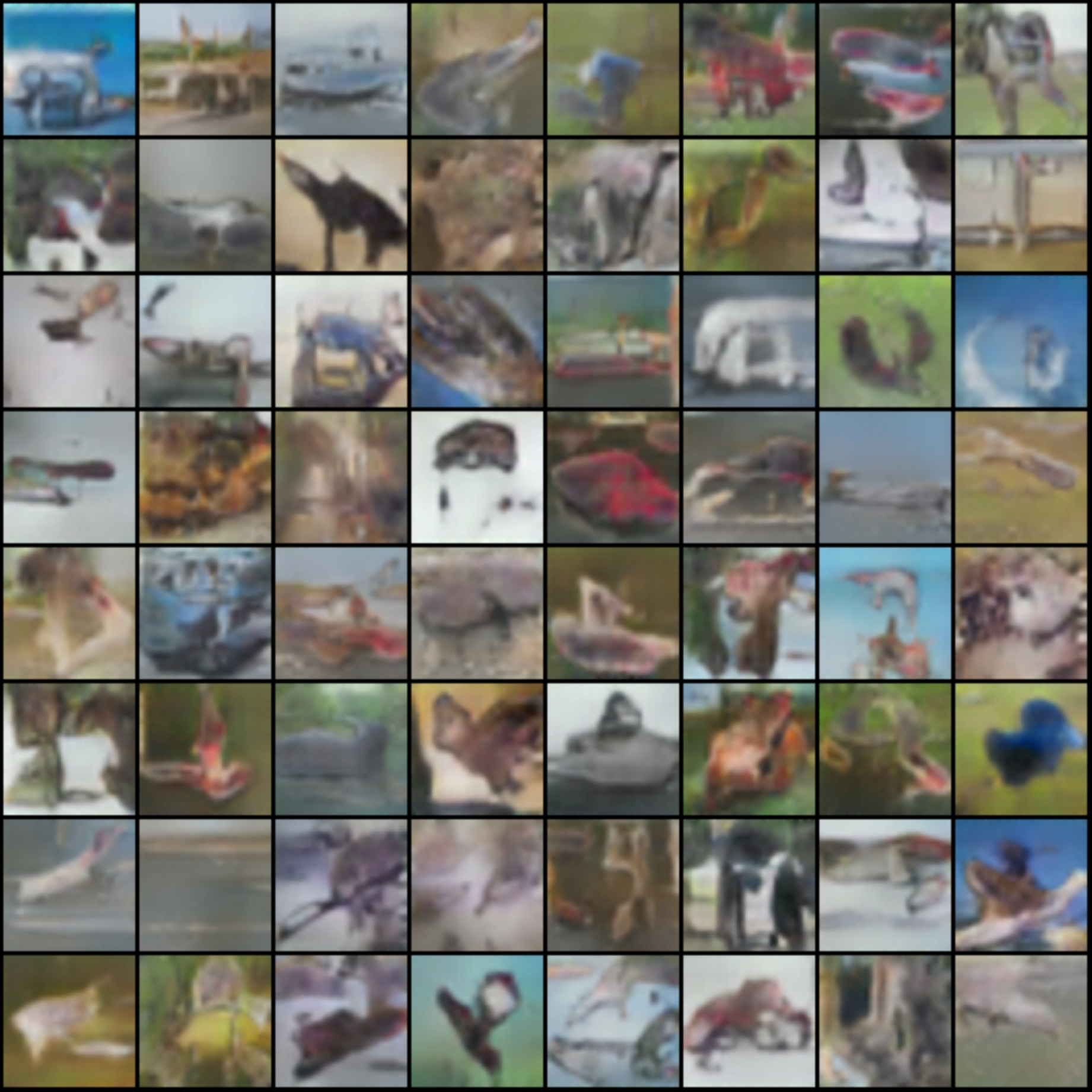}
\label{fig:dcgan_otb2}
}
\subfloat[Unrolled GAN]{
\includegraphics[width=.28\linewidth]{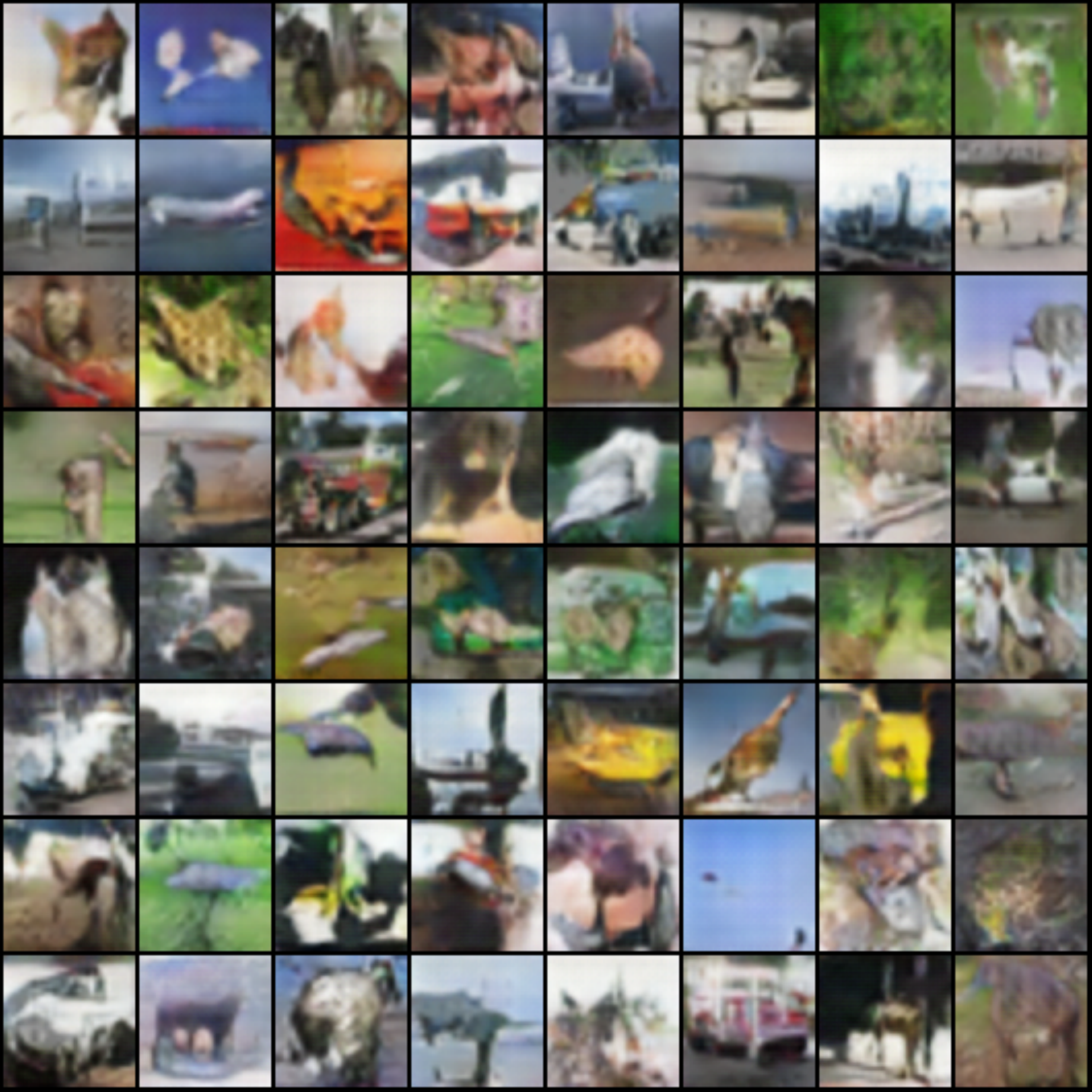}
\label{fig:dcgan_unroll2}
}
\\
\subfloat[With $\mathbf{G}$  prediction]{
\includegraphics[width=.28\linewidth]{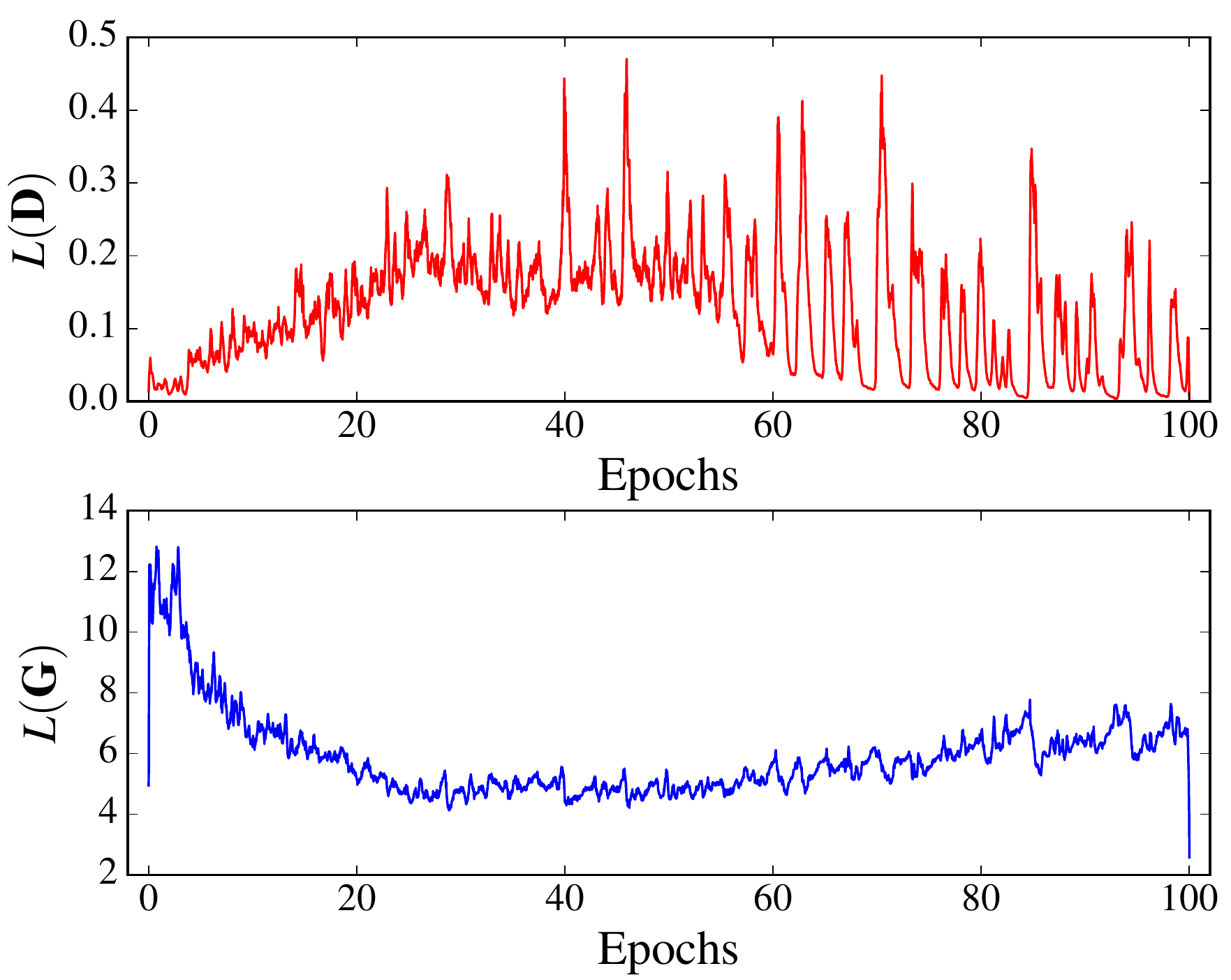}
\label{fig:dcgan_loss_pdhg2}
}
\subfloat[DCGAN]{
\includegraphics[width=.28\linewidth]{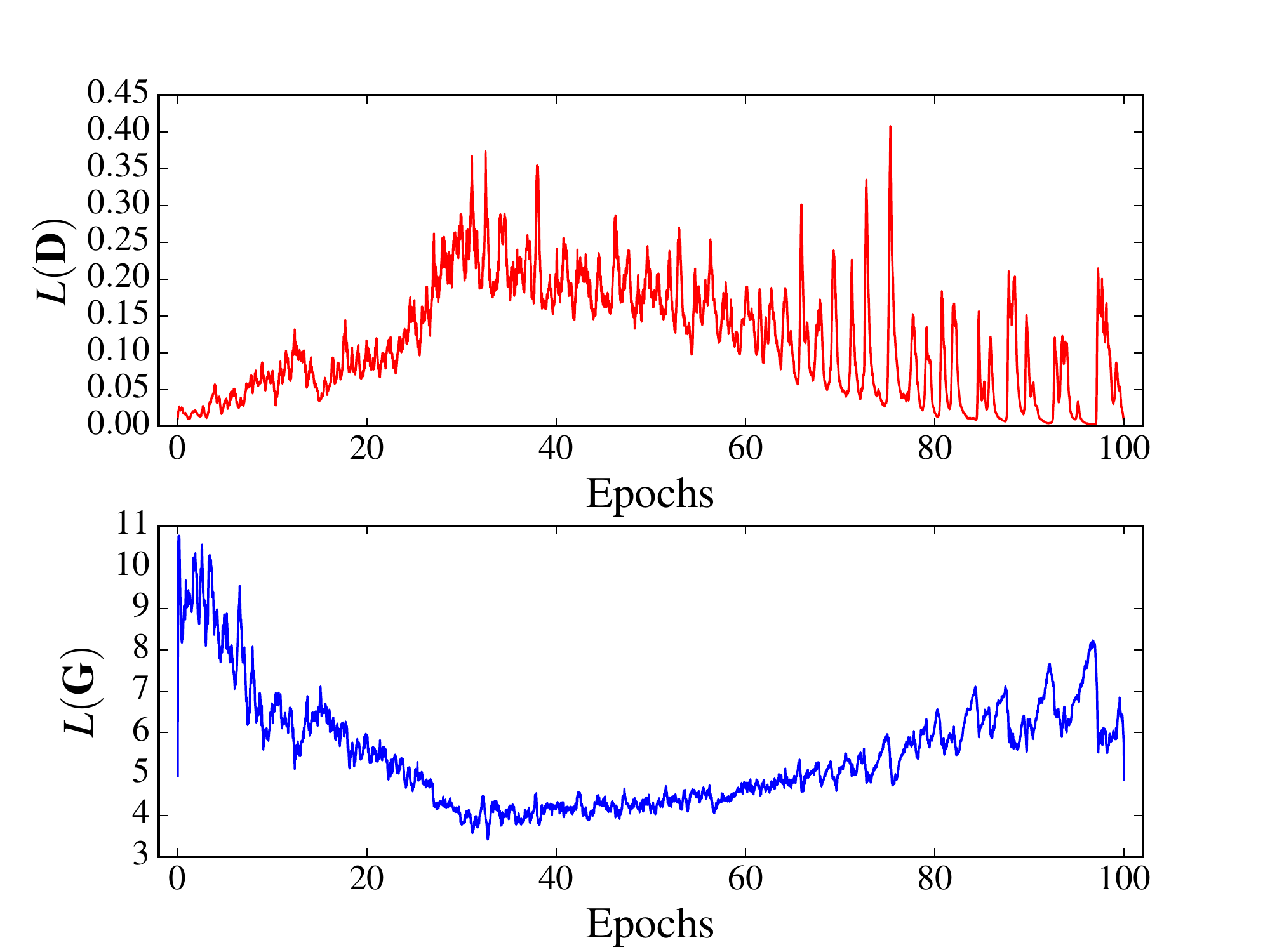}
\label{fig:dcgan_loss_otb2}
}
\subfloat[Unrolled GAN]{
\includegraphics[width=.28\linewidth]{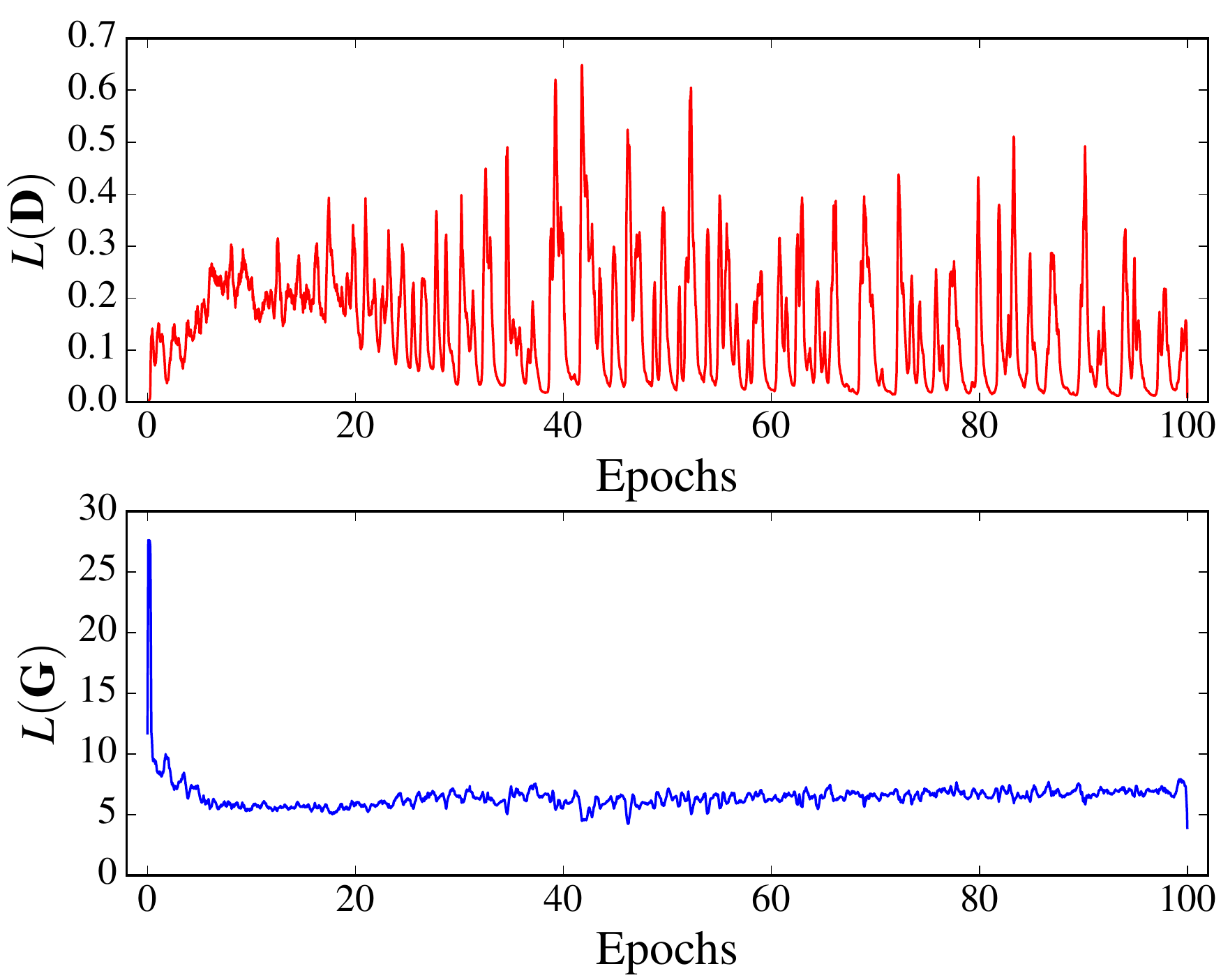}
\label{fig:dcgan_loss_unroll2}
}
\caption{Comparison of GAN training algorithms for DCGAN architecture on Cifar-10 image datasets. Using higher momentum, $lr = 0.0002, \beta_{1}=0.9$.}
\label{fig:dcgan2}
\end{figure*}

\begin{figure*}[!thbp]
\centering
\subfloat[With $\mathbf{G}$  prediction]{
\includegraphics[width=.28\linewidth]{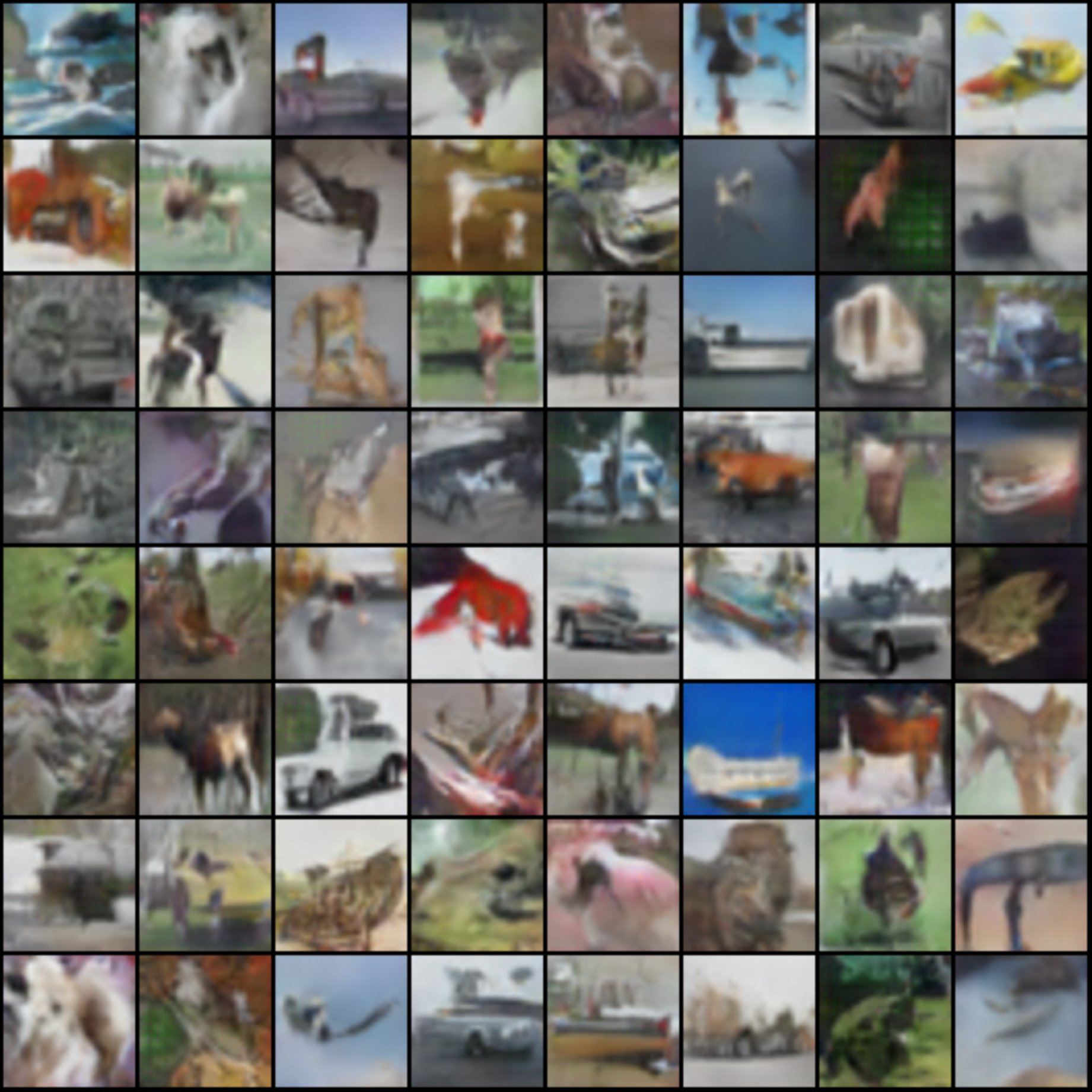}
\label{fig:dcgan_pdhg_lr_0.0004}
}
\subfloat[DCGAN]{
\includegraphics[width=.28\linewidth]{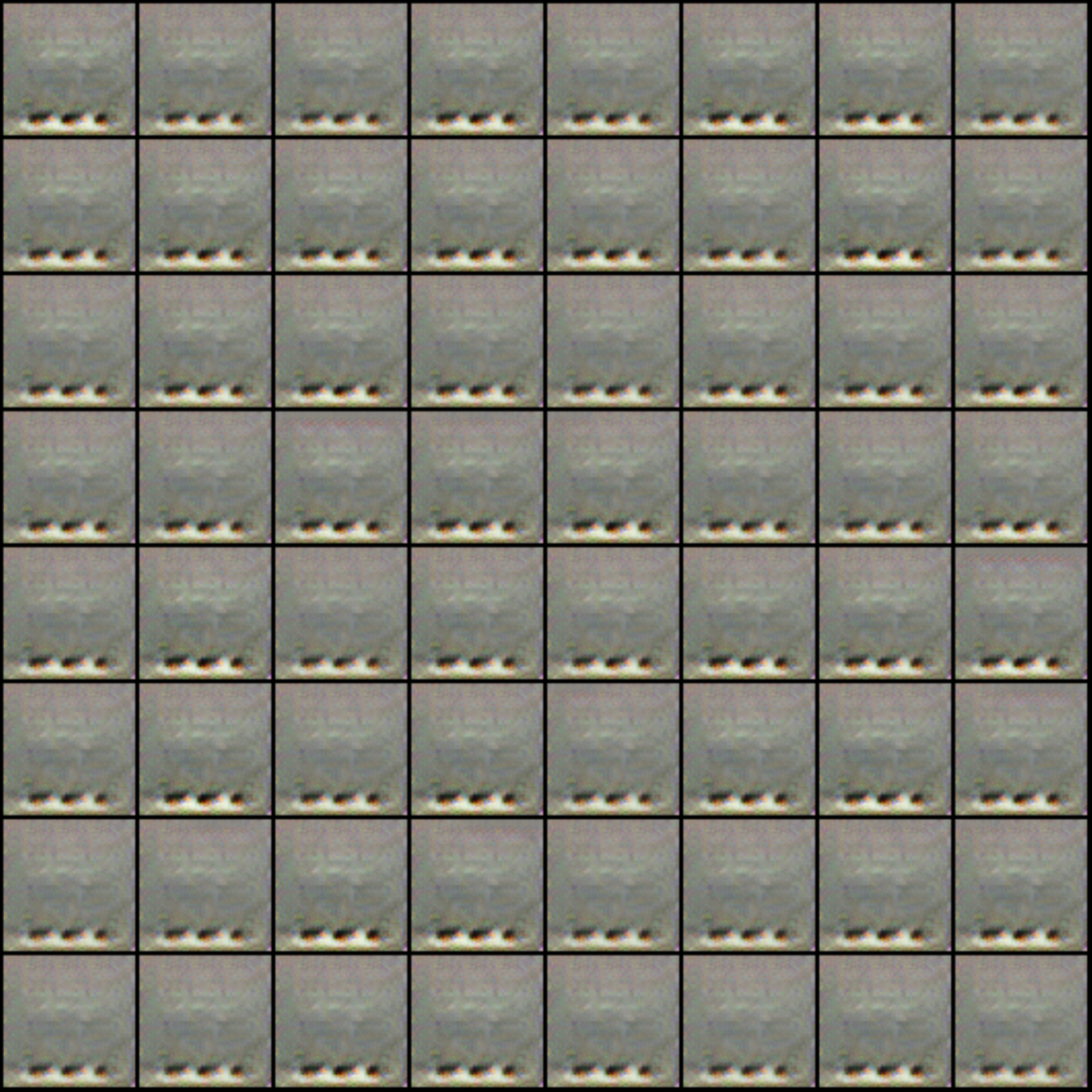}
\label{fig:dcgan_otb_lr_0.0004}
}
\subfloat[Unrolled GAN]{
\includegraphics[width=.28\linewidth]{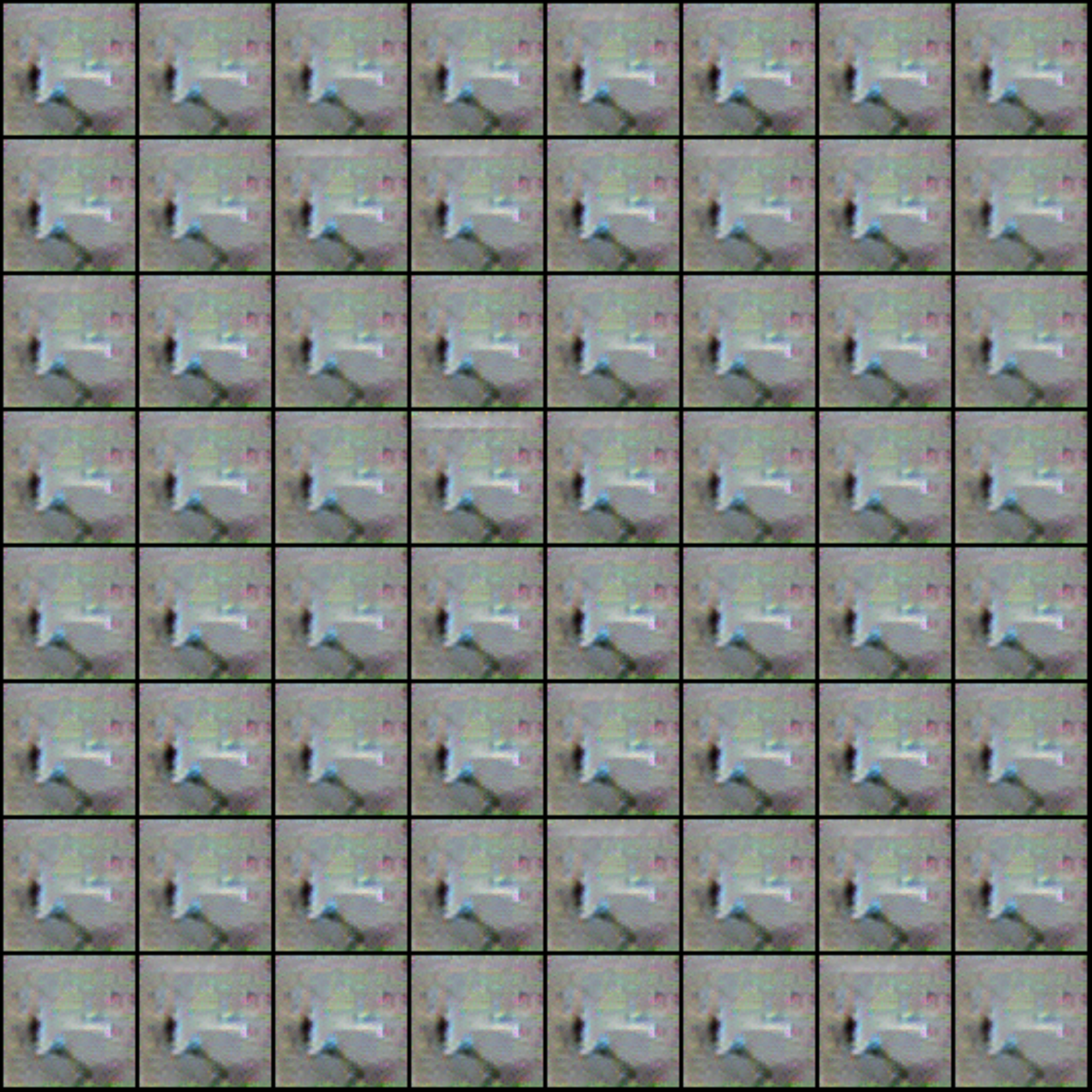}
\label{fig:dcgan_unroll_lr_0.0004}
}
\\
\subfloat[With $\mathbf{G}$  prediction]{
\includegraphics[width=.28\linewidth]{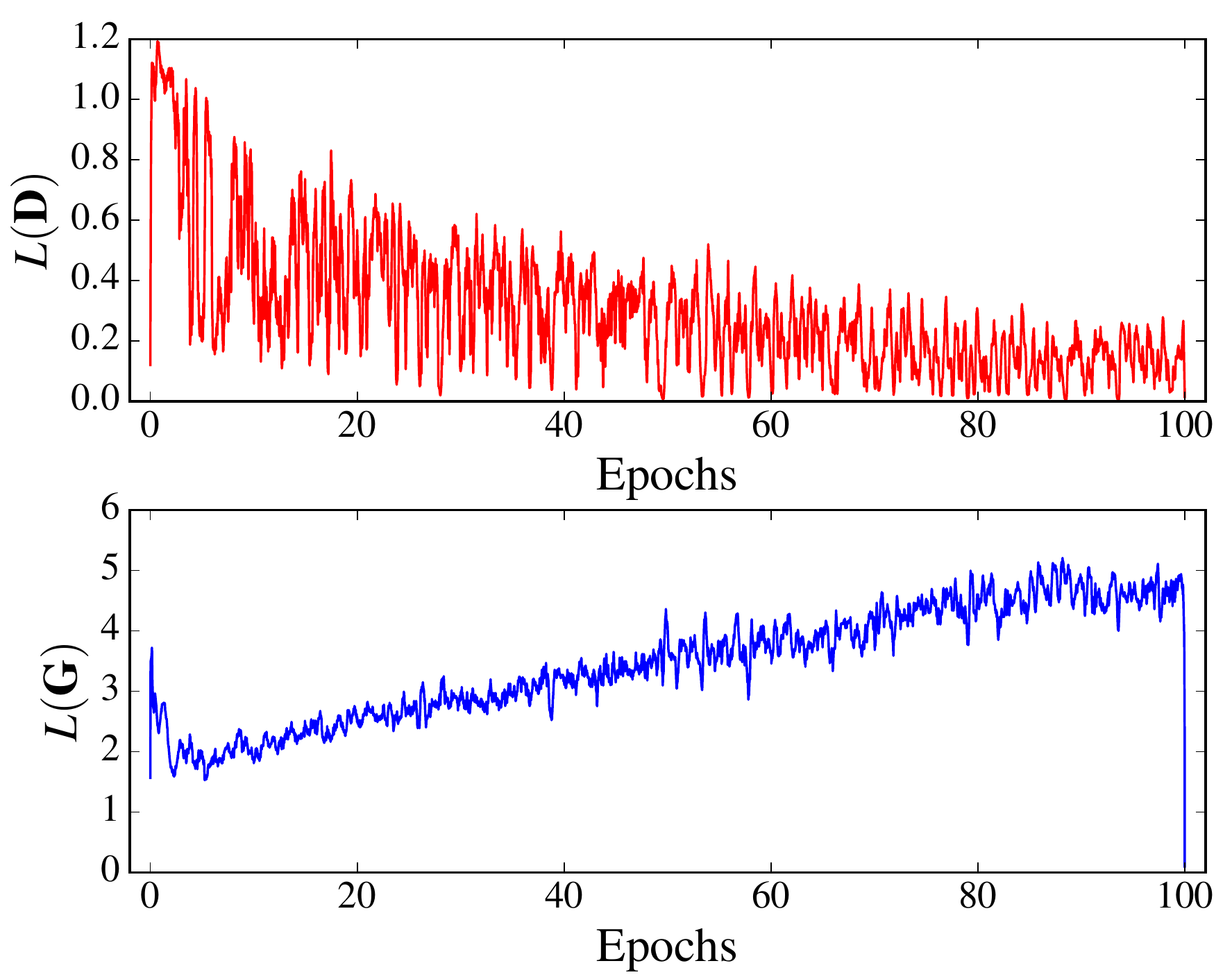}
\label{fig:dcgan_loss_pdhg_lr_0.0004}
}
\subfloat[DCGAN]{
\includegraphics[width=.28\linewidth]{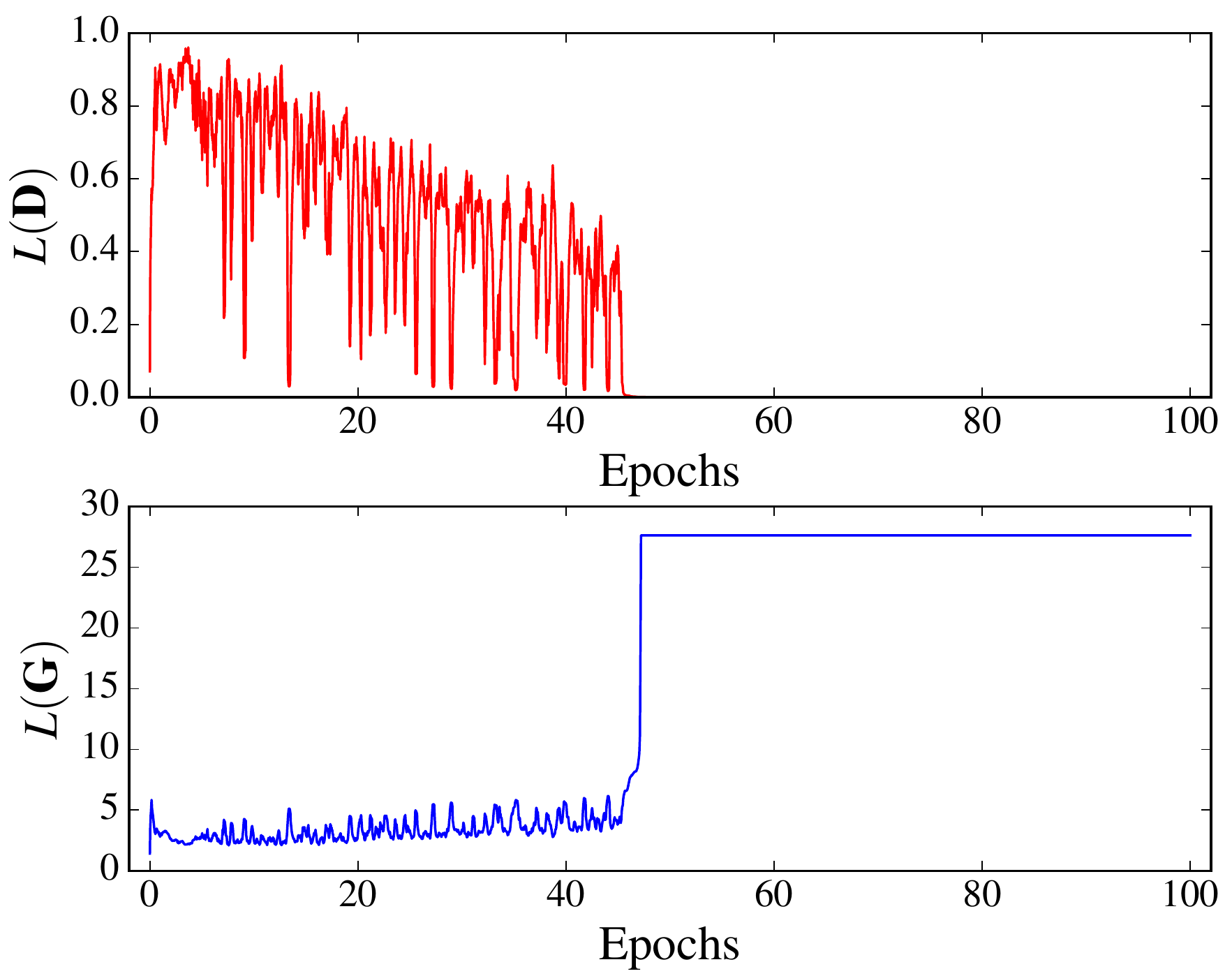}
\label{fig:dcgan_loss_otb_lr_0.0004}
}
\subfloat[Unrolled GAN]{
\includegraphics[width=.28\linewidth]{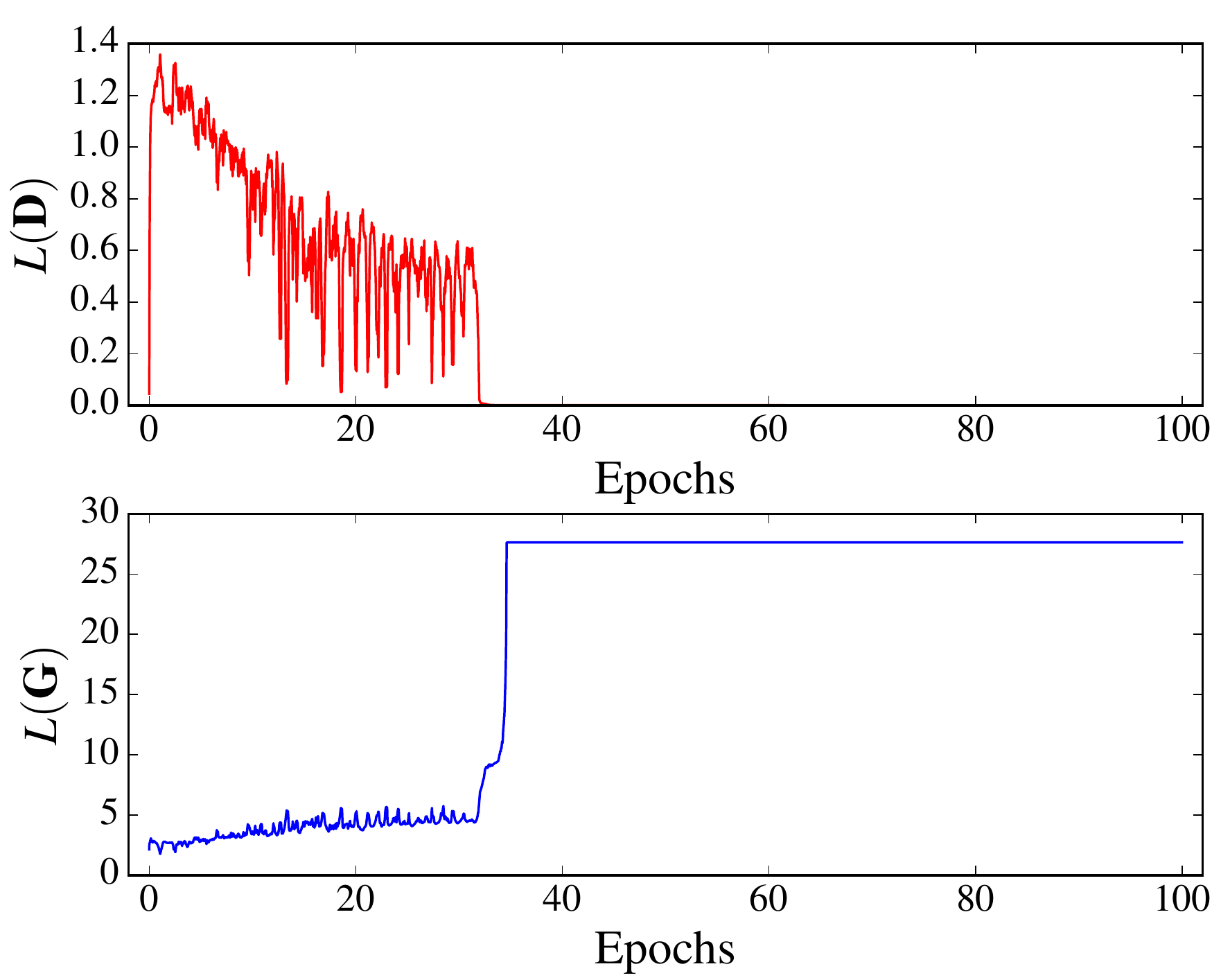}
\label{fig:dcgan_loss_unroll_lr_0.0004}
}
\caption{Comparison of GAN training algorithms for DCGAN architecture on Cifar-10 image datasets. $lr = 0.0004, \beta_{1}=0.5$.}
\label{fig:dcgan_lr_0.0004}
\end{figure*}

\begin{figure*}[!thbp]
\centering
\subfloat[With $\mathbf{G}$  prediction]{
\includegraphics[width=.30\linewidth]{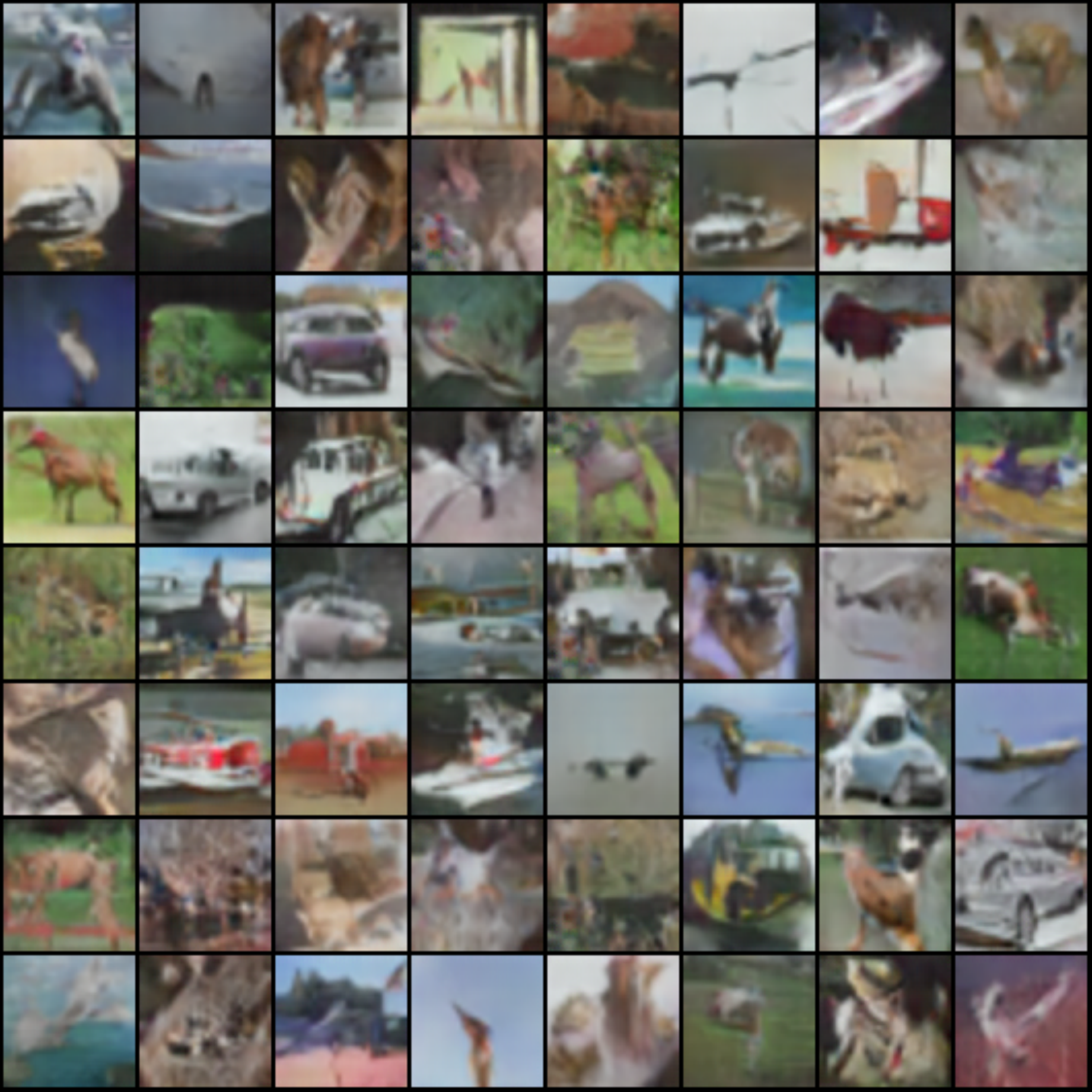}
\label{fig:dcgan_pdhg_lr_0.0006}
}
\subfloat[DCGAN]{
\includegraphics[width=.30\linewidth]{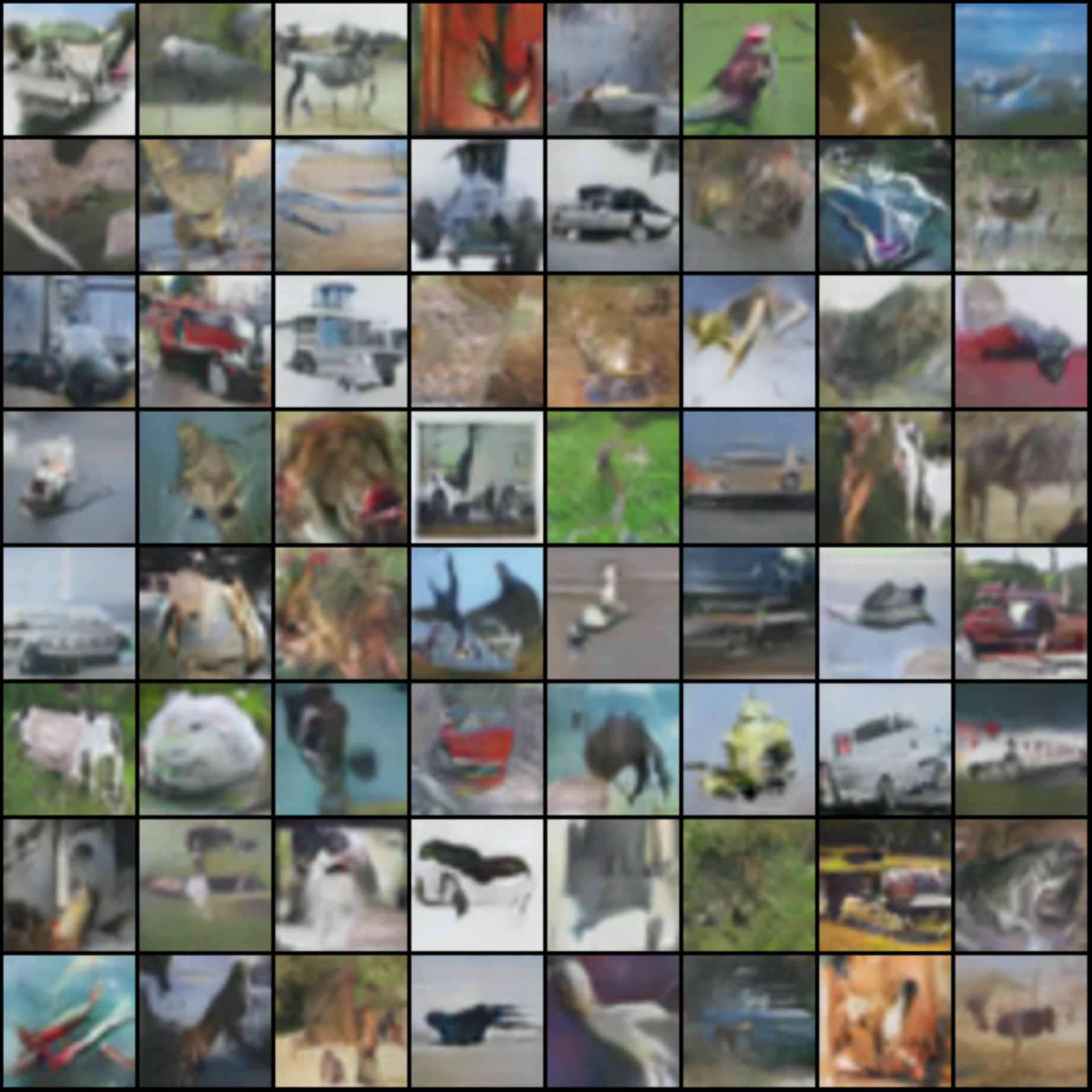}
\label{fig:dcgan_otb_lr_0.0006}
}
\subfloat[Unrolled GAN]{
\includegraphics[width=.30\linewidth]{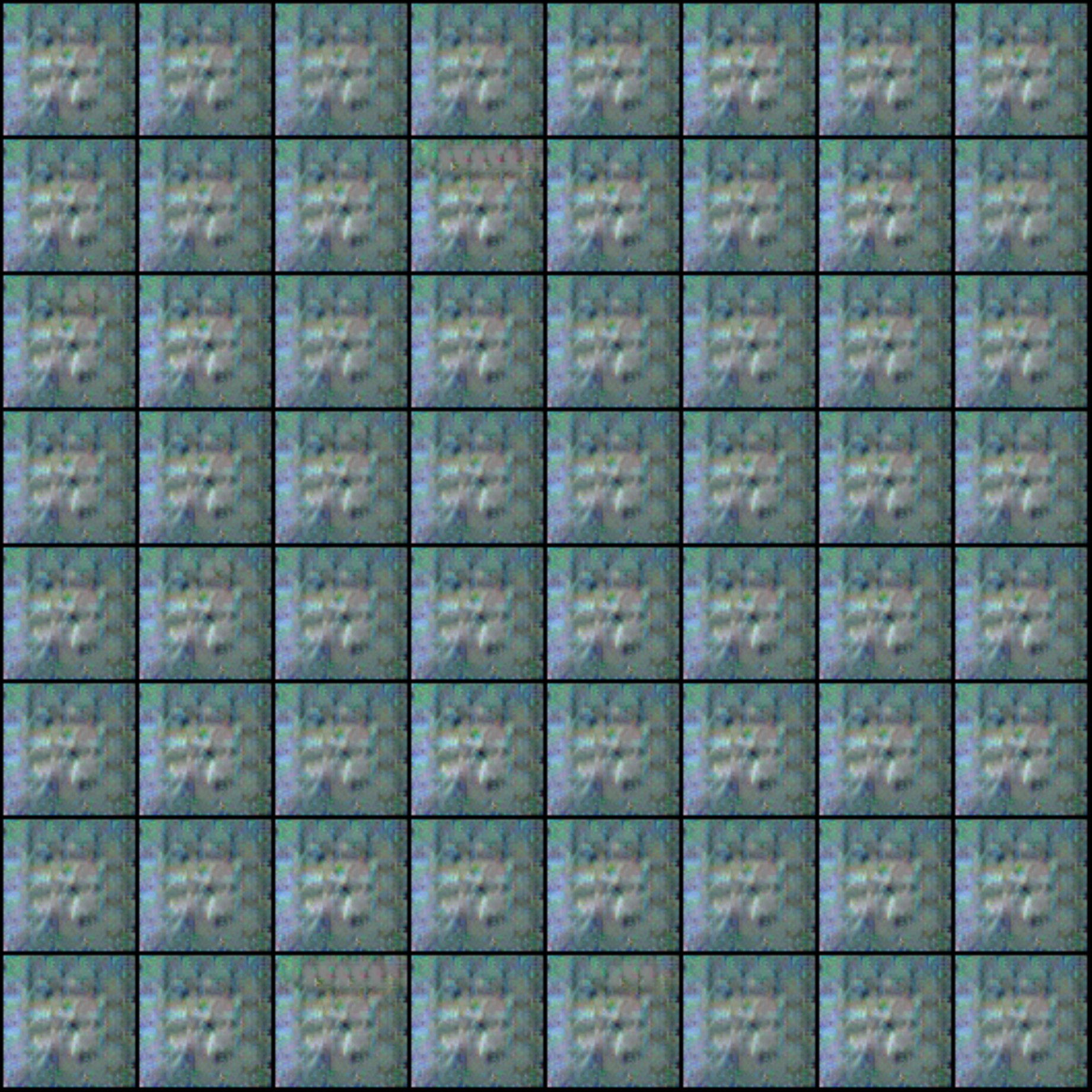}
\label{fig:dcgan_unroll_lr_0.0006}
}
\\
\subfloat[With $\mathbf{G}$  prediction]{
\includegraphics[width=.30\linewidth]{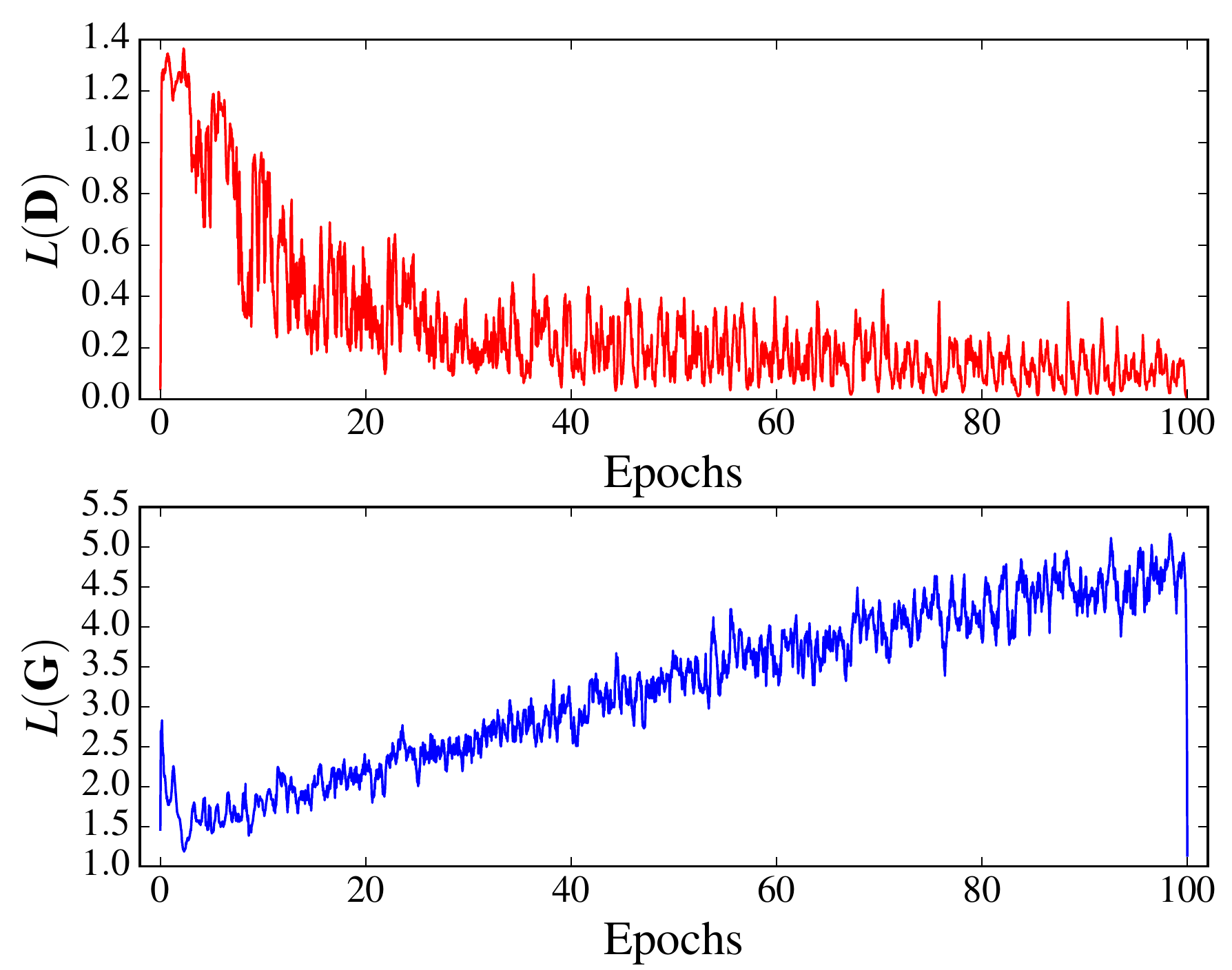}
\label{fig:dcgan_loss_pdhg_lr_0.0006}
}
\subfloat[DCGAN]{
\includegraphics[width=.30\linewidth]{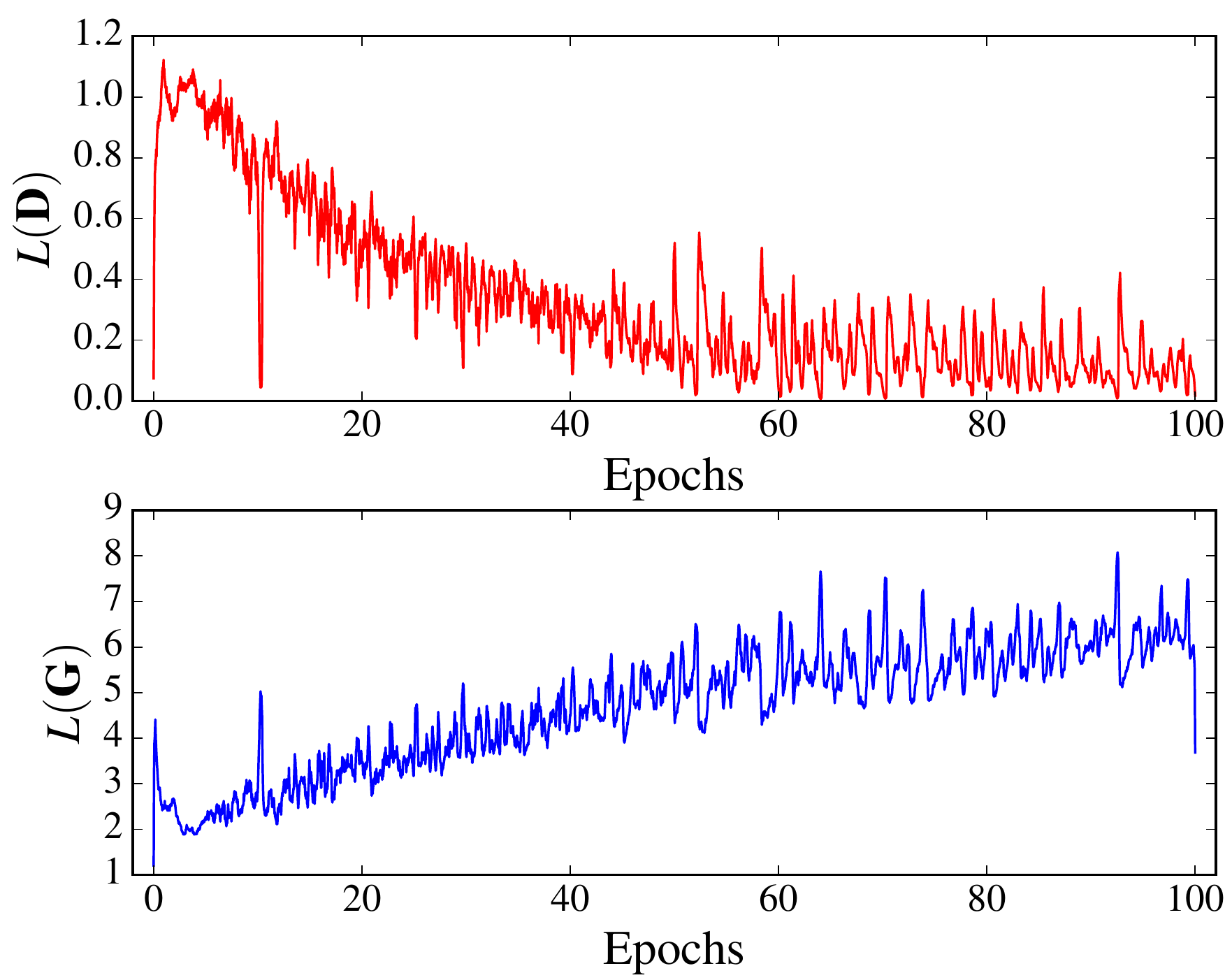}
\label{fig:dcgan_loss_otb_lr_0.0006}
}
\subfloat[Unrolled GAN]{
\includegraphics[width=.30\linewidth]{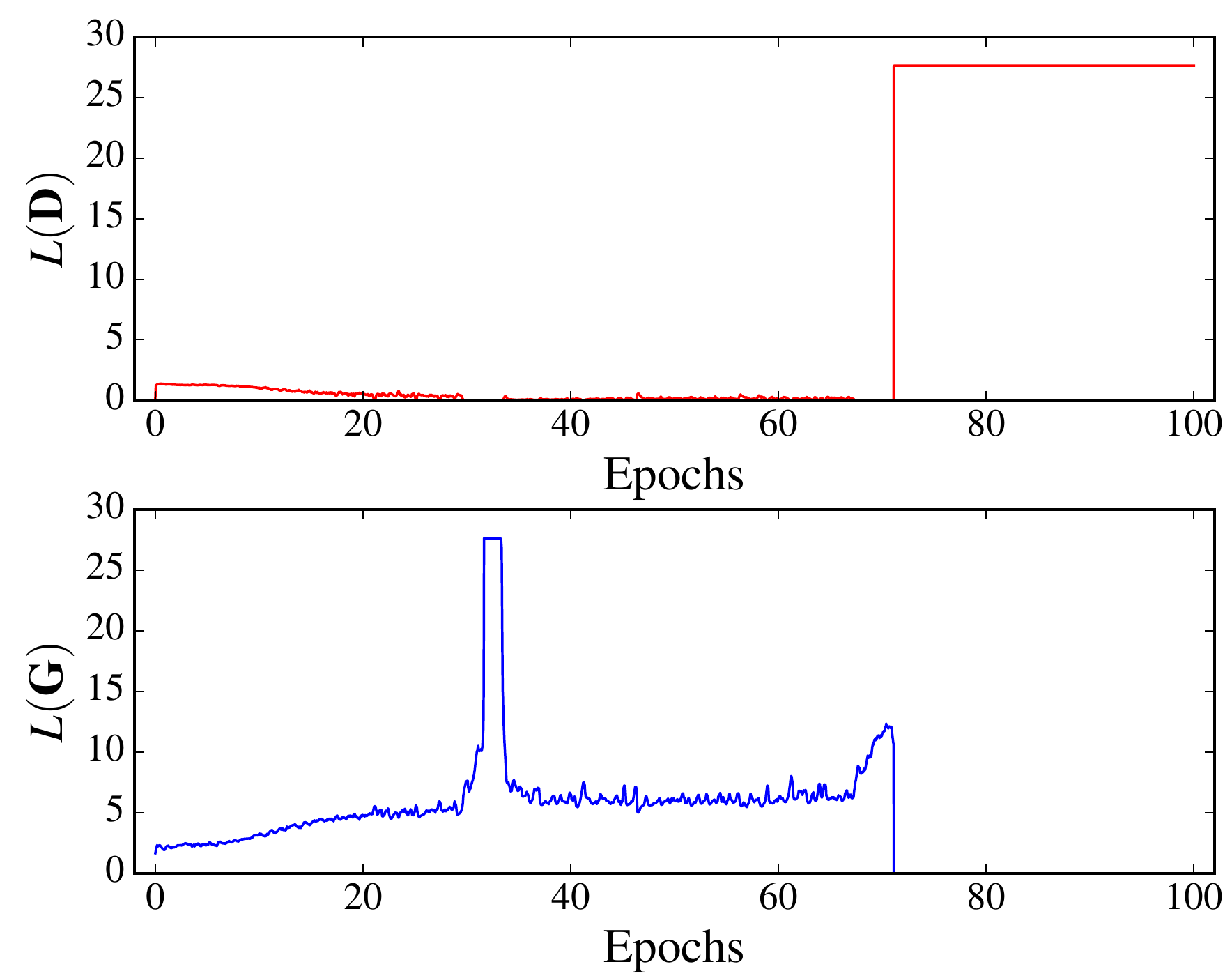}
\label{fig:dcgan_loss_unroll_lr_0.0006}
}
\caption{Comparison of GAN training algorithms for DCGAN architecture on Cifar-10 image datasets. $lr = 0.0006, \beta_{1}=0.5$.}
\label{fig:dcgan_lr_0.0006}
\end{figure*}

\begin{figure*}[!thbp]
\centering
\subfloat[With $\mathbf{G}$  prediction]{
\includegraphics[width=.30\linewidth]{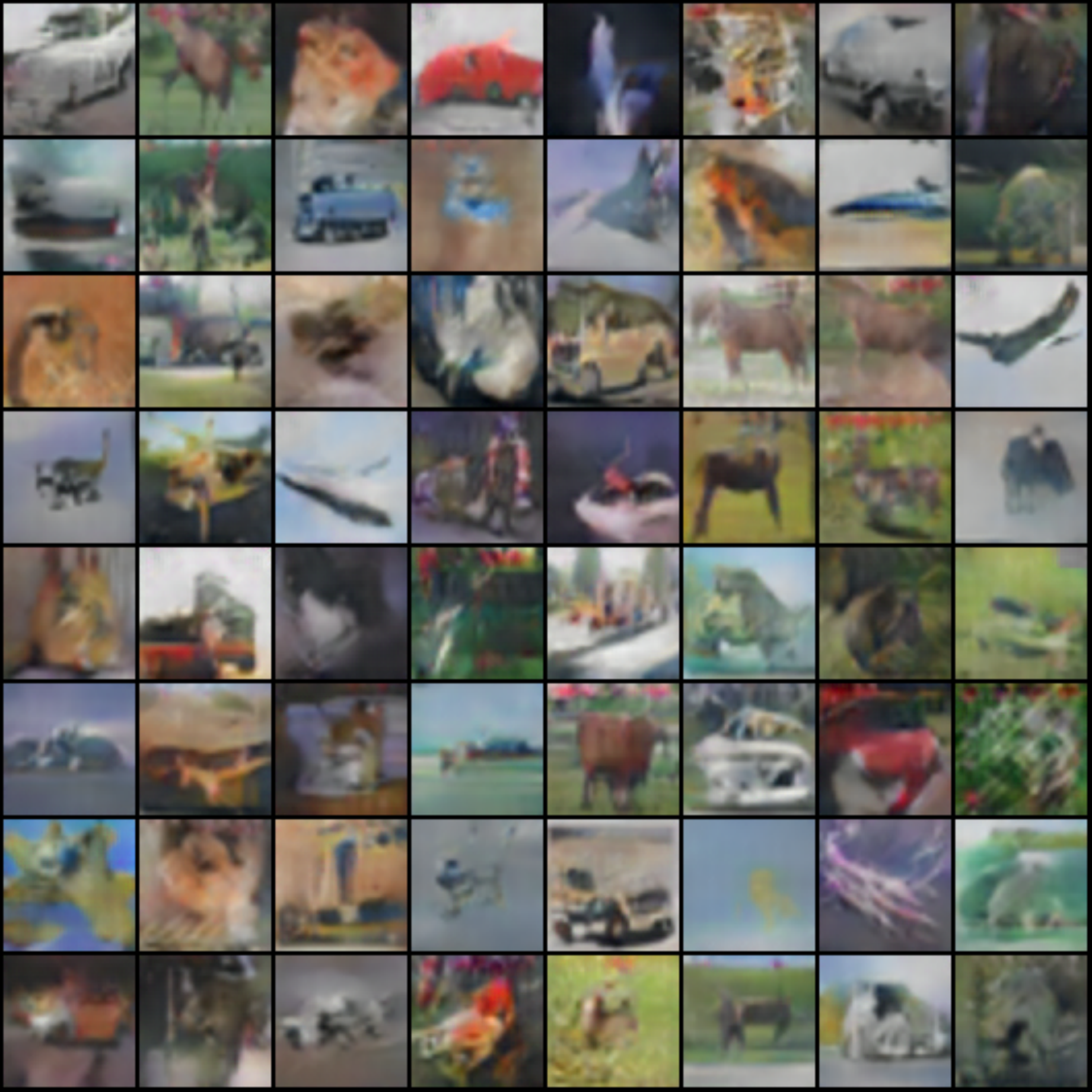}
\label{fig:dcgan_pdhg_lr_0.0008}
}
\subfloat[DCGAN]{
\includegraphics[width=.30\linewidth]{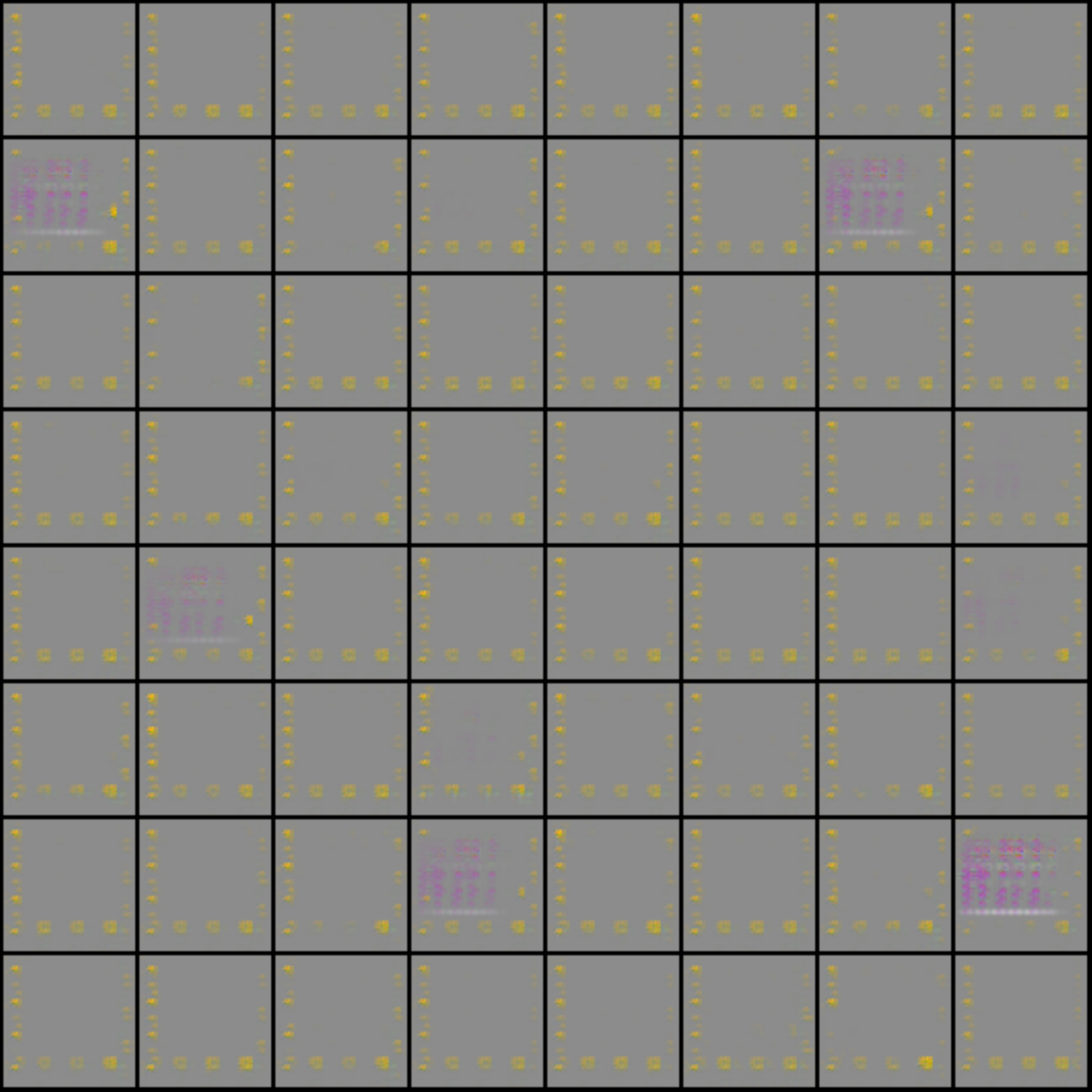}
\label{fig:dcgan_otb_lr_0.0008}
}
\subfloat[Unrolled GAN]{
\includegraphics[width=.30\linewidth]{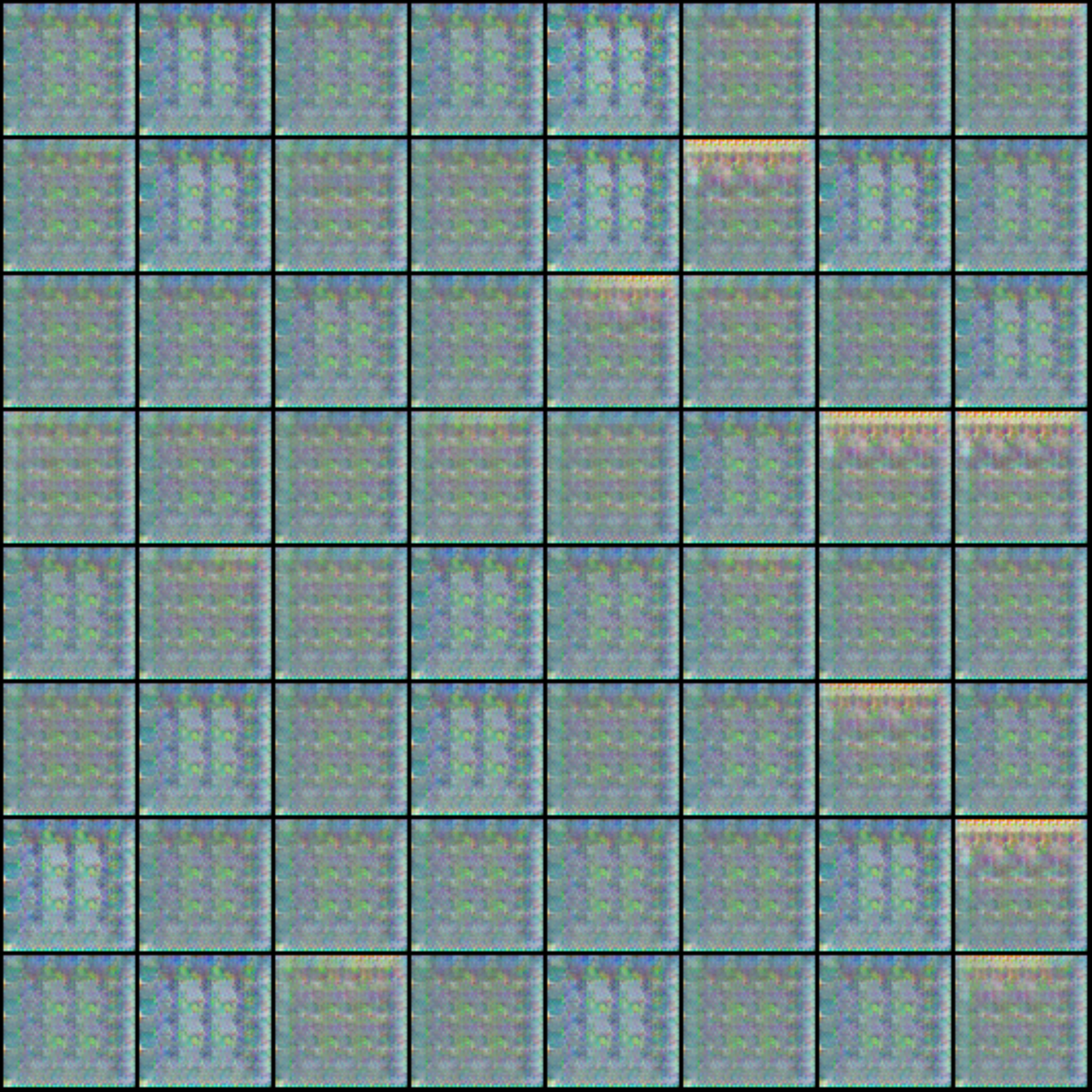}
\label{fig:dcgan_unroll_lr_0.0008}
}
\\
\subfloat[With $\mathbf{G}$  prediction]{
\includegraphics[width=.30\linewidth]{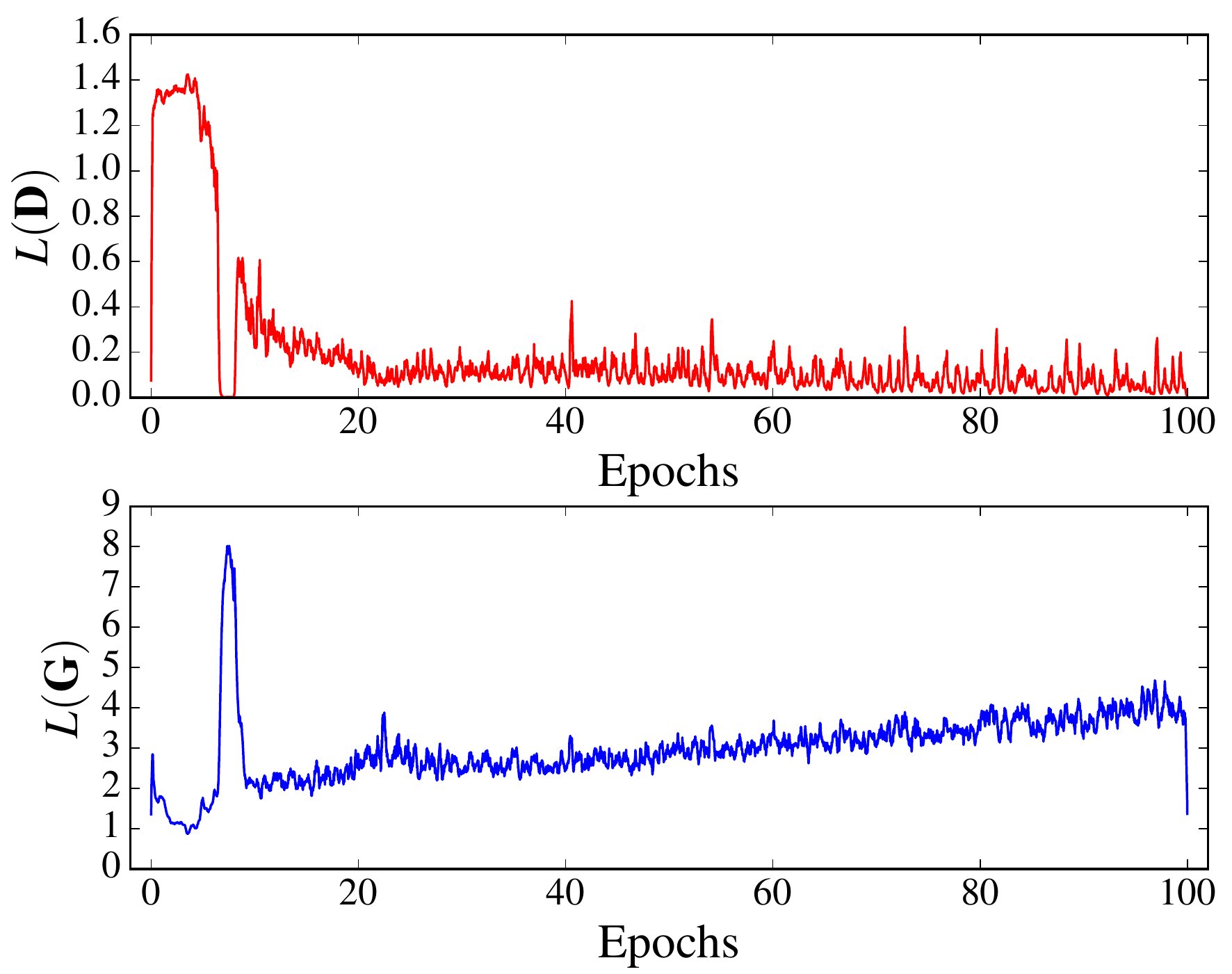}
\label{fig:dcgan_loss_pdhg_lr_0.0008}
}
\subfloat[DCGAN]{
\includegraphics[width=.30\linewidth]{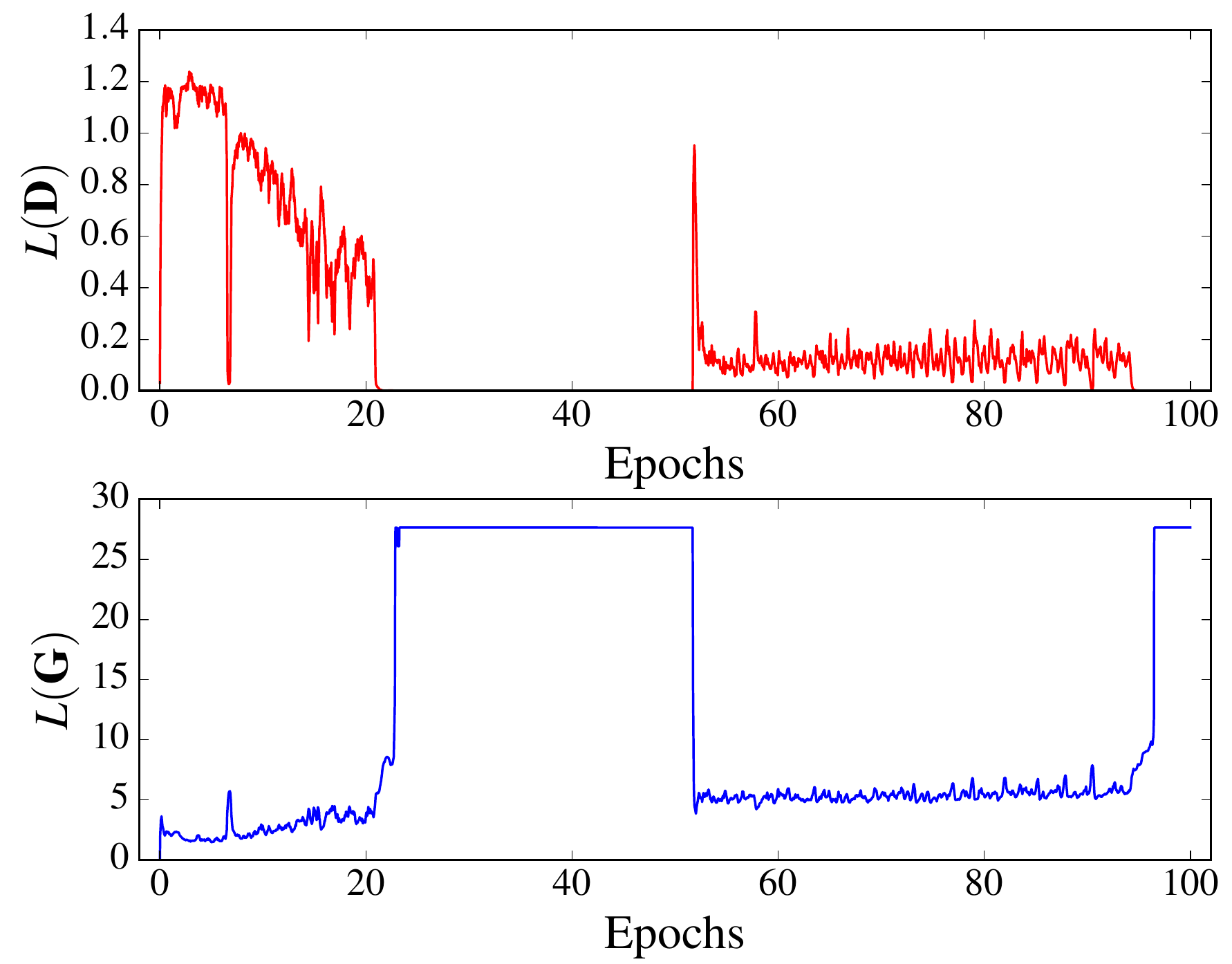}
\label{fig:dcgan_loss_otb_lr_0.0008}
}
\subfloat[Unrolled GAN]{
\includegraphics[width=.30\linewidth]{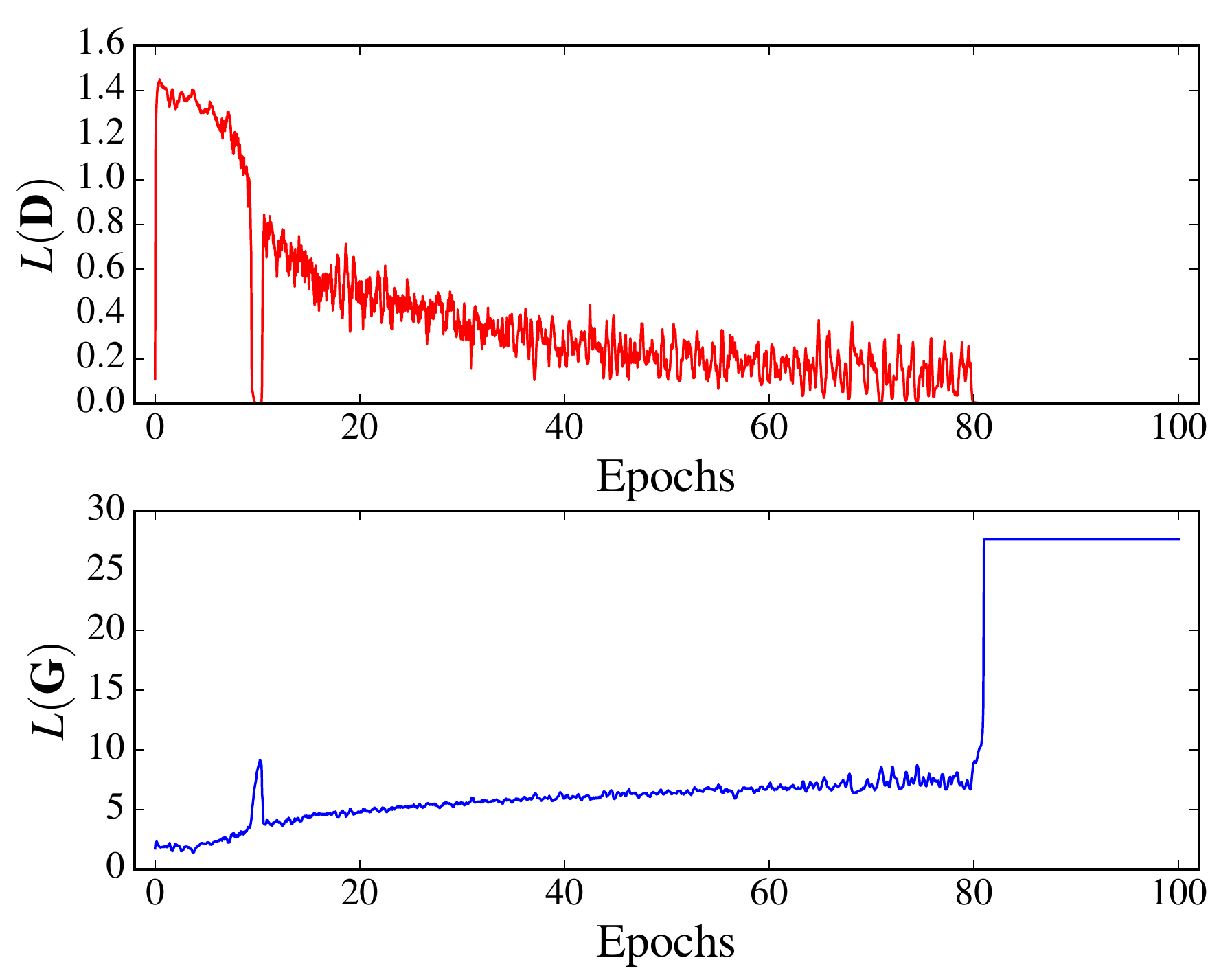}
\label{fig:dcgan_loss_unroll_lr_0.0008}
}
\caption{Comparison of GAN training algorithms for DCGAN architecture on Cifar-10 image datasets. $lr = 0.0008, \beta_{1}=0.5$.}
\label{fig:dcgan_lr_0.0008}
\end{figure*}

{\bf Experiments on Imagenet}:
In this section we demonstrate the advantage of prediction methods for generating higher resolution images of size 128 x 128. For this purpose, the state-of-the-art AC-GAN~\citep{odena2016conditional} architecture is considered and conditionally learned using images of all 1000 classes from Imagenet dataset. We have used the publicly available code for AC-GAN and all the parameter were set to it default as in~\citet{odena2016conditional}. The figure \ref{fig:acgan} plots the inception score measured at every training epoch of AC-GAN model with and without prediction. The score is averaged over five independent runs. From the figure, it is clear that even at higher resolution with large number of classes the prediction method is stable and aids in speeding up the training.
\begin{figure*}[!thbp]
\centering
\includegraphics[width=.60\linewidth]{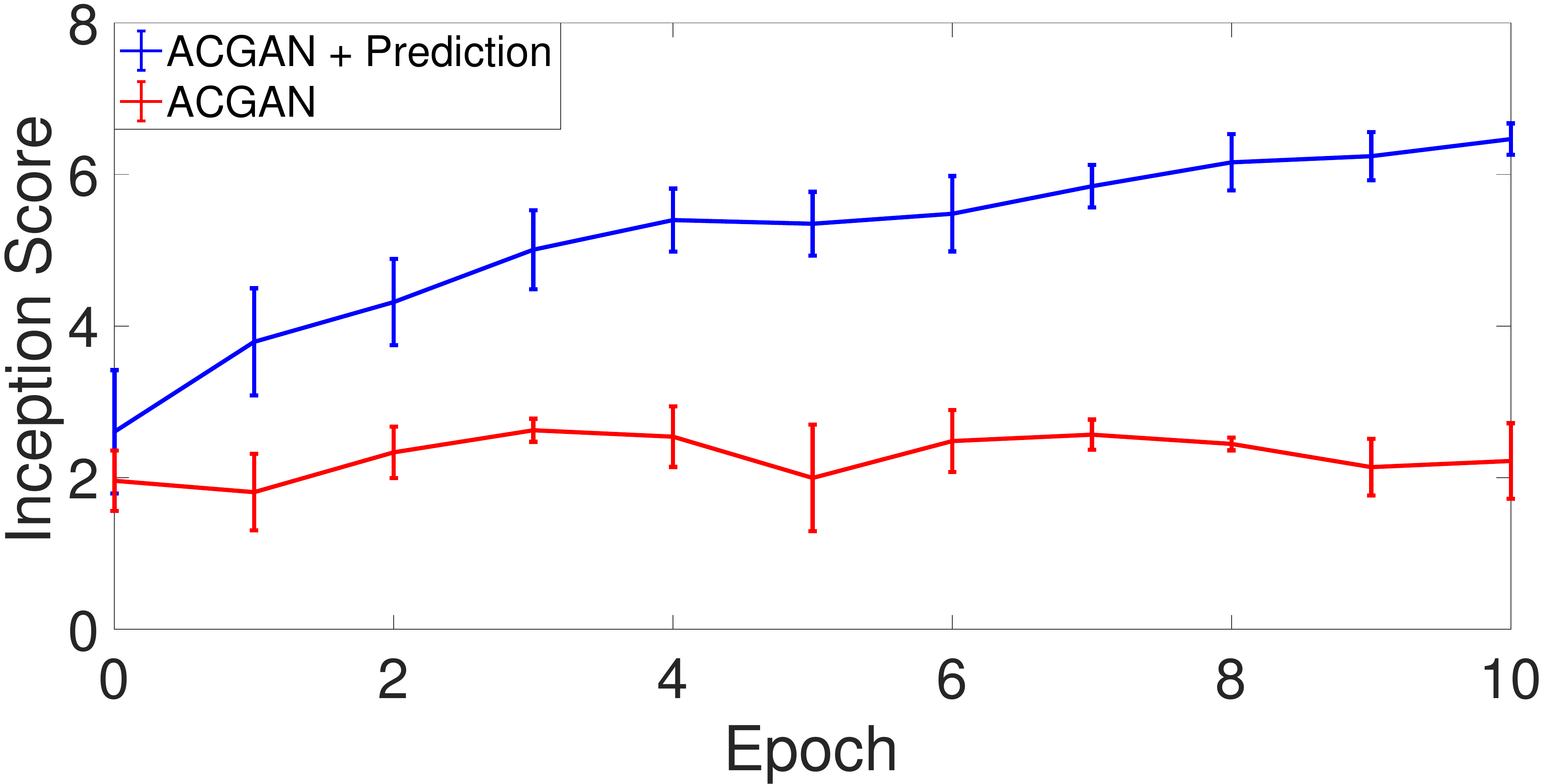}
\caption{Comparison of Inception scores on high resolution Imagenet datasets measured at each training epoch of ACGAN model with and without prediction.}
\label{fig:acgan}
\end{figure*}

\end{document}